\pgfplotsset{compat=1.17}
\newcommand{\diff}{\mathop{}\!\mathrm{d}}
\newcommand{\ind}[1]{\mathbf{1}_{#1}}
\newcommand{\prob}[1]{\Delta_{#1}}
\DeclareMathOperator{\E}{\mathbb{E}}
\DeclareMathOperator{\Pbb}{\mathbb{P}}
\DeclareMathOperator*{\argmin}{arg\,min}
\DeclareMathOperator{\Div}{div}
\DeclareMathOperator{\Span}{Span}
\DeclareMathOperator{\ima}{im}
\DeclareMathOperator{\op}{op}
\DeclareMathOperator{\sign}{sign}
\DeclareMathOperator{\spec}{spec}
\DeclareMathOperator*{\supp}{supp}
\DeclareMathOperator{\trace}{Tr}
\newcommand{\C}{\mathbb{C}}
\newcommand{\N}{\mathbb{N}}
\newcommand{\R}{\mathbb{R}}
\newcommand{\Z}{\mathbb{Z}}
\renewcommand{\brace}[1]{\left\{ #1 \right\}}
\newcommand{\bracket}[1]{\left[ #1 \right]}
\newcommand{\paren}[1]{\left( #1 \right)}
\newcommand{\midvert}{\,\middle\vert\,}
\newcommand{\abs}[1]{\left| #1 \right|}
\newcommand{\norm}[1]{\left\| #1 \right\|}
\newcommand{\scap}[2]{\left\langle #1, #2 \right\rangle}
\newcommand{\Sfrak}{\mathfrak{S}}
\newcommand{\X}{\mathcal{X}}
\newcommand{\Y}{\mathcal{Y}}
\renewcommand{\epsilon}{\varepsilon}
\renewcommand{\phi}{\varphi}
\theoremstyle{plain}
\newtheorem{theorem}{Theorem}
\newtheorem{lemma}{Lemma}
\newtheorem{proposition}[lemma]{Proposition}
\newtheorem{assumption}{Assumption}
\newtheorem{remark}[lemma]{Remark}
\newtheorem{example}{Example}
\newcommand{\myfunction}[5]{
\begin{array}{cccc}
	#1 : & #2 & \rightarrow & #3 \\
	 & #4 & \rightarrow & #5
\end{array}
}
\title{%
  Overcoming the curse of dimensionality with
  Laplacian regularization in semi-supervised learning}
\author{%
  Vivien Cabannes\\
  ENS -- INRIA -- PSL\\
  Paris, France\\
  \texttt{vivien.cabannes@gmail.com} \\
  \And
  Loucas Pillaud-Vivien \\
  EPFL \\
  Lausanne, Switzerland \\
  \And
  Francis Bach \\
  ENS -- INRIA -- PSL \\
  Paris, France \\
  \And
  Alessandro Rudi \\
  ENS -- INRIA -- PSL\\
  Paris, France \\
}
\begin{document}

\maketitle

\begin{abstract}
  As annotations of data can be scarce in large-scale practical problems,
  leveraging unlabelled examples is one of the most important aspects of machine
  learning. This is the aim of semi-supervised learning. To benefit from the
  access to unlabelled data, it is natural to diffuse smoothly knowledge of
  labelled data to unlabelled one. This induces to the use of Laplacian
  regularization. Yet, current implementations of Laplacian regularization
  suffer from several drawbacks, notably the well-known curse of dimensionality.
  In this paper, we provide a statistical analysis to overcome those issues, and
  unveil a large body of spectral filtering methods that exhibit desirable
  behaviors. They are implemented through (reproducing) kernel methods, for
  which we provide realistic computational guidelines in order to make our
  method usable with large amounts of data.
\end{abstract}

In the last decade, machine learning has been able to tackle amazingly complex
tasks, which was mainly allowed by computational power to train large learning
models on large annotated datasets. 
For instance, ImageNet is made of tens of millions of images, which have all been manually annotated by humans \citep{ImageNet}.
The greediness in data annotation of such a current learning paradigm is a major limitation. 
In particular, when annotation of data demands in-depth expertise, 
relying on techniques that require zillions of labelled data is not viable.
This motivates several research streams to overcome the need for annotations, such as self-supervised learning for images or natural language processing \citep{Devlin2019}.
Aiming for generality, semi-supervised learning is the most classical one, assuming access to a vast
amount of input data, but among which only a scarce percentage is labelled. 
To leverage the presence of unlabelled data, most semi-supervised techniques
assume a form of low-density separation hypothesis, as detailed in the recent
review of \citet{Engelen2020}, and illustrated by state-of the-art models \citep{Berthelot2019,Verma2019}. 
This hypothesis assumes that the function to learn from the data varies smoothly
in highly populated regions of the input space, but might vary more strongly in
scarcely populated areas, or that the decision frontiers between classes lie in
regions with low-density.
In such a setting, it is natural to enforce constraints on the variations of the function to learn.
While semi-supervised learning is an important learning framework, it has not
provided as much exciting realizations as one could have expected. 
This might be related to the fact that it is classically approached through
graph-based Laplacian, a technique that does not scale well with the dimension
of the input space~\citep{Bengio2006}.

\paragraph{Paper organization.} In Section \ref{sec:laplacian}, we motivate
Laplacian regularization, and recall drawbacks of naive implementations. 
These limitations are overcome in Section \ref{sec:kernel} 
where we expose a theoretically principled path to derive well-behaved algorithms. 
More precisely, we unveil a vast class of estimates based on spectral filtering.
We turn to implementation in Section~\ref{sec:implementation} where we provide
realistic guidelines to ensure scalability of the proposed algorithms. 
Statistical properties of our estimators are stated in Section \ref{sec:statistics}.

\paragraph{Contributions.} They are two folds. 
({\em i}) Statistically, we explain that Laplacian
regularization can be properly leveraged based on functional space
considerations, and that those considerations can be turned into concrete
implementations thanks to kernel methods.
As a result, we provide consistent estimators that exhibit fast
convergence rates under a low density separation hypothesis, and that, in
particular, do not suffer from the curse of dimensionality.
({\em ii}) Computationally, we avoid dealing with large matrices of derivatives by providing a low-rank approximation that allows to deal with $n^{\gamma}\log(n)\times n^{\gamma}\log(n)$
matrices, with a parameter $\gamma \in (0, 1]$ depending on the regularity of the
problem, instead of $n(d+1)\times n(d+1)$ matrices, thus cutting down to 
${\cal O}(\log(n)^2n^{1+2\gamma}d)$ the potential ${\cal O}(n^3d^3)$ training cost.

\paragraph{Related work.}
Interplays between graph theory and machine learning were proven successful in the
2000s \citep{Smola2003}. 
The seminal paper of \citet{Zhu2003} introduced
graph-Laplacian as a transductive method in the context of
semi-supervised learning. A smoothing variant was proposed by \citep{Zhou2003},
which is coherent with the fact that enforcing constraints on labelled points
leads to spikes \citep{Alaoui2016}. 
Interestingly, graph Laplacians do converge to diffusion operators linked with the
weighted Laplace Beltrami operator \citep{Hein2007,GarciaTrillos2019}.
However, these local diffusion methods are known to suffer from the
curse of dimensionality \citep{Bengio2006}. That is, local averaging methods are intuitive learning methods that have been used for
more than half a century \citep{Fix1951}. 
Yet, those methods do not scale well with the dimension of the input space \citep{Yang1999}. 
This is related with the fact that to cover $[0,1]^d$, we need $\epsilon^{-d}$
balls of radius $\epsilon$.
Interestingly, if the function to learn is $m$ times differentiable with smooth
partial derivatives, it is possible to leverage more information from function
evaluations and overcome the curse of dimensionality when $m \gtrsim d$.
This property is related to covering numbers ({\em a.k.a.} capacity) of Sobolev
spaces \citep{Kolmogorov1959} 
and is leveraged by (reproducing) kernel methods \citep{Steinwart2008,Caponnetto2007}.
The crux of this paper is apply this fact to Laplacian regularization techniques. 
Note that derivative with reproducing kernel methods in machine learning have already been considered in different settings by \citep{Zhou2008,Rosasco2013,Eriksson2018}.

\section{Laplacian regularization}
\label{sec:laplacian}

In this section, we introduce the notations and concepts related to the semi-supervised learning
regression problem, noting that most of our results extend to any convex loss beyond least-squares.
We motivate and describe Laplacian regularization that will allow us to leverage the
low-density separation hypothesis. 
We explain statistical drawbacks usually linked with Laplacian regularization, and discuss on how
to circumvent them.

In the following, we denote by $\X = \R^d$ the input space, $\Y=\R$ the 
output space, and by $\rho\in\prob{\X\times\Y}$ the joint distribution on $\X\times\Y$.
For simplicity, we assume that $\rho$ has compact support.
In the following, we denote by $\rho_\X$ the marginal of $\rho$ over $\X$, and
by $\rho\vert_x$ the conditional distribution of $Y$ given $X=x$.
As usual, for $p \in \N^*$, $L^p(\R^d)$ is the  space of functions $f$ such that $f^p$ is integrable. 
Moreover, we define usual Sobolev spaces: for $s \in \N$, $W^{s,p}(\R^d)$ stands for the space of 
functions whose weak derivatives of order $s$-th are in $L^p(\R^d)$. When $p=2$, they have a Hilbertian structure and we denote,
 $H^{s}(\R^d) = W^{s,2}(\R^d)$ these Hilbertian spaces.
Ideally, we would like to retrieve the mapping $g^*:\X\to\Y$ defined as
\begin{equation}
  \label{eq:least_square}
  g^* = \argmin_{g\in L^2(\rho_\X)}\E_{(X, Y)\sim \rho}\bracket{\norm{g(X) - Y}^2}
  = \argmin_{g\in L^2(\rho_\X)}\norm{g - g_\rho}^2_{L^2(\rho_\X)} = g_\rho,
\end{equation}
where $g_\rho:\X\to\Y$ is defined as $g_\rho(x) = \E\bracket{Y\midvert X=x}$.
In semi-supervised learning, we assume that we do not have access
to $\rho$ but we
have access to $n$ independent samples $(X_i)_{i\leq n} \sim\rho_\X^{\otimes n}$,
among which we have $n_\ell$ labels $Y_i \sim \rho\vert_{X_i}$ for $i \leq n_\ell$, 
with $n_\ell$ potentially much smaller than $n$.
In other terms, we have
$n_\ell$ supervised pairs $(X_i, Y_i)_{i\leq n_\ell}$, and $n-n_\ell$ unsupervised
samples $(X_i)_{n_\ell < i  \leq n}$. 
While we restrict ourselves to real-valued regression for simplicity, our
exposition indeed applies generically to partially supervised learning.
In particular, it can be used off-the-shelve to complement the
approaches of \citep{Cabannes2020,Cabannes2021} as we detailed in Appendix
\ref{sec:extension}.

\subsection{Diffusion operator \texorpdfstring{${\cal L}$}{}}
In order to leverage unlabelled data, we will assume that $g^*$ varies smoothly
on highly populated regions of $\X$, and might vary highly on low density regions. 
For example, this is the case when data are clustered in well separated regions of space, and labels are constant on clusters.
This is captured by the fact that the Dirichlet energy
\begin{equation}
  \label{eq:diffusion_operator}
  \int_{\X} \norm{\nabla g^*(x)}^2 \rho_\X(\diff x)
  = \E_{X\sim\rho_\X} \bracket{\norm{\nabla g^*(X)}^2}
  =: \norm{{\cal L}^{1/2} g}_{L^2(\rho_\X)}^2,
\end{equation}
is assumed to be small.
Because the quadratic functional \eqref{eq:diffusion_operator} will play a
crucial role in our exposition, we define ${\cal L}$ as the self-adjoint
operator on $L^2(\rho_\X)$, extending the operator on $H^1(\rho_\X)$
representing this functional.
Under mild assumptions on $\rho_\X$, ${\cal L}^{-1}$ can be shown to be a compact operator,
which we will assume in the following.
In essence, we will assume that if we have a lot of unlabelled data and
$\norm{{\cal L}^{1/2} g}$ can be well approximated for any function $g$,
then we do not need a lot of labelled data to estimate correctly $g^*$.
To illustrate this, at one extreme, if we know that $\norm{{\cal L}^{1/2} g^*} = 0$,
then $g^*$ is known to be constant on each connected component of $\rho_\X$ so that, 
along with the knowledge of $\rho_\X$, only a few labelled points would be sufficient to recover perfectly $g^*$.
We illustrate those considerations on Figure \ref{fig:intro}.
\begin{figure*}[t]
  \centering
  \includegraphics{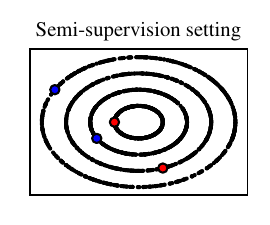}
  \includegraphics{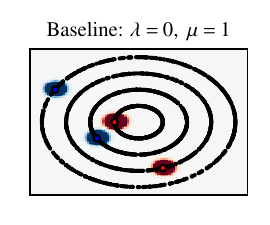}
  \includegraphics{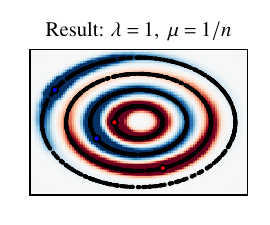}
  \vskip -0.2in
  \caption{
    Motivating example. (Left) We suppose given $n = 2000$ points in $\X=\R^2$,
    represented as black dots, spanning $4$ concentric circles.
    Among those points are $n_\ell = 4$ labelled points, with labels being either
    $1$ represented in red, and $-1$ represented in blue.
    In this setting, it is natural to assume that $g^*$ should be constant on
    each circles, which can be encoded as $\norm{\nabla g^*} = 0$ on $\supp\rho_\X$.
    (Middle) Kernel ridge regression estimate based on the
    labelled points with Gaussian kernel of bandwidth $\sigma = .2r$,
    $r$  being the radius of the innermost circle.
    (Right) Laplacian regularization reconstruction. The reconstruction is based
    on approximate empirical risk minimization with $p=n$,
    which ensures a computational complexity of $O(p^2 nd)$,
    instead of $O(n^3d^3)$ needed to recover the exact empirical risk minimizer
    \eqref{eq:estimate}.
  }
  \label{fig:intro}
  \vskip -0.2in
\end{figure*}

\subsection{Drawbacks of naive Laplacian regularization}
Following the motivations presented precedently, it is natural to consider the
regularized objective and solution defined, for $\lambda > 0$, as
\begin{equation}
  \label{eq:laplacian_tikhonov}
  \begin{split}
  g_\lambda 
  &= \argmin_{g\in H^1(\rho_\X)} \E_{(X,Y)\sim\rho}\bracket{\norm{g(X) - Y}^2} + \lambda
  \E_{X\sim\rho_\X}\bracket{\norm{\nabla g(X)}^2_{\R^d}}
  \\&= \argmin_{g\in H^1(\rho_\X)} \norm{g - g_\rho}^2_{L^2(\rho_\X)} + \lambda
  \norm{{\cal L}^{1/2} g}^2_{L^2(\rho_\X)}  =  (I+\lambda{\cal L})^{-1} g_\rho.
  \end{split}
\end{equation}
This regularization has nice properties.
In particular, for small $\lambda$, it can be seen as a first order approximation
of the heat equation solution $e^{-\lambda {\cal L}}g_\rho$, which represents the temperature
profile at time $t=\lambda$, instantiated with the initial profile $g_\rho$, and
with $\rho_\X$ modelling the thermal conductivity.
It also has interpretations in term of random walk and Langevin diffusion
\citep{PillaudVivien2020,Klus2020}. In a word, $g_\lambda$ is the diffusion of $g_\rho$ with respect to the density $\rho_\X$,
which relates to the idea of diffusing labelled data 
with respect to the intrinsic geometry of the data, which is the idea captured by \citep{Zhu2003}. 

However, from a learning perspective, Eq.~\eqref{eq:laplacian_tikhonov} is
linked with the prior that $g^*$ belongs to $H^1(\rho_\X)$, a prior that is not
strong enough to overcome the curse of dimensionality as we saw in the related
work section.
Moreover, assuming we have enough unsupervised data to suppose known $\rho_\X$, and
therefore $\cal L$,  Eq.~\eqref{eq:laplacian_tikhonov} leads to the naive empirical estimate
\(
  g_{\text{(naive)}} \in \argmin_{g:\X\to\R} \sum_{i=1}^{n_\ell} \norm{g(X_i) - Y_i}^2 + n_\ell\lambda \norm{{\cal L}^{1/2}g}^2.
\)
While the definition of $g_{\text{(naive)}}$ could seem like a great idea, in
fact, such an estimate $g_{\text{(naive)}}$ is known to be mostly constant
and spiking to interpolate the data $(X_i, Y_i)$ as soon as $d > 2$
\citep{Nadler2009}.
This is to be related with the capacity of the space associated with the
pseudo-norm $\norm{{\cal L}^{1/2}g}$ in $L^2$.
This capacity, related to $H^1$, is too large for
the Laplacian regularization term to constraint $g_{\text{(naive)}}$ in a
meaningful way. In other terms, we need to regularize with stronger penalties.

\subsection{Stronger regularization} 
\label{sec:kernel_free_reg}

In this subsection, we discuss techniques to overcome the issues encountered with $g_{\text{(naive)}}$. 
Those techniques are based on functional space constraints or on spectral filtering techniques.

\paragraph{Functional spaces.} A solution to overcome the capacity issue of $H^1$ in $L^2$ is to constrain
the estimate of $g^*$ to belong to a smaller functional space.
In the realm of graph Laplacian, \citep{Alaoui2016} 
proposed to
solve this problem by considering the $r$-Laplacian regularization reading
\(
\Omega_{r} = \int_\X \norm{\nabla g(X)}^r \rho(\diff x),
\)
with $r > d$.
In essence, this restricts $g$ to live in $W^{1,r}(\rho_\X)$ for $r > d$,
and allows to avoid spikes associated with $g_{\text{(naive)}}$.
However considering high power of the gradient is likely to introduce instability (think that $d$ is the potentially really big dimension of the input space), and from a learning perspective, the capacity of $W^{1,r}$, which compares to the one of $H^2$, is still too big.

In this paper, we will rather keep the diffusion operator ${\cal L}$, and add a second
penalty to reduce the space in which we look for the solution.
With ${\cal G}$ an Hilbert space of functions, we could look for, with $\mu > 0$ a
second regularization parameter
\begin{equation}
  \label{eq:tikhonov}
  g_{\lambda, \mu} = \argmin_{g:{\cal G}\cap H^1(\rho_\X)} \norm{g - g_\rho}^2_{L^2(\rho_\X)}
  + \lambda \norm{{\cal L}^{1/2} g}^2_{L^2(\rho_\X)} + \lambda\mu\norm{g}_{\cal G}^2.
\end{equation}
This formulation restricts $g_{\lambda, \mu}$ to belong both to
${H}^1(\rho_\X)$ (thanks to the term in $\lambda$) and ${\cal G}$ (thanks to the
term in $\mu$).
In particular the resulting space
$H^1(\rho_\X) \cap {\cal G}$ to which $g_{\lambda, \mu}$ belongs, has a smaller
capacity in $L^2$ than the one of ${\cal G}$ in $L^2$.
In practice, we do not have access to $\rho$ and $\rho_\X$ but to
$(X_i, Y_i)_{i\leq n_\ell}$ and $(X_i)_{i\leq n}$,
and we might consider the empirical estimator defined through empirical risk
minimization
\begin{equation}
  \label{eq:estimate}
  g_{n_\ell,n} = \argmin_{g\in{\cal G}} n_\ell^{-1}\sum_{i=1}^{n_\ell} \norm{g(X_i) - Y_i}^2
  + \lambda n^{-1} \sum_{i=1}^n\norm{\nabla g(X_i)}^2 + \lambda\mu\norm{g}^2_{\cal G}.
\end{equation}
For example, we could consider ${\cal G}$ to be the Sobolev space ${H}^m(\diff x)$.
Note the difference between ${\cal G}$ linked with $\diff x$, the Lebesgue measure, that is known, 
and ${\cal L}$ linked with $\rho_\X$, the marginal of $\rho$ over $\X$, that is not known.
In this setting, the regularization $\|{\cal L}^{1/2}g\|^2 + \mu\|g\|_{\cal G}^2$ reads 
$\int_\X \norm{D g(x)}^2 \rho_\X(\diff x) + 
\mu \int_\X \sum_{\alpha=0}^m \norm{D^\alpha g(x)}^2 \diff x$.
Because of the size of ${H}^m$ in $L^2$, this allows for efficient
approximation of $g_{\lambda, \mu}$ based on empirical risk minimization. 
In particular, if $n = +\infty$, we expect the minimizer
\eqref{eq:estimate} to converge toward $g_{\lambda, \mu}$ at rates in
$L^2$ scaling similarly to $n_\ell^{-m/d}$ in $n_\ell$.
To complete the picture, depending on a prior on $g_\rho$, $g_{\lambda, \mu}$
might exhibit good convergence properties towards $g_\rho$ as $\lambda$ and
$\mu$ go to zero. 
This contrasts with the problem encountered with $g_{\text{(naive)}}$. 
Those considerations are exactly what reproducing kernel Hilbert space will provide, 
additionally with a computationally friendly framework to perform the estimation.
Note that quantity similar to $g_{\lambda,\mu}$ were considered in \citep{Zhou2008,Rosasco2013}.

\paragraph*{Spectral filtering.} 
Without looking for higher power-norm, \citep{Nadler2009} proposed to overcome
the capacity issue by considering approximation of the operator ${\cal L}$ based
on the graph-based technique provided by \citep{Belkin2003,Coifman2006} and to
reduce the search of $g_{n_\ell}$ on the space spanned by the first few eigenvectors of
the Laplacian.
In particular, on Figure \ref{fig:intro}, $g^*$ could be searched in the null
space of ${\cal L}$, that is, among functions that are constant on each connected
component of $\supp\rho_\X$.
This technique exhibits two parts, the ``unsupervised'' estimation of ${\cal L}$
that will depend on the total number of data $n$, and the ``supervised''
search for $g_\rho$ on the first few eigenvectors of ${\cal L}$ that will depend on the number of labels~$n_\ell$.
While, at first sight, this technique seems to be completely different than
Tikhonov regularization~\eqref{eq:tikhonov}, it can be cast, along with gradient descent, 
into the same \emph{spectral filtering} framework \citep{Lin2020}. 
This point of view enables the use of a wide range of techniques offered by spectral manipulations 
on the diffusion operator ${\cal L}$.

This paper is motivated by the fact that current well-grounded semi-supervised
learning techniques are implemented based on graph-based Laplacian, which is a
local averaging method that does not leverage smartly functional capacity.
In particular, as recalled earlier, graph-based Laplacian is known to suffer from 
the curse of dimensionality, in the sense that the convergence of the empirical estimator $\widehat{\cal L}$ 
towards the ${\cal L}$ exhibits a rate of convergence of order 
${\cal O}(n^{-1/d})$ with $d$ the dimension of the input space ${\cal X}$ \citep{Hein2007}.
In this work, we will bypass this curse of dimensionality by looking for $g$ in
a smooth universal reproducing kernel Hilbert space,
which will lead to efficient empirical estimates. 

\section{Spectral Filtering with Kernel Laplacian}
\label{sec:kernel}

In this section, we approach Laplacian regularization from a functional
analysis perspective.
We first introduce kernel methods and derivatives in reproducing kernel Hilbert
space (RKHS).
We then translate the considerations provided in Section
\ref{sec:kernel_free_reg} in the realm of kernel methods.

\subsection{Kernel methods and derivatives evaluation maps}

In this subsection, we introduce kernel methods (see \citep{Aronszajn1950,Scholkopf2001,Steinwart2008} for more details 
Consider $({\cal H}, \scap{\cdot}{\cdot}_{\cal H})$ a reproducting kernel Hilbert
space, that is a Hilbert space of functions from $\X$ to $\R$ such that the
evaluation functionals $L_x:{\cal H}\to\R;g\to g(x)$ are continuous linear forms
for any $x\in\X$. 
Such forms can be represented by $k_x \in {\cal H}$ such that, for any
$g\in{\cal H}$, $L_x(g) = \scap{k_x}{g}_{\cal H}$.
A reproducing kernel Hilbert space can alternatively be defined from
a symmetric positive semi-definite kernel $k:\X\to\X\to\R$, that is a function
such that for any $n\in\N$ and $(x_i)_{i\leq n} \in \X^n$ the matrix
$(k(x_i,x_j))_{i,j}$ is symmetric positive semi-definite,
by building $(k_x)_{x\in\X}$ such that $k(x, x') = \scap{k_x}{k_{x'}}_{\cal H}$.
From a learning perspective, it is useful to use the evaluation maps to rewrite
${\cal H} = \brace{g_\theta:x\to\scap{k_x}{\theta}_{\cal H}\midvert \theta\in{\cal H}}$.
As such, kernel methods can be seen as ``linear models'' with features $k_x$,
allowing to parameterize large spaces of functions \citep{Micchelli2006}.
In the following, we will differentiate $\theta$ seen as an element of ${\cal H}$
and $g_\theta$ seen as its embedding in $L^2$.
To make this distinction formal, we define the embedding
$S:({\cal H}, \scap{\cdot}{\cdot}_{\cal H})\hookrightarrow (L^2(\rho_\X), \scap{\cdot}{\cdot}_{L2}); \theta\to g_\theta$,
as well as its adjoint $S^\star:L^2(\rho_\X)\to{\cal H}$.

Given a linear parametric model of functions $g_\theta(x) = \scap{\theta}{k_x}_{\cal H}$,
it is possible to compute derivatives of $g_\theta$ based on derivatives of the
feature vector -- think of ${\cal H} = \R^p$ and of $k_x = \phi(x)$ as a
feature vector with $\phi:\R^d \to \R^p$.
For $\alpha \in \N^d$, with $\abs\alpha = \sum_{i\leq d} \alpha_i$, 
we have the following equality of partial derivatives, 
when $k$ is $2\abs{\alpha}$ times differentiable,
\[
  D^\alpha g_\theta(x) = \scap{\theta}{D^\alpha k_x},\qquad\text{where}\qquad
  D^\alpha = \frac{\partial^{\abs\alpha}}{(\partial x_1)^{\alpha_1} (\partial x_2)^{\alpha_2}\cdots (\partial x_d)^{\alpha_d}}.
\]
Here  and $D^\alpha k_x$ has to be
understood as the partial derivative of the mapping of $x\in\X$ to $k_x\in{\cal H}$,
which can be shown to belong to ${\cal H}$ \citep{Zhou2008}. 
In the following, we assume that $k$ is twice differentiable with continuous derivatives, and will make an extensive use of derivatives of the form
$\partial_i k_x = \partial k_x / \partial x_i$ for $i \leq d$ and
$x\in\X$. 
Note that, as well as we can describe completely the Hilbertian geometry of the
space $\Span\brace{k_x \midvert x\in\X}$ through
$k(x, x') = \scap{k_x}{k_x'}$, for $x, x'\in\X$,
we can describe the Hilbertian geometry of $\Span\brace{k_x \midvert x\in\X} +
\Span\brace{\partial_i k_x\midvert x\in\X}$, through 
\[
  \partial_{1,i} k(x, x') = \scap{\partial_i k_x}{k_{x'}}_{\cal H},
  \qquad\text{and}\qquad
  \partial_{1,i}\partial_{2,j} k(x, x') = \scap{\partial_i k_x}{\partial_j
    k_{x'}}_{\cal H},
\]
where $\partial_{1,i}$ denotes the partial derivative with respect to the
$i$-th coordinates of the first variable. 
This echoes to so-called ``representer theorems''.

\begin{example}[Gaussian kernel]
  \label{ex:rbf} 
  A classical kernel is the Gaussian kernel, also known as radial basis
  function, defined for $\sigma > 0$ as the following $k$, and satisfying,
  for $i\neq j$, the following equalities,
  \begin{gather*}
    k(x, x') = \exp\paren{-\frac{\norm{x-x'}^2}{2\sigma^2}},\qquad
    \partial_{1,i}\partial_{2,j} k(x, y) = - \frac{(x_i - y_i)(x_j -
      y_j)}{\sigma^4} k(x, y),
    \\
    \partial_{1,i} k(x, y) = -\frac{(x_i - y_i)}{\sigma^2} k(x, y),
    \qquad
    \partial_{1,i}\partial_{2,i} k(x, y) = \paren{\frac{1}{\sigma^2} -
      \frac{(x_i - y_i)^2}{\sigma^4}} k(x, y),
  \end{gather*}
  where $x_i$ designs the $i$-th coordinates of the vector $x\in\X=\R^d$.
\end{example}

\subsection{Tikhonov, spectral filtering and dimensionality reduction}
Given the kernel $k$, its associated RKHS ${\cal H}$ and $S$ the embedding of ${\cal H}$ in $L^2$, we rewrite
Eq.~\eqref{eq:tikhonov} under its ``parameterized'' version
\begin{equation}
  \tag{\ref{eq:tikhonov}}
  g_{\lambda, \mu} = S \, \argmin_{\theta\in{\cal H}} \brace{\norm{S\theta - g_\rho}^2_{L^2(\rho_\X)}
  + \lambda \norm{{\cal L}^{1/2} S\theta}^2_{L^2(\rho_\X)} + \lambda\mu\norm{\theta}_{\cal H}^2 }.
\end{equation}
Do not hesitate to refer to Table \ref{tab:notations} to keep track of notations.
In the following, we will use that
\(
  \norm{{\cal L}^{1/2} S\theta}^2_{L^2(\rho_\X)} + \mu\norm{\theta}_{\cal
    H}^2
  = \norm{(S^\star {\cal L} S + \mu I)^{1/2} \theta}_{\cal H}^2.
\)
This equality explains why we consider $\mu\lambda$ instead of $\mu$ in the last term.
In the RKHS setting, the study of Eq.~\eqref{eq:tikhonov} unveils the three
operators $\Sigma$, $L$, and $I$ on ${\cal H}$, (indeed 
\( 
  g_{\lambda, \mu} = S \, \argmin_{\theta\in{\cal H}} \brace{
  \theta^\star(\Sigma + \lambda L + \lambda\mu)\theta - 2\theta^\star S^\star g_\rho }
\))
where $I$ is the identity, and, as we detail in Appendix \ref{app:operators},
\begin{equation}
  \label{eq:block_op}
  \Sigma = S^\star S = \E_{X\sim\rho_\X}\bracket{k_X \otimes k_X},
  \qquad\text{and}\qquad
  L = S^\star {\cal L}S  = \E_{X\sim\rho_\X}
  \bracket{\sum_{i=1}^d \partial_j k_X \otimes \partial_j k_X}.
\end{equation}
Regularization and spectral filtering have been well-studied in the inverse-problem literature.
In particular, the regularization Eq.~\eqref{eq:tikhonov} is known to be linked
with the generalized singular value decomposition of $[\Sigma; L+\mu I]$ (see, \emph{e.g.}, \cite{Edelman2020}),
which is linked to the generalized eigenvalue decomposition of $(\Sigma, L+\mu I)$
\citep{Golub2013}.
We derive the following characterization of Eq.~\eqref{eq:tikhonov}, whose proof is reported in Appendix \ref{app:algebra}.

\begin{proposition}
  \label{thm:decomposition}
  Let $(\lambda_{i,\mu})_{i\in\N} \in \R^\N, (\theta_{i,\mu})_{i\in\N} \in {\cal H}^\N$
  be the generalized eigenvalue decomposition of the pair 
  $(\Sigma, L + \mu I)$, that is $(\theta_{i,\mu})$ generating ${\cal H}$ and such
  that for any $i, j \in \N$,
  $\Sigma \theta_{i,\mu} = \lambda_{i,\mu} (L + \mu I) \theta_{i,\mu}$, and
  $\scap{\theta_{i,\mu}}{(L+\mu I) \theta_{j,\mu}} = \ind{i=j}$.
  Eq.~\eqref{eq:tikhonov} can be rewritten as
  \begin{equation}
    \label{eq:filtering}
    g_{\lambda, \mu} = 
    \paren{\sum_{i\in\N} \psi(\lambda_{i,\mu}) S\theta_{i,\mu} \otimes S\theta_{i, \mu}} g_\rho =
    \sum_{i\in\N} \psi(\lambda_{i,\mu}) \scap{S^\star g_\rho}{\theta_{i,\mu}} S\theta_{i,\mu},
  \end{equation}
  with $\psi:\R_+ \to \R; x\to (x+\lambda)^{-1}$.
  Eq.~\eqref{eq:filtering} should be seen as a specific instance of spectral
  filtering based on a filter function $\psi:\R_+\to\R$.
\end{proposition}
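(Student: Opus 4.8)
The plan is to reduce the proposition to a standard spectral inversion. First I would record that, as noted in the parenthetical before the statement, the objective is the quadratic form $\theta\mapsto\scap{\theta}{(\Sigma+\lambda L+\lambda\mu I)\theta}-2\scap{\theta}{S^\star g_\rho}$, obtained by expanding $\norm{S\theta-g_\rho}^2_{L^2(\rho_\X)}=\scap{\theta}{\Sigma\theta}-2\scap{\theta}{S^\star g_\rho}+\norm{g_\rho}^2$, $\lambda\norm{{\cal L}^{1/2}S\theta}^2=\lambda\scap{\theta}{L\theta}$, and $\lambda\mu\norm{\theta}^2_{\cal H}=\lambda\mu\scap{\theta}{\theta}$. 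Since $\Sigma=S^\star S\succeq 0$ and $L=S^\star{\cal L}S\succeq 0$ while $\lambda,\mu>0$, the operator $A:=\Sigma+\lambda(L+\mu I)$ satisfies $A\succeq\lambda\mu I$, hence is boundedly invertible, and setting the gradient to zero yields the unique minimizer $\theta^\star=A^{-1}S^\star g_\rho$, so that $g_{\lambda,\mu}=SA^{-1}S^\star g_\rho$. Everything then reduces to expressing $A^{-1}$ through the generalized eigendecomposition of $(\Sigma,L+\mu I)$.

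Writing $M:=L+\mu I\succeq\mu I>0$, the operators $M^{1/2}$ and $M^{-1/2}$ are well-defined and bounded. I would then change variables by $\phi_i:=M^{1/2}\theta_{i,\mu}$: the generalized eigenrelation $\Sigma\theta_{i,\mu}=\lambda_{i,\mu}M\theta_{i,\mu}$ becomes the standard eigenrelation $(M^{-1/2}\Sigma M^{-1/2})\phi_i=\lambda_{i,\mu}\phi_i$, and the $M$-orthonormality $\scap{\theta_{i,\mu}}{M\theta_{j,\mu}}=\ind{i=j}$ becomes $\scap{\phi_i}{\phi_j}=\ind{i=j}$. Because $k$ is continuous and $\rho_\X$ has compact support, $\trace\Sigma=\E[k(X,X)]<\infty$, so $\Sigma$ is trace-class and $M^{-1/2}\Sigma M^{-1/2}$ is self-adjoint and compact; the spectral theorem then guarantees that $\brace{\phi_i}$ is a complete orthonormal basis of ${\cal H}$, which is exactly the requirement that $\brace{\theta_{i,\mu}}$ generate ${\cal H}$.

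With the basis in hand I would write $I=\sum_i\phi_i\otimes\phi_i$ and factor $A=\Sigma+\lambda M=M^{1/2}\paren{M^{-1/2}\Sigma M^{-1/2}+\lambda I}M^{1/2}=M^{1/2}\paren{\sum_i(\lambda_{i,\mu}+\lambda)\,\phi_i\otimes\phi_i}M^{1/2}$. Since each factor $\lambda_{i,\mu}+\lambda\geq\lambda>0$, inversion is termwise, giving $A^{-1}=M^{-1/2}\paren{\sum_i(\lambda_{i,\mu}+\lambda)^{-1}\phi_i\otimes\phi_i}M^{-1/2}=\sum_i\psi(\lambda_{i,\mu})\,\theta_{i,\mu}\otimes\theta_{i,\mu}$, where I used $M^{-1/2}(\phi_i\otimes\phi_i)M^{-1/2}=(M^{-1/2}\phi_i)\otimes(M^{-1/2}\phi_i)=\theta_{i,\mu}\otimes\theta_{i,\mu}$ and $\psi(x)=(x+\lambda)^{-1}$. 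Applying $S$ on the left, evaluating at $S^\star g_\rho$, and using $\scap{\theta_{i,\mu}}{S^\star g_\rho}=\scap{S\theta_{i,\mu}}{g_\rho}_{L^2(\rho_\X)}$ recovers both forms of $g_{\lambda,\mu}$ displayed in Eq.~\eqref{eq:filtering}.

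The main obstacle is not the algebra but the functional-analytic bookkeeping: one must justify that $M^{-1/2}\Sigma M^{-1/2}$ is compact so that a complete eigenbasis exists, and that the termwise inversion and the resulting infinite sums converge in the appropriate operator and $L^2(\rho_\X)$ topologies. The uniform lower bound $\lambda_{i,\mu}+\lambda\geq\lambda$ makes the inverted filter $\psi$ bounded, so convergence is inherited from that of $\sum_i\phi_i\otimes\phi_i$ to the identity; the only genuinely delicate point is the compactness/trace-class claim for $\Sigma$, which I would establish from boundedness of $k$ on the compact support of $\rho_\X$ and relegate the precise argument to the appendix, stating these hypotheses explicitly.
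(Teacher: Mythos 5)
Your proof is correct, but it is organized quite differently from the paper's. The paper splits the argument into two independent pieces: it first derives the filtering formula purely in finite dimensions, via the generalized singular value decomposition of a matrix pair (the cosine--sine parameterization, ending in $x_\lambda = \paren{\sum_{i=1}^r (\lambda_i+\lambda)^{-1} f_i \otimes A f_i}b$), then separately proves that the operator pair $(\Sigma, L+\mu I)$ admits a generalized eigendecomposition by applying the spectral theorem to the compact whitened operator $(L+\mu I)^{-1/2}\Sigma(L+\mu I)^{-1/2}$, and finally asserts that the proposition ``follows from prior discussion on Tikhonov regularization extended to infinite summations'' --- the limiting step itself is never written out. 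You collapse these stages into a single operator-level computation: the first-order condition gives $g_{\lambda,\mu}=S(\Sigma+\lambda L+\lambda\mu I)^{-1}S^\star g_\rho$ directly, and the congruence $\Sigma+\lambda M = M^{1/2}\paren{M^{-1/2}\Sigma M^{-1/2}+\lambda I}M^{1/2}$ with $M=L+\mu I$ reduces the inversion to a termwise one in the orthonormal eigenbasis $\phi_i = M^{1/2}\theta_{i,\mu}$, legitimized by the uniform bound $\psi(\lambda_{i,\mu})\leq 1/\lambda$ and strong convergence of the eigenexpansion. Note that your whitening step and the trace-class argument for $\Sigma$ are exactly the ingredients the paper uses for its existence proof, so the analytic content coincides; what your version buys is that the infinite-dimensional extension the paper leaves informal becomes rigorous by construction, in a self-contained way. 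What the paper's matrix detour buys is the finite-dimensional GSVD picture, which it reuses on the computational side (the footnote of Algorithm \ref{alg:imp} on computing with the generalized singular value decomposition rather than forming $T_a^\star \hat{L} T_a$); that connection is absent from your argument, but it is not needed for the proposition itself.
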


\begin{figure*}[t]
  \centering
  \includegraphics{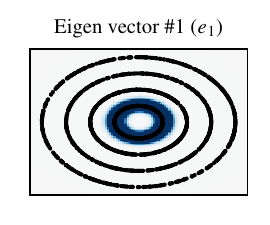}
  \includegraphics{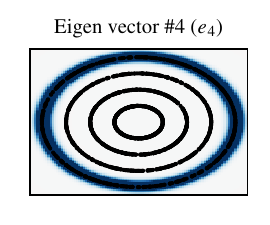}
  \includegraphics{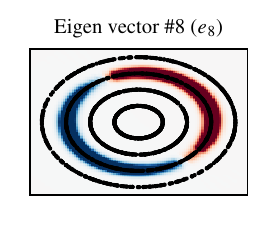}
  \vskip -0.2in
  \caption{
    Few of the first generalized eigenvectors of 
    $(\hat\Sigma; \hat{L} + \mu I)$ (with $\mu = 1/n$).
    The first four eigenvectors correspond to constant functions on each circle,
    as shown with $e_1$ and $e_4$.
    The few eigenvectors after correspond to second harmonics localized on a
    single circle as shown with $e_8$.
  }
  \label{fig:unsupervised}
  \vskip -0.2in
\end{figure*}

Interestingly, the generalized eigenvalue decomposition of the pair $(\Sigma, L+\mu I)$
was already considered by \citet{PillaudVivien2020} to estimate the first eigenvalue of the Laplacian.
Moreover, \citet{PillaudVivien2020b} suggests to leverage this decomposition for dimensionality
reduction based on the first eigenvectors of the Laplacian.
As well as Eq.~\eqref{eq:tikhonov} contrasts with graph-based semi-supervised
learning techniques, this dimensionality reduction technique contrasts with
methods based on graph Laplacian provided by \citep{Belkin2003,Coifman2006}.
Remarkably, the semi-supervised learning algorithm that consists in using the
unsupervised data to perform dimensionality reduction based on the Laplacian
eigenvalue decomposition, before solving a small linear regression problem on
the small resulting space, can be seen as a specific instance of spectral
filtering, based on regularization by thresholding/cutting-off eigenvalue, 
which corresponds to $\psi:x\to x^{-1}\ind{x>\lambda}$ for a given threshold
$\lambda > 0$ in Eq.~\eqref{eq:filtering}.

\section{Implementation}
\label{sec:implementation}

In this section, we discuss on how to practically implement estimates for
Eq.~\eqref{eq:filtering} based on empirical data $(X_i, Y_i)_{i\leq n_\ell}$ and $(X_i)_{n_\ell < i \leq n}$.
We first review how we can approximate the integral operators of
Eq.~\eqref{eq:block_op} based on data.
We then discuss on how to implement our methods practically on a computer.
We end this section by considering approximations that allow to cut down high
computational costs associated with kernel methods involving derivatives.

\begin{algorithm}[H]
    \caption{Empirical estimates based on spectral filtering.} 
    \KwData{$(X_i, Y_i)_{i\leq n_\ell}$, $(X_i)_{n_\ell < i\leq n}$, a kernel $k$, a
filter $\psi$, a regularizer $\mu$}
    \KwResult{$\hat{g}_p$ through $c \in \R^p$ defining $\hat{g}_p(x) =
      \sum_{i=1}^n c_i k(x, X_i) = k_x^\star T_ac$}
      
  Compute $S_nT_a = (k(X_i, X_j))_{i\leq n, j\leq p} \in \R^{n\times p}$ in ${\cal O}(pn)$\\
  Compute $Z_nT_a = (\partial_{1,j} k(X_l, X_i))_{(j\leq d, l\leq n), i\leq p} \in \R^{nd\times p}$ in ${\cal O}(pnd)$\\
  Build $T_a^\star \hat\Sigma T_a = n^{-1}(S_nT_a)^\top(S_nT_a)$ in ${\cal O}(p^2n)$\\
  Build $T_a^\star \hat{L} T_a = n^{-1}(Z_nT_a)^\top(Z_nT_a)$ in ${\cal
    O}(p^2nd)$\footnote{
    Building this matrix can be avoided by using the generalized
    singular value decomposition rather than the generalized eigenvector decomposition.  
    Implemented with Lapack, such a procedure will also requires $O(p^2 nd)$
    floating point operations, 
    but with a smaller constant in the big $O$ \citep{Golub2013}.}\\
  Build $T_a^\star T_a = (k(X_i, X_j))_{i,j\leq p} \in \R^{p\times p}$ in
  ${\cal O}(1)$ as a partial copy of $S_nT_a$\\
  Get $(\lambda_{i, \mu}, u_{i,\mu})_{i\leq n}$ the generalized eigenelements of $(T_a^\star \hat\Sigma T_a, T_a^\star (\hat{L} + \mu
  I) T_a)$ in~${\cal O}(p^3)$\\
  Get $b = T_a^\star \hat\theta = (n_\ell^{-1}\sum_{i=1}^{n_\ell} Y_i k(X_i,
  X_j))_{j\leq p} \in \R^p$ in ${\cal O}(pn_\ell)$\\
  Return $c = \sum_{i=1}^n \psi(\lambda_i) u_i u_i^\top b \in \R^p$ in ${\cal
    O}(p^3)$.
  \label{alg:imp}
\end{algorithm}

\subsection{Integral operators approximation}
The classical empirical risk minimization in  Eq.~\eqref{eq:estimate} can be
understood as the plugging of the approximate distributions
$\hat\rho = n_\ell^{-1}\sum_{i=1}^{n_\ell} \delta_{X_i} \otimes \delta_{Y_i}$ and
$\hat\rho_\X = n^{-1} \sum_{i=1}^n \delta_{X_i}$ instead of $\rho$ and
$\rho_\X$ in Eq.~\eqref{eq:tikhonov}.
It can also be understood as the same replacement when dealing with integral
operators, 
leading to the three following important quantities to
rewrite Eq.~\eqref{eq:filtering},
\begin{equation}
  \label{eq:approximate_operator}
  \hat\Sigma := n^{-1}\sum_{i=1}^{n} k_{X_i}\otimes k_{X_i},\
  \hat{L} := n^{-1}\sum_{i=1}^n \sum_{j=1}^d \partial_j k_{X_i}\otimes \partial_j k_{X_i},\
  \hat\theta := \widehat{S^\star g_\rho} := n_\ell^{-1}\sum_{i=1}^{n_\ell} Y_i k_{X_i}.
\end{equation}
It should be noted that while considering $n$ in the definition of $\hat\Sigma$
is natural from the spectral filtering perspective, to make it formally
equivalent with the empirical risk minimization \eqref{eq:estimate},
it should be replaced by $n_\ell$.
Eq.~\eqref{eq:approximate_operator} allows to rewrite Eq.~\eqref{eq:filtering}
without relying on the knowledge of $\rho$, 
by considering $(\hat\lambda_{i,\mu}, \hat\theta_{i,\mu})$ the generalized
eigenvalue decomposition of $(\hat\Sigma, \hat L)$ and considering 
\begin{equation}
  \hat g = \sum_{i\in\N} \psi(\hat\lambda_{i,\mu}) \scap{\widehat{S^\star g_\rho}}{\hat\theta_{i,\mu}} S\hat\theta_{i,\mu},
\end{equation}
We present the first eigenvectors (after plunging them in $L^2$ through $S$)
of the generalized eigenvalue decomposition of $(\hat\Sigma, \hat{L} + \mu I)$
on Figure \ref{fig:unsupervised}.
The first eigenvectors allow to recover the null space of ${\cal L}$.
This explains clearly the behavior on the right of Figure \ref{fig:intro}.

\subsection{Matrix representation and approximation of operators}
Currently, we are dealing with operators ($\hat\Sigma, \hat{L}$) and vectors
(\emph{e.g.}, $\hat\theta$) in the Hilbert space ${\cal H}$.
It is natural to wonder on how to represent this on a computer.
The answer is the object of representer theorems (see Theorem 1 of \citep{Zhou2008}), and consists in noticing that
all the objects introduced are actually defined in, or operate on,
${\cal H}_n + {\cal H}_{n,\partial} \subset {\cal H}$,
with ${\cal H}_n = \Span\brace{k_{X_i} \midvert i\leq n}$
and ${\cal H}_{n,\partial} = \Span\brace{\partial_j k_{X_i} \midvert i\leq n, j\leq d}$.
This subspace of ${\cal H}$ is of dimension at most $n(d+1)$ and
if $T:\R^p \to {\cal H}_n + {\cal H}_{n,\partial}$ (with $p \leq n(d+1)$)
parameterizes ${\cal H}_n + {\cal H}_{n,\partial}$,
our problem can be cast in $\R^{p}$ by considering the $p\times p$ matrices
$T^\star \hat\Sigma T$ and $T^\star (\hat{L} + \mu I)T$ instead of the operators
$\hat\Sigma$ and $\hat{L} + \mu I$. 
The canonical representation consists in taking $p = n(d+1)$ and considering for
$c \in \R^{n(d+1)}$, the mapping
$T_c c = \sum_{i=1}^n c_{i0} k_{X_i} + \sum_{j=1}^d c_{ij}\partial_j k_{X_i}$  
\citep{Zhou2008,Rosasco2013}.

This exact implementation implies dealing and finding the generalized eigen
value decomposition of $p\times p$ matrices with $p = n(d+1)$, which leads to
computational costs in ${\cal O}(n^3d^3)$, which can be prohibitive.
Two solutions are known to cut down prohibitive computational costs of kernel
methods. Both methods consist in looking for a space that can be
parameterized by $\R^p$ for a small $p$ and that approximates well the space
${\cal H}_n + {\cal H}_{n,\partial} \subset {\cal H}$.
The first solution is provided by random features \citep{Rahimi2007}. It
consists in approximating ${\cal H}$ with a 
space of small dimension $p\in\N$, linked with an explicit representation
$\phi:\X\to\R^p$ that approximate $k(x, x') \simeq k_\phi(x, x') =
\scap{\phi(x)}{\phi(x')}_{\R^p}$.
In theory, it substitutes the kernel $k$ by $k_\phi$.
In practice, all computations can be done with the explicit feature $\phi$.

\paragraph{Approximate solution.}
The second solution, which we are going to use in this work, consists in
approximating ${\cal H}_n + {\cal H}_{n,\partial}$ by
${\cal H}_p = \Span\brace{k_{X_i}}_{i\leq p}$ for $p \leq n$.
This method echoes the celebrated Nystr\"om method
\citep{Williams2000}, as well as the Rayleigh–Ritz method for Sturm–Liouville
problems. In essence, \citep{Rudi2015} shows that, when considering subsampling
based on leverage score, $p=n^{\gamma}\log(n)$, with $\gamma \in (0,1]$ linked
to the ``size'' of the RKHS and the regularity of the solution,
is a good enough approximation, in the sense that it only downgrades the sample
complexity by a constant factor.
In theory, we know that the space ${\cal H}_p$ will converge
to ${\cal H} = \text{Closure}\Span\brace{k_x}_{x\in\supp\rho_\X}$ as $p$ goes to infinity.
In practice, it means considering the approximation mapping
$T_a: \R^p \to {\cal H}; c \to \sum_{i=1}^p c_i k_{X_i}$, and dealing with the
$p\times p$ matrices $T_a^\star \Sigma T_a$ and $T_a^\star L T_a$.
It should be noted that the computation of $T_a^\star L T_a$ requires to multiply a
$p\times nd$ matrix by its transpose.
Overall, training this method can be done with ${\cal O}(p^2 n d)$ basic operations,
and inference with this method can be done in ${\cal O}(p)$.
The saving cost of this approximate method is huge: without compromising the 
precision of our estimator, we went from $O(n^3 d^3)$ run time complexities to $O(\log(n)^2n^{1+2\gamma} d)$ computations, with $\gamma$ possibly very small. Similarly, the memory cost went from
$O(n^2 d^2)$ down to $O(nd + n^{2\gamma})$.\footnote{%
Our code is available online at \url{https://github.com/VivienCabannes/partial_labelling}.}

\section{Statistical analysis}
\label{sec:statistics}

In this section, we are interested in quantifying the risk of the learnt mapping
$\hat{g}$.
We study it through the generalization bound, which consists in obtaining a bound on
the averaged excess risk~$\E_{\textrm{data}}\|\hat{g} - g_\rho\|_{L^2}^2$.
In particular, we want to answer the following points.
\begin{enumerate}
\item How, and under which assumptions, Laplacian regularization boost learning?
\item How the excess of risk relates to the number of labelled and
  unlabelled data?
\end{enumerate}
In terms of priors, we want to leverage a low-density separation hypothesis.
In particular, we can suppose that when diffusing $g_\rho$ with $e^{-t{\cal L}}$ we stay close to $g_\rho$, 
or that $g_\rho$ is supported on a finite dimensional space of functions on which $\norm{{\cal L}^{1/2}g}$ (which measures the variation of $g$) is small.
Both those assumptions can be made formal by assuming the $g_\rho$ is
supported by the first eigenvectors of the diffusion operator ${\cal L}$. 
\begin{assumption}[Source condition]
  \label{ass:source}
  $g_\rho$ is supported on a finite dimensional space
  that is left stable by the diffusion operator ${\cal L}$.
  In other terms, if $(e_i) \in (L^2)^\N$ are the eigenvectors of ${\cal L}$,
  there exists $r\in\N$, such that $g_\rho \in \Span\brace{e_i}_{i\leq r}$.
\end{assumption}
We will also assume that the diffusion operator ${\cal L}$ can be well
approximated by the RKHS associated with $k$.
In practice, under mild assumptions, {\em c.f.} Appendix \ref{app:operators}, the eigenvectors of the Laplacian are known
to be regular, in particular to belong to $H^m$ for $m\in\N$ bigger than $d$. 
As such, many classical kernels would allow to verify the following assumption.
\begin{assumption}[Approximation condition]
  \label{ass:approximation}
  The eigenvectors $(e_i)$ of ${\cal L}$ belongs to the RKHS ${\cal H}$.
\end{assumption}
We add one technical assumptions regarding the eigenvalue decay of the operator
$\Sigma$ compared to the operator $L$, with $\preceq$ denoting the L\"owner
order ({\em i.e.}, for $A$ and $B$ symmetric, $A \preceq B$ if $B-A$ is positive semi-definite).
\begin{assumption}[Eigenvalue decay]
  \label{ass:decay}
  There exists $a \in [0, 1]$ and $c > 0$ such that 
  $L \preceq c \Sigma^\alpha$.
\end{assumption}
Note that, in our setting, $L$ is compact and bounded and 
Assumption \ref{ass:decay} is always satisfied with $a = 0$.
For translation-invariant kernel, such as Gaussian or Laplace
kernels, based on considerations linking eigenvalue decay of operators with
functional space capacities \citep{Steinwart2008}, under mild assumptions, we
can take $a > 1 - 2 / d$. 
We discuss all assumptions in more details in Appendix~\ref{app:operators}.

To study the consistency of our algorithms, we can reuse the extensive literature
on kernel ridge regression \citep{Caponnetto2007,Lin2020}. 
This literature body provides an extensive picture on convergence rates relying
on various filters and assumptions of capacity, {\em a.k.a} effective dimension, and source conditions.
Our setting is slightly different and showcases two specificities:
({\em i}) the eigenelements $(\lambda_{i, \mu}, \theta_{i,\mu})$ are dependent of $\mu$;
({\em ii}) the low-rank approximation in Algorithm \ref{alg:imp} is specific to settings with derivatives.
We end our exposition with the following convergence result, proven in Appendix
\ref{app:consistency}. 
Note that the dependency of $p$ in $n$ can be improved based on
subsampling techniques that leverage expressiveness of the different $(k_{X_i})$
\citep{Rudi2015}. 
Moreover, universal consistency results could also be provided when the RKHS is
dense in $H^1$, as well as convergence rates for other filters and laxer assumptions 
which we discuss in Appendix \ref{app:consistency} (in particular, the source condition can be relaxed by considering the biggest $q\in(0, 1]$ such that $g\in\ima{\cal L}^q$).

\begin{figure}[t]
  \centering
  \includegraphics{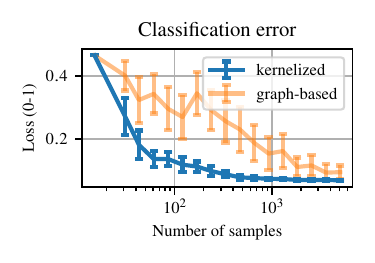}
  \includegraphics{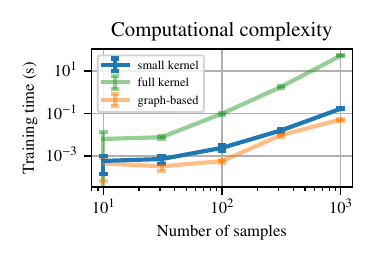}
  \vskip -0.1in
  \caption{(Left) Comparison between our kernelized Laplacian method
    (Tikhonov regularization version with $\lambda = 1$, $\mu_n = n^{-1}$, $p=50$)
    and graph-based Laplacian based on the same Gaussian kernel with
    bandwidth $\sigma_n = n^{-\frac{1}{d+4}}\log(n)$ as suggested by graph-based
    theoretical results \citep{Hein2007}.
    We report classification error as a function of the number of samples $n$.
    The error is averaged over 50 trials, with errorbars representing standard
    deviations.
    We fixed the ratio $n_\ell / n$ to one tenth,
    and generated the data according to two Gaussians in dimension $d=10$ with
    unit variance and whose centers are at distance $\delta = 3$ of each other
    (similar to the setting of \citep{Castelli1995,Lelarge2019}). 
    Our method discovers the structure of the data much faster than graph-based
    Laplacian (to get a 20\% error we need $40$ points, while graph-based need
    $700$). 
    (Right) Time to perform training with graph-based Laplacian in orange, with
    Algorithm \ref{alg:imp} in blue (with the specification of the left figure),
    and with the naive representation in $\R^{n(d+1)}$ of the empirical
    minimizer Eq. \eqref{eq:estimate} in green. When dealing with $1000$ points,
    our algorithm, as well as graph-based Laplacian, can be computed in about
    one tenth of a second on a 2 GHz processor, while the naive kernel
    implementation requires 10 seconds. We show in Appendix \ref{sec:experiment}
    that this cut in costs is not associated with a loss in performance. 
  }
  \label{fig:exp}
  \vskip -0.15in
\end{figure}

\begin{theorem}[Convergence rates]
  \label{thm:consistency}
  Under Assumptions \ref{ass:source}, \ref{ass:approximation} and \ref{ass:decay},
  for $n_\ell, n \in \N$, when considering the spectral filtering Algorithm \ref{alg:imp}
  with $\psi_{\lambda}: x \to (x + \lambda)^{-1}$, there exists a constant $C$
  independent of $n$, $n_\ell$, $\lambda$, $\mu$ and $p$ such that the estimate
  $\hat{g}_p$ defined in Algorithm \ref{alg:imp} verifies 
  \begin{equation}
    \E_{{\cal D}_n}\bracket{\norm{\hat{g}_p - g_\rho}_{L^2}^2} \leq 
    C \Big(\lambda^2 + \lambda\mu + \frac{\sigma^2_\ell n_\ell^{-1} + n_\ell^{-2} + n^{-1}}{\lambda\mu}
    + \frac{\log(p)}{p}
    + \lambda \frac{\log(p)^a}{p^a}\Big),
  \end{equation}
  with $\sigma_\ell^2$ is a variance parameter that relates to the variance of the
  variable $Y(I + \lambda {\cal L})^{-1}\delta_X$, inheriting its randomness from
  $(X, Y) \sim \rho$. 
  In particular, when the ratio $r = n_\ell / n$ is fixed, with the regularization scheme 
  $\lambda_n = \lambda_0 n^{-1/4}$, $\mu_n = \mu_0 n^{-1/4}$, for any $\lambda_0
  > 0$ and $\mu_0 > 0$, and the subsampling scheme $p_n = p_0 n^s \log(n)$ for
  any $p_0 > 0$ and with $s = \max(\sfrac{1}{2}, \sfrac{1}{4a})$, there exists a
  constant $C'$ independent of $n$ and $n_\ell$ such that the excess of risk
  verifies 
  \begin{equation}
    \E_{{\cal D}_n}\bracket{\norm{\hat{g}_p - g_\rho}_{L^2}^2} \leq 
    C' (n^{-1/2} + \sigma_\ell^2 n_\ell^{-1/2}).
  \end{equation}
\end{theorem}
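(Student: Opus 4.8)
My plan is to split the excess risk into a deterministic approximation (bias) error, a finite-sample estimation (variance) error, and a low-rank (Nyström) approximation error, each matching one feature of the estimator. Introducing the population Tikhonov solution $g_{\lambda,\mu} = S(\Sigma + \lambda L + \lambda\mu I)^{-1} S^\star g_\rho$ and its full empirical analogue $\hat g = S(\hat\Sigma + \lambda\hat L + \lambda\mu I)^{-1}\hat\theta$, I write $\hat g_p - g_\rho = (\hat g_p - \hat g) + (\hat g - g_{\lambda,\mu}) + (g_{\lambda,\mu} - g_\rho)$, and bound $\E_{{\cal D}_n}\norm{\hat g_p - g_\rho}_{L^2}^2 \le 3\,\E_{{\cal D}_n}(\norm{\hat g_p - \hat g}^2 + \norm{\hat g - g_{\lambda,\mu}}^2 + \norm{g_{\lambda,\mu} - g_\rho}^2)$. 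The three resulting pieces will produce, respectively, the two Nyström terms $\log(p)/p$ and $\lambda\log(p)^a/p^a$, the variance term weighted by $(\lambda\mu)^{-1}$, and the bias term $\lambda^2 + \lambda\mu$.

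For the bias $\norm{g_{\lambda,\mu} - g_\rho}_{L^2}$ I would invoke the source condition (Assumption~\ref{ass:source}) to reduce everything to the finite-dimensional eigenspace $\Span\{e_i\}_{i\le r}$ of ${\cal L}$, which lies in ${\cal H}$ by Assumption~\ref{ass:approximation} so that $S$ and the filter act on it unambiguously. Writing $g_\rho = \sum_{i\le r} c_i e_i$ with ${\cal L}e_i = \nu_i e_i$, the filter $\psi_\lambda$ multiplies each coordinate by $(\nu_i + \lambda)^{-1}$ relative to the unregularised problem, so the shrinkage residual is $\lambda\nu_i/(\nu_i + \lambda) = O(\lambda)$ uniformly over the finitely many indices $i\le r$, giving the $\lambda^2$ contribution. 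The additional $\lambda\mu$ arises because the generalized eigenelements $(\lambda_{i,\mu},\theta_{i,\mu})$ depend on $\mu$ (specificity (i)): replacing $L$ by $L+\mu I$ perturbs the filtered target by $O(\mu)$, which appears at order $\lambda\mu$ after the $\lambda$-weighting.

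For the variance $\E\norm{\hat g - g_{\lambda,\mu}}^2$ I would use the resolvent identity $\hat A^{-1} - A^{-1} = -\hat A^{-1}(\hat A - A)A^{-1}$ with $A = \Sigma + \lambda L + \lambda\mu I \succeq \lambda\mu I$, splitting $\hat A^{-1}\hat\theta - A^{-1}\theta$ into a target-fluctuation part $\hat A^{-1}(\hat\theta - \theta)$ and an operator-fluctuation part $-\hat A^{-1}(\hat A - A)A^{-1}\theta$, where $\theta = S^\star g_\rho$. The crucial saving is that $\Sigma \preceq A$ gives $\norm{S A^{-1/2}} \le 1$, hence $\norm{S\hat A^{-1}} \le (\lambda\mu)^{-1/2}$, so only a single factor $(\lambda\mu)^{-1}$ survives after squaring. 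Bernstein-type concentration for the empirical averages yields $\E\norm{\hat\Sigma - \Sigma}^2, \E\norm{\hat L - L}^2 = O(n^{-1})$ from the $n$ unlabelled points, while the centred average $\hat\theta - \theta = n_\ell^{-1}\sum_{i\le n_\ell}(Y_i k_{X_i} - \theta)$ contributes the label-noise variance $\sigma_\ell^2 n_\ell^{-1}$ together with a higher-order remainder of order $n_\ell^{-2}$; multiplying by $(\lambda\mu)^{-1}$ reproduces $(\sigma_\ell^2 n_\ell^{-1} + n_\ell^{-2} + n^{-1})/(\lambda\mu)$.

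The Nyström step $\E\norm{\hat g_p - \hat g}^2$ is where I expect the main obstacle, and where specificity (ii) enters. The subsample spans only ${\cal H}_p = \Span\{k_{X_i}\}_{i\le p}$, i.e.\ the function-evaluation directions, yet the restricted problem must also approximate the range of the derivative operator $\hat L$, whose natural features are the $\partial_j k_{X_i}$ rather than the subsampled $k_{X_i}$. This is exactly the role of Assumption~\ref{ass:decay}, $L \preceq c\,\Sigma^a$: it dominates the derivative directions by a power of the covariance, so that the leverage-score subsampling analysis of \citep{Rudi2015} controls the $\Sigma$-part of the projection error at rate $\log(p)/p$ and the $\lambda$-weighted $L$-part at the slower, $a$-dependent rate $\lambda\log(p)^a/p^a$. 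The difficulty is that the available Nyström bounds are stated for an operator whose features coincide with the subsampled vectors, which fails for $\hat L$; reconciling the projection error of $\hat L$ with a subsample drawn from the $k_{X_i}$ alone, while keeping the regularised inverse controlled through the $\lambda\mu$ floor, is the technical heart of the argument. Assembling the three bounds yields the first displayed inequality, and substituting $\lambda_n = \lambda_0 n^{-1/4}$, $\mu_n = \mu_0 n^{-1/4}$ and $p_n = p_0 n^s\log(n)$ with $s = \max(\tfrac12, \tfrac{1}{4a})$ equalises every term at $O(n^{-1/2})$ (the choice of $s$ being forced by balancing $\log(p)/p \asymp n^{-s}$ and $\lambda\log(p)^a/p^a \asymp n^{-1/4 - sa}$ against $n^{-1/2}$), giving the stated rate.
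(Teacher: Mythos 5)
Your plan has the same architecture as the paper's proof: the paper's own sketch (Table \ref{tab:step}) lists exactly your chain $\hat g_p \to \hat g \to g_{\lambda,\mu} \to g_\lambda \to g_\rho$, the two bias terms are handled with Assumptions \ref{ass:source} and \ref{ass:approximation} as you describe (yielding $\lambda^2 \norm{{\cal L}g_\rho}_{L^2}^2$ and $\lambda\mu\, c_a^2 \norm{g_\rho}_{L^2}^2$), the fluctuations are whitened by $(C+\lambda\mu)^{-1/2}$ with $C = \Sigma + \lambda L$ to extract the $(\lambda\mu)^{-1}$ factor, Bernstein inequalities in Hilbert space and for self-adjoint operators produce the $\sigma_\ell^2 n_\ell^{-1}$, $n_\ell^{-2}$ and $n^{-1}$ terms, and your parameter balancing for the second claim is the paper's. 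Two technical assertions need repair, though both are standard: (i) $\norm{S\hat A^{-1}}_{\op}\le(\lambda\mu)^{-1/2}$ does not follow from $\Sigma \preceq A$, which only gives $\norm{SA^{-1/2}}_{\op}\le 1$ together with $\hat\Sigma \preceq \hat A$; one must work on the event $\norm{(C+\lambda\mu)^{-1/2}(\hat C - C)(C+\lambda\mu)^{-1/2}}_{\op}\le 1/2$, on which $\hat A \succeq \tfrac12 A$; and (ii) since all resulting bounds are conditional on that event, the expectation has to be closed on the complementary event — the paper does this by clipping $\hat g_p$ to $[-c_\Y, c_\Y]$, so that the exponentially small bad-event probability multiplies the bounded loss $4c_\Y^2$.

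The genuine gap is the step you yourself flag as the ``technical heart'' and then leave open: the low-rank term. Identifying the obstacle and pointing at Assumption \ref{ass:decay} is not the same as resolving it, and the resolution is the one non-standard idea in the paper's proof. It has two parts. First, rather than bounding $\norm{\hat g_p - \hat g}_{L^2}$ — where the random vector $\hat\theta$ would force you to control quantities like $\norm{(\hat C + \lambda\mu)^{-1}\hat\theta}_{\cal H}$ — the paper isolates the labelled-data fluctuation in a single ``vector term'', so that the projection error only ever multiplies the deterministic quantity $\norm{(C+\lambda\mu)^{-1}\theta_\rho}_{\cal H} \le \norm{g_\lambda}_{\cal H} \le c_a\norm{g_\rho}_{L^2}$; the whole Nystr\"om contribution then reduces to $\norm{C^{1/2}(I-P)}_{\op}$. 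Second, the key lemma converting derivative directions into covariance directions: since $(I-P)^a = I-P$ and $\norm{A^sB^s}_{\op}\le\norm{AB}_{\op}^s$ for $s\in[0,1]$ and $A,B$ positive self-adjoint,
\[
  \norm{C^{1/2}(I-P)}_{\op}^2
  \le \norm{\Sigma^{1/2}(I-P)}_{\op}^2 + c\,\lambda\,\norm{\Sigma^{a/2}(I-P)}_{\op}^2
  \le \norm{\Sigma^{1/2}(I-P)}_{\op}^2 + c\,\lambda\,\norm{\Sigma^{1/2}(I-P)}_{\op}^{2a},
\]
where the first inequality uses $L \preceq c\,\Sigma^a$ from Assumption \ref{ass:decay}. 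This is precisely what lets the Nystr\"om analysis of \citep{Rudi2015} — which, as you note, only knows about $\Sigma$ and the uniformly subsampled $k_{X_i}$ — control the $\hat L$-part of the problem: the bound $\E\bracket{\norm{\Sigma^{1/2}(I-P)}_{\op}^2} \lesssim \kappa^2\log(p)/p$ then yields both terms $\log(p)/p$ and $\lambda\log(p)^a/p^a$ at once. Without this lemma (or an equivalent device), your decomposition cannot be completed as stated.
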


Theorem \ref{thm:consistency} answers the two questions asked at the beginning of this section. 
In particular, it characterizes the dependency of the need for labelled data 
to a variance parameter linked with the diffusion of observations $(X_i, Y_i)$ 
based on the density $\rho_\X$ through the operator ${\cal L}$.
Intuitively, if the index set $I \subset \{1, 2, \cdots, n\}$ of data $(X_i)_{i\in I}$ we labelled does not change the profile of the diffusion solution $\hat{g}$, then we do not need that much labelled data -- as this is the case on Figure \ref{fig:intro}.

Finally, Theorem~\ref{thm:consistency} is remarkable in that it exhibits no dependency to
the dimension of $\X$ in the power of $n$ and $n_\ell$.
This contrasts with graph-based Laplacian methods that do not scale well with
the input space dimensionality \citep{Bengio2006,Hein2007}.
Indeed, Figure \ref{fig:exp} shows the superiority of our method over
graph-based Laplacian in dimension $d=10$, with a mixture of Gaussians.
We provide details as well as additional experiments in Appendix
\ref{sec:experiment}.

\section{Conclusion}

Diffusing information or enforcing regularity through penalties on derivatives are
natural ideas to tackle many machine learning problems.
Those ideas can be captured informally with graph-based techniques and finite
element differences, or captured more formally with the diffusion operator we introduced in this work.
This formalization allowed us to shed lights on Laplacian regularization
techniques based on statistical learning considerations.
In order to make our method usable in practice, we provided strong
computational guidelines to cut down prohibitive cost associated with a
naive implementation of our methods.
In particular, we were able to develop computationally efficient semi-supervised techniques 
that do not suffer from the curse of dimensionality.

This work paves the way to many extensions beyond semi-supervised learning. 
For example, in Appendix \ref{sec:extension}, we describe its usefulness to the partial supervised learning problem, 
where minimizing the Dirichlet energy provide a learning principle, 
in order to bypass the restrictive non-ambiguity assumption usually made in this setup
\citep{Cour2011,Liu2014,Cabannes2020,Cabannes2021}.
Moreover, in the context of active learning, retaking the strategy of \citet{Karzand2020},
this energy provides a computationally-effective, theoretically-grounded, 
data-dependent score to select the next point to query.
As such, follow-ups would be of interest to see how this introductory theoretical paper
makes its way into the world of concrete applications.



\begin{ack}
  This work was funded in part by the French government
  under management of Agence Nationale de la Recherche as part of the
  ``Investissements d'avenir'' program, reference ANR-19-P3IA-0001 (PRAIRIE 3IA
  Institute). We also acknowledge support of the European Research Council
  (grants SEQUOIA 724063 and REAL 94790). 
\end{ack}

\bibliography{main}

\appendix

\clearpage

\subsection*{Ethical considerations}
This work aims at advancing our understanding of weakly supervised learning.
Weakly supervised learning enrolls in the quest of an automated artificial
intelligence, free from the need of human supervision.
Automation, which is at the basis of computer science \citep{Turing1950},
is known to increase productivity at a reduced human labor cost,
and is associated with several political/societal issues.
In term of concrete applications,
reducing the need for annotations is especially useful when humans
reproduce biases when annotating data, or when the lack of output annotation
restricts the outreach of a method ({\em e.g.}, learning to translate languages
by collecting input/output pairs based on books already translated by humans
can hardly be applied to languages with few written resources).

\section{Extensions: Least-square surrogate and partially supervised learning}
\label{sec:extension}

In this section, we first show how our work can be extended to generic
semi-supervised learning problem, beyond real-valued regression. This first
extension is based on the least-square surrogate introduced by
\citet{Ciliberto2020} for structured prediction problems.
We later show how our work can be extended to generic partially-supervised
learning. This second extension is based on the work of \citet{Cabannes2020}.

\subsection{Structured prediction and least-square surrogate}
\label{sec:structured_prediction}
Until now, we have considered the least-square problem with $Y \in \R$.
Indeed, our work can be extended easily to a wide class of learning problem.
Consider $\Y$ an output space, $\ell:\Y\times\Y\to\R$ a loss function, and keep
$\X\subset \R^d$ and $\rho\in\prob{\X\times\Y}$. Suppose that we want to retrieve
\begin{equation}
  \label{eq:structured_prediction}
  f^* = \argmin_{f:\X\to\Y} {\cal R}(f),\qquad\text{with}\qquad
  {\cal R}(f) = \E_{(X, Y)\sim \rho} \bracket{\ell(f(X), Y)}.
\end{equation}
\citet{Ciliberto2020} showed that
as soon as $\ell$ can be decomposed through two mappings
$\phi:\Y\to{\cal H}_\Y$ and $\psi:\Y\to{\cal H}_\Y$
with ${\cal H}_\Y$ a Hilbert space as
$\ell(y, z) = \scap{\phi(y)}{\psi(z)}_{{\cal H}_\Y}$,
it is possible to leverage the least-square regression by considering the
surrogate problem
\begin{equation}
  \label{eq:surrogate}
  g^* \in \argmin_{g:\X\to{\cal H}_\Y}
  \E_{(X, Y)\sim\rho}\bracket{\norm{g(X) - \phi(Y)}^2_{{\cal H}_\Y}}.
\end{equation}
This surrogate problem relates to the original one through the decoding $d$ that
relates a surrogate estimate $g:\X\to{\cal H}_\Y$ to an estimate of the original
problem $f:\X\to\Y$ as $f = d(g)$ defined through, for $x\in\supp\rho_\X$,
\begin{equation}
  \label{eq:decoding}
  f(x) = \argmin_{z\in\Y} \scap{\psi(z)}{g(x)}_{{\cal H}_\Y}.
\end{equation}
In the real-valued regression case, presented precedently,
our estimates for $g_n$ can all be written as
$g_n(x) = \sum_{i=1}^{n_\ell} \beta_i(x) Y_i$, where $\beta_i(x)$ is a
function of the $(X_i)_{i\leq n}$, involving the kernel $k$ and its
derivatives.
Those estimates can be cast to vector-valued regression by considering
coordinates-wise regression\footnote{%
  To parameterize functions $g$ from $\X$ to ${\cal H}_\Y$,
  we can parameterized independently each coordinates $\scap{g}{e_i}_{{\cal H}_\Y}$,
  for $(e_i)$ a basis of ${\cal H}_\Y$, by the space ${\cal G}$
  -- note that it is possible to generalize real-valued kernel to
  parameterize coordinates in a joint fashion \citep{Caponnetto2007}. 
  The coordinate-wise parameterization corresponds to the tensorization
  ${\cal H}' = {\cal H}_\Y \otimes {\cal H}$ and to the parametric space
  ${\cal G}' = \brace{x\to \Theta k_x \midvert \Theta \in {\cal H}'}$ of functions
  from $\X$ to ${\cal H}_\Y$.
  ${\cal G}'$ naturally inherits of the Hilbertian structure of ${\cal H}'$, itself
  inherited from the structure of ${\cal H}$ and ${\cal H}_\Y$.
}, which leads to
$g_n(x) = \sum_{i=1}^{n_\ell}\beta_i(x) \phi(Y_i)$, and to the original estimates,
for any $x\in\supp\rho_\X$,
\begin{equation}
  \label{eq:loss_trick}
  f_n(x) \in \argmin_{z\in{\cal Z}} \sum_{i=1}^{n_\ell} \beta_i(x) \ell(z, Y_i).
\end{equation}
The behavior of $f_n$ being independent of the decomposition $(\phi, \psi)$ of
$\ell$ was referred to as the loss trick.
In particular, \citet{Ciliberto2020} showed that convergence rates derived between
$\norm{g_n - g^*}_{L^2}$ does not change if we consider $g:\X\to\R$ or
$g:\X\to{\cal H}_\Y$ and that those rates can be cast directly as convergence
rates between ${\cal R}(f_n)$ and ${\cal R}(f^*)$ with $f_n = d(g_n)$ defined by
Eq.\eqref{eq:decoding}.
Moreover, when $\Y$ is a discrete output space, it is possible to get much
better generalization bound on ${\cal R}(f_n) - {\cal R}^*$by introducing
geometrical considerations regarding $g^*$ and decision frontier between classes
\citep{Cabannes2021b}.

\begin{example}[Binary classification]
  \label{ex:binary}
  This framework aims at generalizing well known surrogate considerations in the
  case of the binary classification. 
  Binary classification corresponds to
  $\Y = \brace{-1, 1}$, $\ell$ the $0-1$ loss.
  In this setting, ${\cal H}_\Y = \R$,
  $\phi: \Y\to\R; y \to y$, and $\psi = -\phi$. This definition verifies
  $\ell(y, z) = .5 - .5 \phi(y)^\top \phi(z) \simeq \phi(y)^\top \psi(z)$.
  This corresponds to the usual least-square surrogate, which is
  \( {\cal R}_S(g) = \E[\norm{g(X) - Y}^2]\),
  \(g(x) = \E\bracket{Y\midvert X=x}\)
  and $f = \sign g$.
\end{example}

\paragraph{Beyond least-squares.}
  Considering a least-square surrogate assumes that retrieving $g^*$ 
  \eqref{eq:surrogate} is the way to solve the original problem
  \eqref{eq:structured_prediction} and that the
  low-density separation hypothesis can be expressed as Assumption
  \ref{ass:source} being verified by $g^*$.
  We would like to point out that the low-density separation could be expressed
  under a much weaker form, which is that there exists 
  $g$ such that $f^* = d(g)$ \eqref{eq:decoding}
  and $g$ verifies Assumption \ref{ass:source}. 
  In particular, the cluster assumption \citep{Rigollet2007} could be understood
  as assuming that $g = \phi(f^*)$, the trivial embedding of $f^*$ in ${\cal
    H}_\Y$, is constant on clusters, with means that $g$ belongs to the kernel
  of the Laplacian operator ${\cal L}$.
  Yet, $g^*:x\to\E[\phi(Y)\vert X=x]$, which depends on the labelling noise,
  could be really non-smooth, even under the cluster assumption.
  Those considerations are related to an open problem in machine learning,
  which is that we do not know what is the best statistical way (and the best
  surrogate problem) to solve the fully supervised binary classification problem \citep[see {\em e.g.}][]{Zhang2020}.
  However, many points introduced in the work could be retaken with other
  surrogate, could it be SVM (which leads to $g^* = \phi(f^*)$, with $g^*$ minimizing the Hinge loss), 
  softmax regression (used in deep learning) 
  or others. 

\subsection{Partially supervised learning}
  Partial supervision is a popular instance of weak supervision, which generalizes
  semi-supervised learning. It has been
  known under the name of partial labelling
  \citep{Cour2011}, superset learning \citep{Liu2014}, as well as learning with partial label \citep{Grandvalet2002}, with partial annotation \citep{Lou2012},
  with candidate labeling set \citep{Luo2010} or with multiple label \citep{Jin2002}.
  It encompasses many problems such as
  ``classification with partial labels'' \citep{Nguyen2008,Cour2011},
  ``multilabelling with missing labels'' \citep{Yu2014},
  ``ranking with partial ordering'' \citep{Hullermeier2008}, 
  ``regression with censored data'' \citep{Tobin1958},
  ``segmentation with pixel annotation'' \citep{Verbeek2007,Papandreou2015},
  as well as instances of ``action retrieval'', especially on instructional videos
  \citep{Alayrac2016,Miech2019}.
  
  It consists, for a given input $x$, in not observing its label $y\in\Y$, but
  observing a set of potential labels $s\in 2^\Y$ that contains the labels
  ($y\in s$). Typically, if $\Y$ is the space $\Sfrak_m$ of orderings between $m$
  items (\emph{e.g.} movies on a streaming website),
  for a given input $x$ (\emph{e.g.} some feature vectors characterizing a user)
  $s$ might be specified by a partial ordering that the true label $y$
  should satisfy
  (\emph{e.g.} the user prefers romantic movies over action movies).

  In this setting, it is natural to create consensus between the different sets
  giving information on $(y\vert x)$, which has been formalized mathematically by
  the infimum loss $(z,s)\in\Y\times 2^\Y \to \inf_{y\in s} \ell(z, y)\in\R$
  for $\ell:\Y\times\Y\to\R$ a specified loss on the underlying fully supervised
  learning problem. 
  This leads, for $\tau \in \prob{\X\times 2^\Y}$ encoding the distribution
  generating samples $(X, S)$, to the formulation
  \(
    f^* \in {\cal F} = \argmin_{f:\X\to\Y} \E_{(X,
      S)\sim\tau}\bracket{\inf_{Y\in S}\ell(f(X), Y)}.
  \)
  To study this problem, a non-ambiguity assumption is usually
  made \citep{Cour2011,Luo2010,Liu2014,Cabannes2020,Cabannes2021}.
  This is a very strong assumption to ensure that ${\cal F}$ is,
  in essence, a singleton.
  Highly adequate to this setting, the Laplacian regularization allows to relax
  this assumption, assuming that ${\cal F}$ can be big, but that we can
  discriminate between function in ${\cal F}$ by looking for the smoothest one
  in the sense defined by the Laplacian penalty.
  Moreover, the loss trick \eqref{eq:loss_trick} allows to endow, in a
  off-the-shelf fashion, the recent work of \citet{Cabannes2020,Cabannes2021}
  on the partial supervised learning problem with our considerations on
  Laplacian regularization.

\section{Experiments}
\label{sec:experiment}

\subsection{Low-rank approximation}

Cutting computation cost thanks to low-rank approximation, as we did by going
from the naive exact empirical risk minimizer $\hat{g}$ Eq. \eqref{eq:estimate}
to the smart implementation $\hat{g}_p$ Algorithm \ref{alg:imp},
is associated with a trade-off between computational versus statistical performance.
This trade-off can be studied theoretically thanks to Theorem
\ref{thm:consistency}, which shows that under mild assumptions,
considering $p = n^{1/2}\log(n)$ does not lead to any loss in performance, in
the sense that the convergence rates in $n$, the number of samples, are only
changed by a constant factor.
We show on Figure \ref{fig:cut_loss} that in the setting of Figure
\ref{fig:exp}, our low-rank approximation is not associated with a loss in
performance. Actually low-rank approximation can even be beneficial as it tends
to lower the risk for overfitting as discussed by \citet{Rudi2015}.

\begin{figure*}[t]
  \centering
  \includegraphics{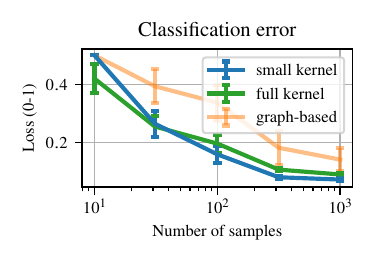}
  \includegraphics{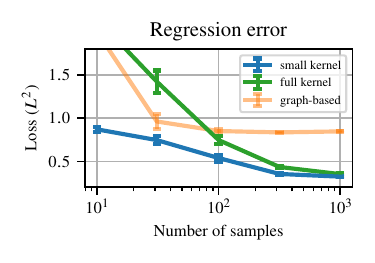}
  \vskip -0.1in
  \caption{
    Cut in computation cost are not associated with a loss in performance.
    The estimate $\hat{g}_p$ Algorithm \ref{alg:imp} (in blue), based on
    low-rank approximation that cut computation cuts performs as well as the
    exact computation of $\hat{g}$ Eq.~\eqref{eq:estimate}.
    (Left) Classification error in the setting of Figure \ref{fig:exp}.
    (Right) Regression error in the same setting. The fact that the error of the
    graph-based method stalls around one, is due to the amplitude of the estimate 
    being very small, which is coherent with behaviors described in
    \citep{Nadler2009}.
  }
  \label{fig:cut_loss}
\end{figure*}




\subsection{Comparison with graph-based Laplacian}

One the main goal of this paper is to make people drop graph-based Laplacian
methods and adopt our ``kernelized'' technique.
As such, we would like to discuss in more detail our comparison with graph-based
Laplacian. In particular, we will discuss how and why we choose the
hyperparameters and the setting of Figure \ref{fig:exp}.

\begin{figure*}[t]
  \centering
  \includegraphics{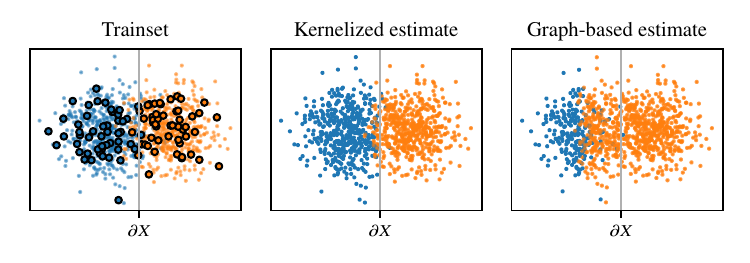}
  \vskip -0.1in
  \caption{
    Setting of Figure \ref{fig:exp} with $n = 1000$.
    (Left) Training set. We represent a cut of $\X \subset \R^d$ according to
    the two first coordinates
    $\brace{(x_1, x_2) \midvert (x_1, x_2, \cdots, x_d)\in \X}$.
    We have two Gaussians distribution with unit variance, one centered at $x =
    (0, 0, \cdots 0)$
    and the other one centered at $x = (3, 0, \cdots, 0)$.
    One of the Gaussian distribution is associated to the blue class, the other
    one with the orange class.
    We consider $n = 1000$ unlabelled points, represented by small points,
    colored according to their classes, and $n_\ell = 100$ labelled points,
    represented in colour with black edges.
    (Middle) Reconstruction with our kernelized Laplacian methods. Our method
    uncovers correctly the structure of the problem, allowing to make a quite
    optimal reconstruction. The optimal decision frontier being illustrated by
    the grey line $\partial X$.
    (Right) Reconstruction with graph-Laplacian. The graph-Laplacian diffuses
    information too far away from what it should, leading to many incorrect
    guesses.
  }
  \label{fig:exp_setting}
\end{figure*}

The setting of Figure \ref{fig:exp} is the one of Figure \ref{fig:exp_setting},
we considered two Gaussians with unit variance and whose centers are at distance
$\delta = 3$ of each other.
We chose Gaussians distributions as it is a well-understood setting.
We chose $\delta = 3$ so that there is an mild overlap between the two
distributions.
For the bandwidth parameter, we considered
$\sigma_n = Cn^{-\frac{1}{d+4}}\log(n)$ as this is known to be the optimal
bandwidth for graph Laplacian \citep{Hein2007}.
We chose $C = 1$ as this leads to $\sigma_n$ of the order of $\delta$.
We chose $\lambda = 1$ to enforce Laplacian regularization and $\mu_n = 1/n$, as
this is a classical regularization parameter in RKHS.
We did not cross-validate parameters in order to be fair with graph-Laplacian
that do not have as much parameters as our kernel method.
We compute the error in a transductive setting, retaking the exact problem and
algorithm of \citet{Zhu2003}.
We choose $d = 10$, as we know that this is a good dimension parameter in order
to illustrate the curse of dimensionality phenomenon without needing too much
data.\footnote{Note that our consistency result Theorem \ref{thm:consistency}
  describes a convergence regime that applies to a vast class of problems.
  Such a regime usually takes place after a certain number of data (depending
  on the value of the constant $C$).
  Before entering this regime, describing the error of our
  algorithm would require more precise analysis specific to each problem
  instance, eventually involving tools from random matrix theory.
}

\subsection{Usefulness of Laplacian regularization}

It is natural to ask about the relevance of Laplacian regularization.
To give convergence results, we have used Assumptions \ref{ass:source} and
\ref{ass:approximation}, which imply that $g^*$ belongs to the RKHS ${\cal H}$,
and we got convergence rates in $n_l^{1/2}$, which is not
better than the rates we could get with pure kernel ridge regression.
In particular, our algorithm can be split between an unsupervised part that
learn the penalty $\norm{{\cal L}^{1/2}g}_{L^2(\rho_\X)}^2$ and a supervised
part, that solve the problem of estimating $g_\lambda$ from few labels
$(X_i, Y_i)$ given the penalty associated to ${\cal L}$.
But the same method can be used for pure kernel ridge regression:
unsupervised data could be leveraged to learn the covariance matrix $\Sigma$
\eqref{eq:block_op}, and supervised data could be used to get
$\widehat{S^\star g_\rho}$ to converge towards $S^\star g_\rho$.
The same analysis would yield the same type of convergence rates.
Yet the parameter $\sigma_\ell$ appearing in Theorem \ref{thm:consistency} would
not be linked with the variance of
$Y(I + \lambda{\cal L} + \lambda\mu K^{-1})^{-1}\delta_X$
but with the variance of $Y(I + \mu K^{-1})^{-1}\delta_X$.
This is a key fact, the geometry of the covariance operator $\Sigma$ is not
supposed to be that relevant to the problem, while the one of $L$ is.
We illustrate this fact on Figure \ref{fig:use}.

\begin{figure*}[t]
  \centering
  \includegraphics{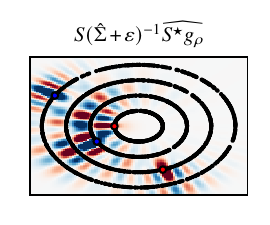}
  \includegraphics{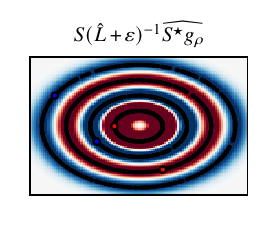}
  \vskip -0.2in
  \caption{
    Usefulness of Laplacian regularization.
    We illustrate the reconstruction based on our spectral filtering techniques
    based on the sole use of the covariance matrix $\Sigma$ on the left, and on the
    sole use of the Laplacian matrix $L$ on the right. We see that the covariance
    matrix does not capture the geometry of the problem, which contrasts with
    the use of Laplacian regularization.
  }
  \label{fig:use}
  \vskip -0.1in
\end{figure*}

\section{Central Operators}
\label{app:operators}

The paper makes an intensive use of operators. This section aims at providing
details and intuitions on those operators, in order to help the reader.
In particular, we discuss on Assumptions \ref{ass:source} and
\ref{ass:approximation} and we prove the equality in Eq.~\eqref{eq:block_op}.

\begin{table}[h]
  \caption{Notations}
  \label{tab:notations}
  \vskip 0.1in
  \centering
  \begin{tabular}{cc}
    \toprule
    Symbol & Description \\
    \midrule 
    $(X_i)_{i\leq n}$ & $n$ samples of input data\\
    $(Y_i)_{i\leq n_l}$ & $n_l$ labels\\
    $\rho$ & Distribution of $(X, Y)$\\
    $g_\rho$ & Function to learn \eqref{eq:least_square}\\
    $\lambda, \mu$ & Regularization parameters\\
    $g_{\lambda}, g_{\lambda,\mu}$ & Biased estimates (\ref{eq:laplacian_tikhonov}, \ref{eq:tikhonov})\\
    $\hat{g}$ & Empirical estimate \eqref{eq:estimate}\\
    $\hat{g}_p$ & Empirical estimate with low-rank approximation (Algo. \ref{alg:imp})\\
    ${\cal H}$ & Reproducing kernel Hilbert space\\
    $k$ & Reproducing kernel\\
    $S$ & Embedding of ${\cal H}$ in $L^2$\\
    $S^\star$ & Adjoint of $S$, operating from $L^2$ to ${\cal H}$\\
    $\Sigma = S^\star S$ & Covariance operator on ${\cal H}$\\
    $K = SS^\star$ & Equivalent of $\Sigma$ on $L^2$\\
    ${\cal L}$ & Diffusion operator ({\em a.k.a} Laplacian)\\
    $L = S^\star{\cal L}S$ & Restriction of the diffusion operator to ${\cal H}$\\
    $g$ & Generic element in $L^2$\\
    $\theta$ & Generic element in ${\cal H}$\\
    $\lambda_i$ & Generic eigen value \\
    $e_i$ & Generic eigen vector in $L^2$ \\
    \bottomrule
  \end{tabular}
\end{table}

\subsection{The diffusion operator}
In this subsection, we discuss on the diffusion operator, and recall its basic properties.

The diffusion operator is a well-known operator in the realm of partial
differential equation. Let us assume that $\rho_\X$ admit a smooth density
$\rho_\X(dx) = p(x) dx$, say $p \in {\cal C}^2(\R^d)$, 
that cancels outside a domain $\Omega \subset \R^d$.
Then the diffusion operator ${\cal L}$ can be explicitly written, for $g$ twice differentiable, as
\[
  {\cal L}g(x) = - \Delta g(x) + \frac{1}{p(x)} \scap{\nabla p(x)}{\nabla g(x)}.
\]
This follows from the fact that for $f$ once and $g$
twice differentiable, using Stokes theorem,
\begin{align*}
  \scap{f}{{\cal L} g}_{L^2(\rho_\X)}
  &= \scap{\nabla f}{\nabla g}_{L^2(\rho_\X)}
  = \scap{\nabla f}{ p \nabla g}_{L^2(\diff x)}
  \\&= \int_\X \Div(f p \nabla g ) \diff x -
  \scap{f}{\Div(p\nabla g)}_{L^2(\diff x)}
  = -\scap{f}{\Div(p \nabla g)}_{L^2(\diff x)}
  \\&= -\scap{f}{p^{-1}\Div (p \nabla g)}_{L^2(\rho_\X)}
  = -\scap{f}{\Div \nabla g + p^{-1} (\nabla p). \nabla g}_{L^2(\rho_\X)}.
\end{align*}
Note that when the distribution is uniform on $\Omega$, the diffusion operator
is exactly the opposite usual Laplacian operator $\Delta$.
As for the Laplacian case, it can be shown that under mild assumption on $p$,
whose smoothness properties directly translates to the smoothness properties of
the boundary of $\Omega$, that the diffusion operator ${\cal L}$ has a compact
resolvent (that is, for $\lambda \notin \spec({\cal L})$,
$({\cal L} + \lambda I)^{-1}$ is compact).
This is a standard result implied by a standard version of the famous
Rellich-Kondrachov compactness embedding theorem: $H^2(\Omega)$ is compactly
injected in $L^2(\Omega)$ whenever $\Omega$ is a bounded open with $C^2$-boundaries.

In such a setting, we can consider the eigenvalue decomposition of ${\cal L}^{-1}$,
that is, $(\lambda_i, e_i) \in (\R_+ \times L^2)^\N$, with $(e_i)_{i\in\N}$ an
orthonormal basis of $L^2$ and $(e_i)_{i\leq \dim\ker{\cal L}}$ generating the
null space of ${\cal L}$, with the convention $\lambda_i = M$ for $i \leq \dim\ker {\cal L}$, with $M$ an abstraction representing $+\infty$,
and $(\lambda_i)$ decreasing towards zero afterwards.
This decomposition reads
\begin{equation}
  \label{eq:L}
  {\cal L}^{-1} = \sum_{i \in \N} \lambda_i e_i \otimes e_i.
\end{equation}
Note that the fact that all the $(\lambda_i)$ are positive, is due to the fact
that ${\cal L}^{-1}$ is the inverse of a positive self-adjoint operator.
As a consequence, the diffusion operator has discrete spectrum, and can be
written as
\begin{equation}
  \label{eq:evd}
  {\cal L} = \sum_{i\in\N} \lambda_i^{-1} e_i \otimes e_i.
\end{equation}
In such a setting, the kernel-free Tikhonov regularization Eq. \eqref{eq:laplacian_tikhonov} reads
\begin{equation}
  \label{eq:decomposition_zero}
  g_\lambda = \sum_{i\in\N} \psi(\lambda_i)\scap{g_\rho}{\lambda_i^{1/2} e_i}_{L^2}
  \lambda_i^{1/2} e_i,
\end{equation}
with $\psi:x \to (x+\lambda)^{-1}$, and the convention $M\psi(M) = 1$.

\subsection{Regularity of the eigen vectors of the diffusion operator}
In this subsection, we discuss on the regularity assumed in Assumption \ref{ass:approximation}.

Introducing the kernel $k$ and its associated RKHS ${\cal H}$ is useful when
the eigen vectors of ${\cal L}$ can be well approximated by functions in ${\cal H}$.
In applications, people tends to go for kernels that are translation-invariant,
which implied that the RKHS ${\cal H}$ is made of smooth functions, could it be
analytical functions (for the Gaussian kernel) or functions in $H^m$ (for
Sobolev kernels).
As a consequence, we should investigate on the regularity of those eigen vectors.
Indeed, if $\rho$ derives from a Gibbs potential, that is $\rho(\diff x) = e^{-V(x)} \diff x$,
the eigen vectors of ${\cal L}$ can be shown to inherit from the
smoothness of the potential $V$ \citep{PillaudVivien2020b}. For example if $V$
belongs to $H^m$, and $H^m \subset {\cal H}$, we expect $(e_i)$ to belongs to
${\cal H}$, thus verifying Assumption \ref{ass:approximation}.

\paragraph{Counter-example and beyond Assumption \ref{ass:approximation}.}
Note that if $\rho$ has several connected components of non-empty interiors, the
null space of ${\cal L}$ is made of functions that are constants on each
connected components of $\supp\rho_\X$. Those functions are not analytic.
In such a setting, the Gaussian kernel is not sufficient for Assumption
\ref{ass:approximation} to hold, and one should favor
kernel associated with richer functional space such as the Laplace kernel or
the neural tangent kernel \citep{Chen2021}.
However, as illustrated by Figure \ref{fig:unsupervised}, $e_i$ not belonging to
${\cal H}$ does not mean that $e_i$ can not be well approximated by ${\cal H}$.
Indeed it is well known that the approximation power of ${\cal H}$ for $e_i$ can
be measure in the biggest power $p$ such that $e_i \in \ima K^p$
\citep{Caponnetto2007}, where $K = SS^*$.
Assumption \ref{ass:approximation} corresponds to $p=1/2$, but it should be seen
as a specific instance of more generic approximation conditions.

\paragraph{Handling constants in RKHS.}
Finally, note also that many RKHS do not contain constant functions,
and therefore might not contain the constant function $e_0$
(although we are only looking for equality in the support of $\rho_\X$),
however this specific point with $e_0$ can easily be circumvent
either by assuming that $g_\rho$ has zero mean, either by centering the
covariance matrices $\Sigma$ and $\hat\Sigma$ \citep{PillaudVivien2020b}.
This relates with the usual technique for SVM consisting in adding a unpenalized bias \citep{Steinwart2008}.

\subsection{Low-density separation}
In this subsection, we discuss on how Assumption \ref{ass:source}
relates to the idea of low-density separation.

\paragraph{Low-variation intuition.}
The low-density separation supposes that the variations of $g^*$ take place in
region with low-density, so that $\norm{{\cal L}^{1/2}g^*} / \norm{g^*}$ is small.
As such, using Courant-Fischer principle, Assumption \ref{ass:source} can be
reformulated as $g^*$ belonging to the space
\[
  \Span\brace{e_i}_{i\leq r} =
  \argmin_{\substack{{\cal F}\subset L^2;\\ \dim{\cal F} = r}}
  \max_{g\in{\cal F}} \frac{\norm{{\cal L}^{-1/2}f}_{L^2}^2}{\norm{f}_{L^2}^2}.
\]
In other terms, Assumption \ref{ass:source} can be restated as
$g^*$ belonging to a finite dimensionnal space that minimizes a measure of
variation given by the Dirichlet energy.

To tell the story differently, suppose that we are in a classification setting,
{\em i.e.} $Y\in\brace{-1, 1}$, and that the $\supp\rho_\X$ is connected.
Then we know that the null space of ${\cal L}$ is
made of constant functions. Then the first eigen vector $e_2$ of ${\cal L}$ is a
function that is orthogonal to constants.
Hence $e_2$ is a function that changes its sign and that is ``balanced'' in the sense
that $\E[e_2] = 0$
-- {\em i.e.} if $e_2(x) = \E_{\mu}[Y\,\vert\, X=x]$ for some measure $\mu$,
we have $\E_\mu[Y] = 0$, meaning that classes are ``balanced''.
Moreover, in order to minimize $\norm{{\cal L}^{1/2}e_2}$, the variations of $e_2$
should take place in low-density regions of $\X$.

\paragraph{Diffusion intuition.}
Finally, as ${\cal L}$ is a diffusion operator, we also have an interpretation
of Assumption \ref{ass:source} is term of diffusion.
Consider $(\lambda_i, e_i)$ the eigen elements of Eq.~\eqref{eq:evd}.
The diffusion of $g_\rho$ according the density $\rho_\X$ can be written as,
for $t \in \R$,
\[
  g_t = e^{-t {\cal L}} g_\rho = \sum_{i\in\N} e^{-t\lambda_i^{-1}} \scap{g_\rho}{e_i} e_i.
\]
This diffusion will cut off the high frequencies of $g_\rho$ that
corresponds to $\scap{g_\rho}{e_i}$ for big $i$, and big $\lambda_i^{-1}$.
Indeed, the difference between the diffusion and the original $g_\rho$ can be
measured as
\[
  \norm{g_t - g_\rho}_{L^2}^2 = \sum_{i\in\N} (e^{-t\lambda_i^{-1}} - 1)^2\scap{g_\rho}{e_i}^2
  = \sum_{i\in\N} t^2\lambda_i^{-2} \scap{g_\rho}{e_i}^2 + o(t^2\lambda_i^{-2}).
\]
So that assuming that $g_\rho$ is supported on few of the first eigen vectors of
${\cal L}$, can be rephrased as saying that the diffusion of $g_\rho$ does not
modify it too much.

\paragraph{The variance $\sigma_\ell$.}
Theorem \ref{thm:consistency} shows that the need for labels depends on
the variance parameter $\sigma_{\ell}^2$.
It is natural to wonder on how this parameters relates to the low-density
hypothesis. 
As we discussed, this parameter is linked to the variance of
$Z = Y(I + \lambda {\cal L})^{-1}\delta_X$.
We can separate the variability of this variable due to $X$ and the
variability due to $Y$
\[
  Z = Z_X + Z_Y, \qquad\text{with}\qquad
  Z_X = (I + \lambda {\cal L})^{-1} g_\rho(X)\delta_X,\quad
  Z_Y = (I + \lambda {\cal L})^{-1} (Y - \E[Y\,\vert\,X])\delta_X.
\]
As such we see that this variance depends on the structure of the density
$\rho_\X$ with the variance of $(I + \lambda{\cal L})^{-1}\delta_X$, and the
labelling noise with the variance of $(Y\,\vert\, X)$.
The low-density separation does not tell us anything about the level of noise in
$Y$ or the diffusion structure linked with $\rho_\X$, but additional hypothesis
could be made to characterize those.

\subsection{Kernel operators}

In this subsection, we define formally the operators $S$ and $\Sigma$.

We now turn towards operators linked with the Hilbert space ${\cal H}$.
Recall that for $k:\X\to\X\to\R$ a kernel, ${\cal H}$ is defined the closure of
the span of the elements $k_x$ under the scalar product
$\scap{k_x}{k_{x'}} = k(x, x')$. In particular, $\norm{k_x}^2_{\cal H} = k(x, x)$.
${\cal H}$ parameterize a vast class of function in $\R^\X$ through the mapping
\[
  \myfunction{S}{\cal H}{\R^\X}{\theta}{(\scap{k_x}{\theta})_{x\in\X}.}
\]
Under mild assupmtions, $S$ maps ${\cal H}$ to a space of function belongs to $L^2$.

\begin{proposition}
  When $x\to k(x, x)$ belongs to $L^1(\rho_\X)$, $S$ is a continuous mapping from
  ${\cal H}$ to $L^2(\rho_\X)$.
  This is particularly the case when $\rho_\X$ has compact support and $k$ is
  continuous.
\end{proposition}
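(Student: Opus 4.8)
The plan is to bound the $L^2(\rho_\X)$ norm of $S\theta$ directly in terms of the $\mathcal{H}$ norm of $\theta$, using the reproducing property together with the Cauchy--Schwarz inequality. Since both $\mathcal{H}$ and $L^2(\rho_\X)$ are Hilbert spaces and $S$ is linear, continuity is equivalent to boundedness, so it suffices to produce a finite constant $C$ with $\norm{S\theta}_{L^2(\rho_\X)} \leq C\norm{\theta}_{\mathcal{H}}$ for every $\theta\in\mathcal{H}$.

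First I would check that $S\theta$ is a genuine element of $L^2(\rho_\X)$, i.e.\ that $x\mapsto (S\theta)(x) = \scap{k_x}{\theta}_{\mathcal{H}}$ is measurable. When $k$ is continuous, the feature map $x\mapsto k_x$ is continuous into $\mathcal{H}$: expanding $\norm{k_x - k_{x'}}_{\mathcal{H}}^2 = k(x,x) - 2k(x,x') + k(x',x')$ shows this quantity is continuous in $(x,x')$ and vanishes on the diagonal. Hence $x\mapsto\scap{k_x}{\theta}_{\mathcal{H}}$ is continuous, and in particular measurable, for each fixed $\theta$.

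The core step is the pointwise estimate. For each $x$, Cauchy--Schwarz in $\mathcal{H}$ combined with the reproducing identity $\norm{k_x}_{\mathcal{H}}^2 = k(x,x)$ gives $\abs{(S\theta)(x)}^2 = \abs{\scap{k_x}{\theta}_{\mathcal{H}}}^2 \leq \norm{k_x}_{\mathcal{H}}^2\norm{\theta}_{\mathcal{H}}^2 = k(x,x)\norm{\theta}_{\mathcal{H}}^2$. Integrating against $\rho_\X$ yields
\[
  \norm{S\theta}_{L^2(\rho_\X)}^2 = \int_\X \abs{\scap{k_x}{\theta}_{\mathcal{H}}}^2 \rho_\X(\diff x) \leq \norm{\theta}_{\mathcal{H}}^2 \int_\X k(x,x)\,\rho_\X(\diff x).
\]
Under the hypothesis $x\mapsto k(x,x)\in L^1(\rho_\X)$, the right-hand integral is a finite constant $C^2$, so $\norm{S\theta}_{L^2(\rho_\X)} \leq C\norm{\theta}_{\mathcal{H}}$, which simultaneously shows $S\theta\in L^2(\rho_\X)$ and that $S$ is bounded, hence continuous.

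For the stated special case I would observe that if $\rho_\X$ is supported on a compact set $K$ and $k$ is continuous, then $x\mapsto k(x,x)$ is continuous on $K$ and therefore bounded there; since $\rho_\X$ is a probability measure with $\rho_\X(K)=1$, we get $\int_\X k(x,x)\,\rho_\X(\diff x) \leq \sup_{x\in K} k(x,x) < \infty$, so the $L^1$ hypothesis holds automatically. The only genuinely delicate point is the measurability of $S\theta$; once continuity of the feature map is established, the remaining bound is a one-line Cauchy--Schwarz computation, so I expect the verification of measurability (and, more generally, making sure the integral is well defined) to be the main thing to handle carefully rather than the estimate itself.
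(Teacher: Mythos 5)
Your proof is correct and follows essentially the same route as the paper's: a pointwise Cauchy--Schwarz bound combined with the reproducing identity $\norm{k_x}_{\cal H}^2 = k(x,x)$, integration against $\rho_\X$, and the observation that compact support plus continuity of $k$ makes $x \to k(x,x)$ bounded on $\supp\rho_\X$ and hence in $L^1(\rho_\X)$. Your measurability check via continuity of the feature map is a careful addition that the paper's proof omits, but the substance of the argument is identical.
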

\begin{proof}
  Consider $\theta\in{\cal H}$, we have
  \begin{align*}
    \norm{S\theta}_{L^2}^2 &= \int_\X \scap{k_x}{\theta}^2\rho_\X(\diff x)
    \leq \int_\X \scap{k_x}{\theta}^2_{\cal H}\rho_\X(\diff x)
    \leq \int_\X \norm{k_x}^2_{\cal H} \norm{\theta}^2_{\cal H}\rho_\X(\diff x)
    \\&= \norm{\theta}_{\cal H}^2 \int_\X k(x, x)\rho_\X(\diff x)
    = \norm{\theta}^2_{\cal H}\norm{x\to k(x, x)}_{L^1}.
  \end{align*}
  Moreover, when $\rho_\X$ has compact support and $k$ is continuous, $k$ is
  bounded on the support of $\rho_\X$ therefore $x\to k(x, x)$ belongs to $L^1$.
\end{proof}

As a continuous operator from the Hilbert space ${\cal H}$ to the Hilbert space
$L^2$, $S$ is naturally associated with many linear structure. In particular its
adjoint $S^\star$, but also the self-adjoint operators $K = SS^\star$ and
$\Sigma = S^\star S$.

\begin{proposition}
  The adjoint of $S$ is defined as
  \[
    \myfunction{S^\star}{L^2}{\cal H}{g}{\int_\X g(x)k_x \rho_\X(\diff x)
    = \E_{X\sim\rho_\X}[g(X)k_X].}
  \]
  To $S$ is associated the kernel self-adjoint operator on $L^2$
  \[
    \myfunction{K := SS^\star}{L^2}{L^2}{g}{(x\to\int_\X k(x, x') g(x') \rho_\X(\diff x')),}
  \]
  as well as the (not-centered) covariance on ${\cal H}$,
  \(
    \Sigma := S^\star S = \E_{X\sim\rho_\X}[k_X\otimes k_X].
  \)
\end{proposition}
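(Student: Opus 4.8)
The plan is to verify each of the three formulas directly from the defining adjoint relation together with the reproducing property $(S\theta)(x) = \scap{k_x}{\theta}_{\cal H}$, the only genuinely analytic point being the manipulation of ${\cal H}$-valued (Bochner) integrals. First I would pin down $S^\star$. By definition the adjoint is characterized by $\scap{S\theta}{g}_{L^2} = \scap{\theta}{S^\star g}_{\cal H}$ for every $\theta\in{\cal H}$ and $g\in L^2(\rho_\X)$. Expanding the left-hand side gives $\int_\X \scap{\theta}{k_x}_{\cal H}\, g(x)\,\rho_\X(\diff x)$, and to recognize this as $\scap{\theta}{\,\int_\X g(x)k_x\,\rho_\X(\diff x)}_{\cal H}$ I must pull the inner product out of the integral. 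This is legitimate as soon as the vector-valued integral $\int_\X g(x)k_x\,\rho_\X(\diff x)$ converges as a Bochner integral in ${\cal H}$, since continuous linear functionals commute with the Bochner integral.

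Second, the Bochner integrability is exactly where the standing hypothesis $x\mapsto k(x,x)\in L^1(\rho_\X)$ (equivalently, compact support plus continuity of $k$, as in the preceding proposition) enters. I would bound $\int_\X \norm{g(x)k_x}_{\cal H}\,\rho_\X(\diff x) = \int_\X \abs{g(x)}\sqrt{k(x,x)}\,\rho_\X(\diff x)$ and apply Cauchy--Schwarz to split it as $\norm{g}_{L^2}\,\norm{x\mapsto k(x,x)}_{L^1}^{1/2}$, which is finite. This both guarantees that the integral defines an element of ${\cal H}$ and, by continuity of $\scap{\theta}{\cdot}_{\cal H}$, justifies the interchange used above, yielding $S^\star g = \int_\X g(x)k_x\,\rho_\X(\diff x) = \E_{X\sim\rho_\X}[g(X)k_X]$.

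Third, I would read off $K$ and $\Sigma$ by composing. For $K=SS^\star$ I apply $S$ to $S^\star g$ and invoke the reproducing property once more: $(SS^\star g)(x) = \scap{k_x}{S^\star g}_{\cal H} = \int_\X g(x')\scap{k_x}{k_{x'}}_{\cal H}\,\rho_\X(\diff x') = \int_\X k(x,x')g(x')\,\rho_\X(\diff x')$, again pulling the inner product inside the integral. For $\Sigma=S^\star S$ I apply $S^\star$ to $S\theta$; since $(S\theta)(x)=\scap{k_x}{\theta}_{\cal H}$, I obtain $S^\star S\theta = \int_\X \scap{k_x}{\theta}_{\cal H}\,k_x\,\rho_\X(\diff x) = \left(\int_\X k_x\otimes k_x\,\rho_\X(\diff x)\right)\theta = \E_{X\sim\rho_\X}[k_X\otimes k_X]\,\theta$, with $k_x\otimes k_x$ the rank-one operator $\theta\mapsto\scap{k_x}{\theta}_{\cal H}k_x$.

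The main obstacle, and in fact the only nontrivial point, is the rigorous justification of exchanging $\scap{\cdot}{\cdot}_{\cal H}$ with the ${\cal H}$-valued integrals; everything else is bookkeeping with the reproducing property. Since the previous proposition already supplies continuity of $S$ and the $L^1$ control on $k(x,x)$, the required Bochner integrability and the interchange follow from standard properties of the Bochner integral, so no assumptions beyond those already in force are needed.
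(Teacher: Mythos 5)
Your proof is correct and follows essentially the same route as the paper's: verify the adjoint identity $\scap{S^\star g}{\theta}_{\cal H} = \scap{g}{S\theta}_{L^2}$ by moving the ${\cal H}$-inner product through the integral, then obtain $K = SS^\star$ and $\Sigma = S^\star S$ by composition with the reproducing property, exactly as in the paper's chain of equalities. The only difference is that you explicitly justify the interchange via Bochner integrability of $x \mapsto g(x)k_x$ (bounding $\int_\X \abs{g(x)}\sqrt{k(x,x)}\,\rho_\X(\diff x)$ by Cauchy--Schwarz with the $L^1$ control on $k(x,x)$), a step the paper performs tacitly, so your write-up is the same argument carried out with slightly more rigor.
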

\begin{proof}
  We shall prove the equality defining those operators.
  Consider $\theta \in {\cal H}$ and $g \in L^2$, we have
  \[
    \scap{S^\star g}{\theta}_{\cal H}
    = \scap{g}{S\theta}_{L^2}
    = \E_{X\sim\rho_\X}[g(X) \scap{k_X}{\theta}_{\cal H}]
    = \scap{\E_{X\sim\rho_\X}[g(X)k_X]}{\theta}_{\cal H}.
  \]
  We also have, for $x\in \X$, 
  \[
    (SS^\star g)(x) = \scap{k_x}{\E_{X\sim\rho_\X}[g(X)k_X]}_{\cal H}
    = \E_{X\sim\rho_\X}[g(X)\scap{k_x}{k_X}_{\cal H}]
    = \E_{X\sim\rho_\X}[g(X)k(X, x)].
  \]
  Finally, we have
  \[
    S^\star S \theta = \E_{X\sim\rho_\X}[S\theta(X)k_X]
    = \E_{X\sim\rho_\X}[\scap{\theta}{k_X}_{\cal H}k_X]
    = \E_{X\sim\rho_\X}[k_X\otimes k_X]\theta.
  \]
  This provides the last of all the equalities stated above.
\end{proof}

\paragraph{The functional space \texorpdfstring{${\cal H}$}{}.}
In the main text, we have written everything in term of $\theta$,
highlighting the parametric nature of kernel methods.
This made it easier to dissociate the norm on functions derived from ${\cal H}$
and the one derived from $L^2$ or $H^1$.
In literature, people tends to keep everything in term of functions
$g_\theta = S\theta$ without even mentioning the dependency in $\theta$.
Such a setting consists in considering directly the RKHS ${\cal H}$ whose scalar
product is defined for $g, g'\in (\ker K)^\perp$ by
$\scap{g}{g'}_{\cal H} = \scap{g}{K^{-1}g'}_{L^2}$.

\subsection{Derivative operators}
In this subsection, we discuss on derivative in RKHS
and we define formally the operator $L$.

As well as evaluation maps can be represented in ${\cal H}$,
under mild assumptions, derivative evaluation maps can benefited of such a
property.
Indeed, for $g_\theta = S\theta$,
$x \in \X$ and $u \in {\cal B}_\X(0, 1)$ a unit vector, we have
\[
  \partial_u g_\theta(x) = \lim_{t\to 0} \frac{g_\theta(x+tu) - g_\theta(x)}{t}
  = \lim_{t\to 0} \frac{\scap{\theta}{k_{x+tu}}_{\cal H} - \scap{g}{k_x}_{\cal H}}{t}
  = \lim_{t\to 0} \scap{\theta}{\frac{k_{x+tu} - k_x}{t}}_{\cal H}
\]
As a linear combination of elements in ${\cal H}$, the difference quotient
evaluation map $t^{-1}(k_{x+tu} - k_x)$ belongs to ${\cal H}$ and has a norm
\[
  \norm{\frac{k_{x+tu} - k_x}{t}}^2_{\cal H}
  = \frac{k(x+tu, x+tu) - 2k(x+tu, x) + k(x, x)}{t^2}.
\]
In order for the limit when $t$ goes to zero to belong to ${\cal H}$, we see the
importance of $k$ to be twice differentiable.
This limit $\partial_u k_x$, whose existence is proven formally by
\citet{Zhou2008}, provides a derivative evaluation map in the sense that
\[
  \partial_u g_\theta(x) = \scap{\theta}{\partial_u k_x}_{\cal H}.
\]
From this equality, we derive that
$\partial_{1i}k(x, x') = \scap{k_{x'}}{\partial_i k_x}$,
and recursively that
$\scap{\partial_i k_x}{\partial_j k_{x'}} = \partial_{1i}\partial_{2j} k(x, x')$.

Similarly to the operator $S$, we can introduce the operators $Z_i$ for
$i\in\bracket{1, d}$, defined as
\[
  \myfunction{Z_i}{\cal H}{\R^\X}{\theta}{(\scap{\partial_ik_x}{\theta}_{\cal H})_{i\leq d}}.
\]
Once again, under mild assumptions, $\ima Z_i$ inherit from an Hilbertian structure.
\begin{proposition}
  When $x\to \partial_{1i} \partial_{2i} k(x, x)$ belongs to $L^1(\rho_\X)$,
  $Z_i$ is a continuous mapping from ${\cal H}$ to $L^2(\rho_\X)$.
  This is particularly the case when $\rho_\X$ has compact support and $k$ is
  twice differentiable with continuous derivatives.
\end{proposition}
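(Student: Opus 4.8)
The plan is to mirror the proof given above for the operator $S$, replacing the evaluation map $k_x$ by the derivative evaluation map $\partial_i k_x$. First I would fix $\theta \in {\cal H}$ and write $Z_i\theta(x) = \scap{\partial_i k_x}{\theta}_{\cal H}$, which is legitimate because $\partial_i k_x$ was shown earlier to belong to ${\cal H}$ (this is exactly where twice-differentiability of $k$ enters, through the difference-quotient limit argument). Applying Cauchy--Schwarz in ${\cal H}$ pointwise and then integrating, I would obtain
\[
  \norm{Z_i\theta}_{L^2}^2 = \int_\X \scap{\partial_i k_x}{\theta}_{\cal H}^2 \rho_\X(\diff x)
  \leq \norm{\theta}_{\cal H}^2 \int_\X \norm{\partial_i k_x}_{\cal H}^2 \rho_\X(\diff x).
\]

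The crucial identification is $\norm{\partial_i k_x}_{\cal H}^2 = \partial_{1i}\partial_{2i} k(x, x)$, which follows by setting $x' = x$ and $j = i$ in the reproducing identity $\scap{\partial_i k_x}{\partial_j k_{x'}}_{\cal H} = \partial_{1i}\partial_{2j} k(x, x')$ derived just above. Substituting this gives the clean bound
\[
  \norm{Z_i\theta}_{L^2}^2 \leq \norm{\theta}_{\cal H}^2 \, \norm{x\to \partial_{1i}\partial_{2i}k(x,x)}_{L^1},
\]
so that under the hypothesis that $x \to \partial_{1i}\partial_{2i}k(x,x)$ lies in $L^1(\rho_\X)$, the operator $Z_i$ is bounded with operator norm at most the square root of that $L^1$ norm, hence continuous.

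For the stated sufficient condition I would argue that when $\rho_\X$ has compact support and $k$ is twice differentiable with continuous derivatives, the diagonal map $x \mapsto \partial_{1i}\partial_{2i}k(x,x)$ is continuous, hence bounded on the compact support of $\rho_\X$, and therefore integrable. I do not expect any genuine obstacle here: the only delicate points are that $\partial_i k_x \in {\cal H}$ and that the reproducing identity for mixed derivatives holds, but both were already established in the preceding discussion, so the argument reduces to invoking them and repeating the Cauchy--Schwarz estimate used for $S$.
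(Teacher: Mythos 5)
Your proof is correct and follows essentially the same route as the paper: Cauchy--Schwarz in ${\cal H}$ followed by the identification $\norm{\partial_i k_x}_{\cal H}^2 = \partial_{1i}\partial_{2i}k(x,x)$, with the compact-support case handled by continuity implying boundedness on $\supp\rho_\X$. Your write-up is in fact slightly more explicit than the paper's, since you spell out where twice-differentiability and the mixed-derivative reproducing identity enter, both of which the paper leaves implicit by citing the preceding discussion.
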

\begin{proof}
  Consider $\theta\in{\cal H}$, similarly to before, we have
  \begin{align*}
    \norm{Z\theta}_{L^2}^2 &= \int_\X \scap{\partial_i k_x}{\theta}^2\rho_\X(\diff x)
                             \leq \norm{\theta}^2_{\cal H} \int_\X \norm{\partial_i k_x}^2_{\cal H}\rho_\X(\diff x)
    = \norm{\theta}^2_{\cal H}\norm{x\to \partial_{1,i}\partial_{2,i} k(x, x)}_{L^1}.
  \end{align*}
  Moreover, when $\rho_\X$ has compact support and $\partial_{1,i}\partial_{2,i}k$ is continuous,
  $\partial_{1,i}\partial_{2,i}k$ is bounded on the support of $\rho_\X$
  therefore $x\to\partial_{1,i}\partial_{2,i}k$ belongs to $L^1$. 
\end{proof}
Among the linear operator that can be build from $Z_i$, in the theoretical part
of this paper, we are mainly interested in $Z_i^\star Z_i$. In the empirical
part however, we might be interested in $Z_i Z_j^\star$ as well as $Z_iS^\star$
as it might appear in Algorithm \ref{alg:imp} (where $Z_n$ has to be understood
as the empirical version of $Z = [Z_1; \cdots; Z_d]$).

\begin{proposition}
  The energy Dirichlet on ${\cal H}$ can be represented through the operator
  \[
    S^\star {\cal L} S = \sum_{i=1}^d Z_i^\star Z_i
    = \sum_{i=1}^d \E_{X\sim\rho_\X}[\partial_i k_X\otimes \partial_i k_X].
  \]
\end{proposition}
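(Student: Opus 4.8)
The plan is to establish both equalities by testing the operator $S^\star {\cal L} S$ against arbitrary $\theta, \theta' \in {\cal H}$ and reducing everything to the derivative evaluation maps already introduced. The single structural fact that drives the whole argument is that $Z_i$ is exactly the operator sending $\theta$ to the $i$-th partial derivative of its embedding: writing $g_\theta = S\theta$, the identity $\partial_i g_\theta(x) = \scap{\theta}{\partial_i k_x}_{\cal H} = (Z_i\theta)(x)$ shown above says precisely that $Z_i\theta = \partial_i(S\theta)$ as an element of $L^2(\rho_\X)$. Hence the vector of partial derivatives $\nabla g_\theta$ has $i$-th coordinate $Z_i\theta$, and the Euclidean norm $\norm{\nabla g_\theta(x)}^2$ unfolds coordinate-wise as $\sum_{i=1}^d (Z_i\theta)(x)^2$.

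First I would invoke the defining property of ${\cal L}$. By construction ${\cal L}$ is the self-adjoint operator on $L^2(\rho_\X)$ representing the Dirichlet energy $g \mapsto \int_\X \norm{\nabla g(x)}^2 \rho_\X(\diff x)$, so by polarization its associated bilinear form is $\scap{f}{{\cal L}g}_{L^2(\rho_\X)} = \scap{\nabla f}{\nabla g}_{L^2(\rho_\X)}$ for $f, g \in H^1(\rho_\X)$. Applying this to $f = S\theta$ and $g = S\theta'$ and expanding the gradient inner product coordinate-wise gives
\[
  \scap{\theta}{S^\star {\cal L} S\theta'}_{\cal H}
  = \scap{\nabla g_\theta}{\nabla g_{\theta'}}_{L^2(\rho_\X)}
  = \sum_{i=1}^d \E_{X\sim\rho_\X}\bracket{(Z_i\theta)(X)\,(Z_i\theta')(X)}
  = \sum_{i=1}^d \scap{\theta}{Z_i^\star Z_i \theta'}_{\cal H}.
\]
Since $\theta, \theta'$ range over all of ${\cal H}$, this yields $S^\star {\cal L} S = \sum_{i=1}^d Z_i^\star Z_i$, the first equality.

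For the second equality I would compute $Z_i^\star Z_i$ explicitly, mirroring the earlier derivation of $\Sigma = S^\star S = \E[k_X \otimes k_X]$. As for $S^\star$, the adjoint satisfies $Z_i^\star g = \E_{X\sim\rho_\X}[g(X)\,\partial_i k_X]$, read off from $\scap{Z_i^\star g}{\theta}_{\cal H} = \scap{g}{Z_i\theta}_{L^2} = \E_X[g(X)\scap{\partial_i k_X}{\theta}_{\cal H}]$. Composing, $Z_i^\star Z_i\theta = \E_X[(Z_i\theta)(X)\,\partial_i k_X] = \E_X[\scap{\theta}{\partial_i k_X}_{\cal H}\,\partial_i k_X] = \E_X[\partial_i k_X \otimes \partial_i k_X]\,\theta$, and summing over $i$ gives the stated tensor representation.

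The main obstacle is not the algebra but the justification of the domain and interchange steps. One must check that $S$ actually maps ${\cal H}$ into $H^1(\rho_\X)$, so that the quadratic-form representation of ${\cal L}$ applies to $g_\theta$; this is where the standing assumption that $k$ is twice differentiable with continuous derivatives enters, together with the continuity of $Z_i$ from the previous proposition, which guarantees $Z_i\theta = \partial_i g_\theta \in L^2(\rho_\X)$. The interchange of the finite sum over $i$ with the expectation is harmless, and the uniform bound on $\norm{\partial_i k_x}_{\cal H}$ over $\supp\rho_\X$, finite by compact support and continuity, legitimizes the Bochner integral defining $Z_i^\star Z_i$ and its $\otimes$-representation. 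Once these regularity points are secured, the three displayed identities follow formally from the derivative evaluation map.
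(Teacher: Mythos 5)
Your proof is correct and takes essentially the same route as the paper: both arguments reduce the claim to the derivative reproducing property $\partial_i g_\theta(x) = \scap{\theta}{\partial_i k_x}_{\cal H}$ together with the definition of ${\cal L}$ through the Dirichlet energy, and then identify the three operators via their associated forms. The only cosmetic difference is that you test against pairs $(\theta, \theta')$ (bilinear form) and compute $Z_i^\star$ explicitly, whereas the paper evaluates the quadratic form on a single $\theta$ and concludes from self-adjointness plus polarization.
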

\begin{proof}
  Let $\theta \in {\cal H}$ and $g_\theta = S\theta$, we have
  \begin{align*}
    \scap{g_\theta}{{\cal L}g_\theta}_{L^2}
    &= \scap{\theta}{S^\star{\cal L}S\theta}_{\cal H}
    = \E_{X\sim\rho_\X}\bracket{\norm{\nabla g_\theta(X)}^2}
    = \sum_{i=1}^d\E_{X\sim\rho_\X}\bracket{(\partial_i g_\theta(X))^2}
    \\&= \sum_{i=1}^d\E_{X\sim\rho_\X}\bracket{\scap{\partial_i k_X}{\theta}^2_{\cal H}}
    = \sum_{i=1}^d\norm{Z_i\theta}^2_{L^2}
    = \scap{\theta}{\sum_{i=1}^d Z_i^\star Z_i \theta}_{\cal H}
    \\&= \sum_{i=1}^d\E_{X\sim\rho_\X}\bracket{\scap{\theta}{(\partial_i k_X \otimes \partial_i k_X)\theta}_{\cal H}}
    = \scap{\theta}{\sum_{i=1}^d\E_{X\sim\rho_\X}\bracket{\partial_i k_X \otimes \partial_i k_X}\theta}_{\cal H}.
  \end{align*}
  Since the three operators are self-adjoint and they all represent the same
  quadratic form, they are equals.
\end{proof}

\subsection{Relation between \texorpdfstring{$\Sigma$}{} and \texorpdfstring{$L$}{}}

In this subsection, we discuss on the relation between $\Sigma$ and $L$
and show that we can expect the existence of $a \in (1 - 2 / d, 1]$ and $c > 0$
such that $L \preceq c \Sigma^a$.

\paragraph{Informal capacity considerations.}
We want to compare $\Sigma$ and $L$, as $L \preceq c\Sigma^a$ with the biggest
$a$ possible.
This depend on how fast the eigen values are decreasing,
which is linked to the entropy numbers of those two compact operators.
Those entropy numbers are linked with the capacity of the functional spaces
$\brace{g\in L^2\midvert \norm{K^{-1/2}g}_{L^2} < \infty}$ and
$\brace{g\in L^2\midvert \norm{K^{-1/2}{\cal L}^{-1/2}g}_{L^2} < \infty}$.
The first space is the reproducing kernel Hilbert space linked with $k$,
the second space is, roughly speaking, a space of function whose integral
belongs to the first space.
As such, if the first space is ${\cal H}^m$, the second is ${\cal H}^{m-1}$,
and we can consider $a = (m - 1) / m$.
Because we are considering kernel, we have $m > d / 2$
(this to make sure that the evaluation functionals $L_X : f\to f(x)$ are continuous),
so that $a > 1 - 2 / d$. 
Without trying to make those ``algebraic'' considerations more formal, we will
give an example on the torus.

\paragraph{Translation-invariant kernel and Fourier transform.}
Consider $L^2([0, 1]^d, \diff x)$ the space of periodic functions in dimension
$d$, square integrable against the Lebesgue measure on $[0, 1]^d$.
For simplicity, we will suppose that $\rho_\X$ is the Lebesgue measure on $[0, 1]^d$.
Consider a translation invariant kernel
\[
  k(x, y) = q(x - y) \qquad \text{for } q:\R^d \to \R \text{ that is one periodic}.
\]
In this setting, the operator $K$, operating on $L^2$, is the convolution by $q$,
that is
\[
  \myfunction{K}{L^2}{L^2}{g}{q*g}, \qquad\text{hence}\qquad
  \widehat{Kg} = \hat{q}\hat{g}.
\]
Where we have used the fact that convolutions can be represented by a product in Fourier.
Note that, from B\"ochner theorem, we know that $k$ being positive definite implies
that the Fourier transform of $q$ exists and is not negative. 
Let us define the Fourier coefficient and the inverse Fourier transform as
\[
  \forall\,\omega\in\Z^d, \quad
  \hat{g}(\omega) = \int_{[0,1]^d} g(x)e^{-2i\pi \omega^\top x} \diff x,
  \quad\text{and}\quad
  \forall\,x\in[0, 1]^d, \quad
  g(x) = \sum_{\omega \in \Z^d} e^{2i\pi \omega^\top x} \hat{g}(\omega).
\]
$K$ being a convolution operator, it is diagonalizable with eigen elements
$(\hat{q}(\omega), x\to e^{2i\pi \omega^\top x})_{\omega\in\Z^d}$.
From this, we can explicit many of our abstract operators.
First of all, using Perceval's theorem,
\[
  \norm{g}_{\cal H}^2 = \scap{g}{K^{-1}g}_{L^2}
  = \sum_{\omega\in\Z^d} \frac{\abs{\hat{g}(\omega)}^2}{\hat{q}(\omega)}.
\]
Hence we can parametrize ${\cal H}$ with $(\theta_\omega)_{\omega \in \Z^d} \in \C^{\Z^d}$ and the $\ell^2$-metric,
where $\theta_\omega = \hat{g}(\omega) / \sqrt{\hat{q}(\omega)}$ and 
\[
  (S\theta)(x) = g_\theta(x) = \sum_{\omega\in\Z^d} e^{2i\pi\omega^\top x}\sqrt{\hat{q}(\omega)}\theta_\omega.
\]
Note that this is not the usual parameterization of ${\cal H}$ by elements
$\theta\in{\cal H}$ as $(\C^{\Z^d}, \ell^2)$ is not a space of functions.
However, such a parametrization of ${\cal H}$
does not change any of the precedent algebraic considerations on the operators
$S$, $\Sigma$, $K$, and $L$.

\paragraph{Diffusion operator and Fourier transform.}
As well as convolution operators are well represented in Fourier, derivation
operators are. Indeed, when $g$ is regular, we have
\[
  \norm{{\cal L}^{1/2} g}^2_{L^2} =
  \norm{\nabla g}^2_{L^2} = \sum_{j=1}^d \norm{\partial_j g}^2_{L^2}
  = \sum_{j=1}^d \sum_{\omega \in\Z^d} \omega_j^2 \abs{\hat{g}(\omega)}^2.
\]
As a consequence, using the expression of $S\theta$, we have
\[
  \Sigma\theta = \sum_{\omega\in\Z^d} \hat{q}(\omega) \theta_\omega,
  \qquad\text{while}\qquad
  L\theta = \sum_{\omega\in\Z^d} \norm{\omega}_2^2 \hat{q}(\omega) \theta_\omega,
  \quad\text{where}\quad \norm{\omega}_2^2 = \sum_{j=1}^d \omega_i^2.
\]
With this parameterization, the eigen elements of $\Sigma$ are
$(\hat{q}(\omega), \delta_\omega)_{\omega\in\Z^d}$ while the one of $L$ are
$(\norm{\omega}^2_2\hat{q}(\omega), \delta_\omega)_{\omega\in\Z^d}$.

\paragraph{Eigen value decay comparison.}
Hence, having $L \preceq c\Sigma^a$ is equivalent to having
$\norm{\omega}_2^2\hat{q}(\omega) \leq c\hat{q}(\omega)^a$.
Now suppose that the decay of $\hat{q}$ is governed by
\[
  c_1 (1 + \sigma^{-1}\norm{\omega}_2^2)^{-m} \leq \hat{q}(\omega)
  \leq c_2 (1 + \sigma^{-1}\norm{\omega}_2^2)^{-m},
\]
for two constants $c_1, c_2 > 0$.
In particular, this is verified for Mat{\'e}rn kernels, corresponding to the
fractional Sobolev space $H^m$,
and for the Laplace kernel with $m = (d+1) / 2$, which reads
$k(x, y) = \exp(-\sigma^{-1}\norm{x-y})$.
The Gaussian kernel could be seen as $m = +\infty$ as it has exponential decay.
With such a decay we have, assuming without restrictions that we are in one dimension
\begin{align*}
  \omega^2\hat{q}(\omega) \leq c_2\omega^2 (1 + \sigma^{-1}\omega^2)^{-m}
  \leq c_2\sigma(1 + \sigma^{-1}\omega^2)^{-(m-1)}
  \leq c_1^{\frac{m}{m-1}}c_2\sigma \hat{q}(\omega)^{\frac{m-1}{m}}.
\end{align*}
In other terms, we can consider $c = c_1^{\frac{m}{m-1}}c_2 \sigma$ and $a = (m-1)/m$.
Assuming that $q$ is square-integrable, so is $\hat{q}$, which implies that $2m > d$.
As a consequence, we do have $a > 1 - 2 / d$.
Note that this reasoning could be extended to the case where $\rho_\X$ has a density against the Lebesgue measure, that is bounded above and below away from zero.

\section{Spectral decomposition}
\label{app:algebra}

In this section,
we recall facts on spectral regularization, before proving Proposition
\ref{thm:decomposition} and extending it to the case $\mu = 0$.

\subsection{Generalized singular value with matrices}

\paragraph{Generalized singular value decomposition.}
Let $A \in \R^{m1\times n}$ and $B \in \R^{m_2\times n}$ be two matrices.
There exists $U \in \R^{m_1 \times m_1}$, $V\in \R^{m_2 \times m_2}$ two
orthogonal matrices, $c \in \R^{m_1 \times r}$ and $s\in \R^{m_2 \times r}$
be two $1$-diagonal matrices such that $c^\top c + s^\top s = I_r$,
and $H\in \R^{n\times r}$ a non-singular matrix such that
\[
  A = UcH^{-1}, \qquad B = VsH^{-1}.
\]
To be more precise $c$ is such that only entries $c_{ii} = \cos (\theta_i)$ for $i < \min(r, m_2)$ are
non-zeros and $s$ such that only entries $s_{m_1-i, r-i} = \sin(\theta_{r-i})$ for $i < \min(r, m_1)$
are non-zeros, with $\theta_i \in [-\pi/2, \pi/2]$ an angle.
Here, $c$ stands for cosine, $s$ for sinus and $r$ for rank.

\paragraph{Link with generalized eigenvalue problem.}
As well as the singular value of $A$ is linked with the eigen value of $A^\top
A$, the generalized singular value decomposition of $[A; B]$ is linked with the
generalized eigenvalue problem linked with $(A^\top A, B^\top B)$.
Indeed, we have
\[
  A^\top A = H^{-\top} c^\top c H^{-1}, \qquad B^\top B = H^{-\top} s^\top s H^{-1}.
\]
In particular, with $(e_i)$ the canonical basis of $\R^r$, and $h_i$ the $i$-th
column of $H$, we get
\[
  H^\top A^\top A h_i = \cos(\theta_i)^2 e_i = \tan(\theta_i)^{-2}
  \sin(\theta_i)^2 e_i
  = \tan(\theta_i)^{-2} H^\top B^\top B h_i.
\]
From which we deduce that, since $\ima A \cup \ima B \subset \ima H^\top$,
\[
  A^\top A h_i = \tan(\theta_i)^{-2} B^\top B h_i,\qquad
  h_j^\top B^\top B h_i = \sin(\theta_i)^2 \ind{i=j}.
\]
So if we denote by $f_i = \abs{\sin(\theta_i)}^{-1} h_i$ and
$\lambda_i = \tan(\theta_i)^{-2}$, assuming $\lambda_i \neq 0$ for all $i \leq
r$ (which corresponds to $\ker B \subset \ker A$), 
$(\lambda_i)_{i\leq r}, (f_i)_{i\leq r}$ provide the generalized eigenvalue
decomposition of $(A^\top A, B^\top B)$ in the sense that
\[
  A^\top A f_i = \lambda_i B^\top B f_i,\qquad
  f_j^\top B^\top B f_i = \ind{i=j},\qquad
  f_j^\top A^\top A f_i = \lambda_i \ind{i=j}.
\]

\subsection{Tikhonov regularization}
Define the Tikhonov regularization
\[
  x_\lambda = \argmin_{x\in \R^n} \norm{Ax - b}^2 + \lambda \norm{Bx}^2.
\]
When this problem is well-defined, the solution is defined as
\[
  x_\lambda = (A^\top A + \lambda B^\top B)^\dagger A^\top b.
\]
With the generalized singular value decomposition of $A$ and $B$, we have
\[
  A^\top A + \lambda B^\top B = H^{-\top} \gamma_\lambda H^{-1},
  \quad\text{with}\quad \gamma_\lambda = c^\top c + \lambda s^\top s.
\]
Using the fact that $A^\top b = H^{-\top} c^\top U^\top b$, we get
\[
  x_\lambda = H \gamma_\lambda^{-1} c^\top U^\top b
= \paren{\sum_{i=1}^r \frac{\cos(\theta_i)}{\cos(\theta_i)^2 + \lambda \sin(\theta_i)^2}  h_i \otimes u_i} b.
\]
Now, we would like to replace $c_{ii}$, $s_{ii}$, $h_i$ and $u_i$ with quantities
that depend on $\lambda_i$, $f_i$ and $A$.
To do so recall that $AH = Uc$, therefore $\cos(\theta_i) u_i = Ah_i$, and
recall that $h_i = \sin(\theta_i) f_i$ and $\lambda_i = \cos(\theta_i)^2 / \sin(\theta_i)^2$. Inputting those equality in the last
expression of $x_\lambda$ we get
\[
  x_\lambda = \paren{\sum_{i=1}^r \frac{\sin(\theta_i)^2}{\cos(\theta_i)^2 +
      \lambda \sin(\theta_i)^2}  f_i \otimes Af_i} b
  =  \paren{\sum_{i=1}^r \frac{1}{\lambda_i +
      \lambda}  f_i \otimes Af_i} b.
\]
Finally,
\[
  b_\lambda = A x_\lambda 
  = \sum_{i=1}^r \psi(\lambda_i) \scap{Af_i}{b} Af_i,
  \qquad\text{where}\qquad
  \psi(x) = \frac{1}{x+\lambda}.
\]

\subsection{Extension to operators}
To end the proof of Proposition \ref{thm:decomposition}, we should prove that we can
apply the generalized eigenvalue decomposition to operators.
We will only prove that it is possible for $(\Sigma, L+\mu)$ based on simple considerations.
\begin{proposition}
  When $k$ is continuous and $\supp\rho_\X$ is bounded, $\Sigma$ is a compact operator.
\end{proposition}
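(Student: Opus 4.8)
The plan is to show that $\Sigma$ is trace-class, which immediately yields compactness since every trace-class operator is compact. First I would record the key boundedness fact. The support of $\rho_\X$ is closed (supports always are) and bounded, hence compact in $\R^d$; the continuity of $k$ then makes the map $x\mapsto k(x,x)=\norm{k_x}^2_{\cal H}$ continuous, so it is bounded on $\supp\rho_\X$ by some constant $\kappa^2 = \sup_{x\in\supp\rho_\X} k(x,x) < \infty$.

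Next I would verify that $\Sigma$ is a well-defined bounded positive self-adjoint operator and compute its trace against an arbitrary orthonormal basis $(e_i)_{i\in\N}$ of ${\cal H}$. Boundedness follows from the same estimate, since $\scap{\Sigma\theta}{\theta}_{\cal H} = \E[\scap{k_X}{\theta}^2_{\cal H}] \leq \kappa^2\norm{\theta}^2_{\cal H}$. For the trace, using $\Sigma = \E[k_X\otimes k_X]$ and exchanging the nonnegative sum with the expectation (Tonelli), I get
\[
  \sum_{i\in\N}\scap{\Sigma e_i}{e_i}_{\cal H}
  = \sum_{i\in\N}\E\bracket{\scap{k_X}{e_i}^2_{\cal H}}
  = \E\bracket{\sum_{i\in\N}\scap{k_X}{e_i}^2_{\cal H}}
  = \E\bracket{\norm{k_X}^2_{\cal H}}
  = \E\bracket{k(X,X)} \leq \kappa^2,
\]
where the penultimate equality is Parseval's identity. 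A positive self-adjoint operator with finite trace is trace-class, hence compact.

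The only genuine subtlety is the legitimacy of the exchange of sum and integral, but every term is nonnegative so Tonelli applies with no integrability hypothesis beyond the finiteness just established. If one prefers to avoid the trace-class machinery, an alternative route is direct approximation in operator norm by finite-rank operators: since $x\mapsto k_x$ is continuous, it is uniformly continuous on the compact set $\supp\rho_\X$, so covering the support by finitely many balls of radius $\epsilon$, choosing a representative $x_j$ and the induced measurable partition $(A_j)$, one can bound $\norm{\Sigma - \sum_j \rho_\X(A_j)\, k_{x_j}\otimes k_{x_j}}$ by a quantity tending to $0$ as $\epsilon\to 0$, and a norm-limit of finite-rank operators is compact. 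I expect the trace computation to be the shortest path, the main obstacle being purely notational care in the Tonelli exchange rather than any analytic difficulty.
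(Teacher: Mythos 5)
Your proof is correct and takes essentially the same route as the paper's: boundedness of $x\mapsto k(x,x)=\norm{k_x}_{\cal H}^2$ on the compact support implies $\Sigma = \E[k_X\otimes k_X]$ is nuclear (trace-class), hence compact. The paper simply asserts nuclearity from this boundedness, so your Parseval/Tonelli trace computation just fills in the details the paper leaves implicit.
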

\begin{proof}
  We have $\Sigma = \E[k_X\otimes k_X]$ and $\norm{k_x} = k(x, x)$. Since $k$ is
  continuous and $\supp\rho_\X$ is compact, for $x\in\supp\rho_\X$, $k(x, x)$ is
  bounded. Hence $\Sigma$ is a nuclear operator, hence trace class and compact.
\end{proof}
\begin{proposition}
  When $k$ is twice differentiable with continuous derivative, and
  $\supp\rho_\X$ is compact, $L$ is a compact operator.
  As a consequence, $L$ has a compact spectrum, and has a pseudo inverse that we
  will denote, with a slight abuse of notation, by $L^{-1}$.
\end{proposition}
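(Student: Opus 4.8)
The plan is to follow exactly the compactness argument just given for $\Sigma$, substituting the derivative evaluation maps $\partial_i k_X$ for the evaluation maps $k_X$ and using the extra differentiability of $k$. First I would start from the representation established in the previous proposition,
\[
  L = S^\star {\cal L} S = \sum_{i=1}^d Z_i^\star Z_i = \sum_{i=1}^d \E_{X\sim\rho_\X}\bracket{\partial_i k_X \otimes \partial_i k_X},
\]
which already displays $L$ as a finite sum of (Bochner) expectations of positive rank-one operators $\partial_i k_X \otimes \partial_i k_X$ on ${\cal H}$. In particular each summand is self-adjoint and positive semi-definite, so $L$ is a positive self-adjoint operator; it only remains to upgrade this to compactness.

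The key step is to bound the trace. Since each $\partial_i k_X \otimes \partial_i k_X$ is a positive rank-one operator with trace $\norm{\partial_i k_X}^2_{\cal H}$, and since trace and expectation of positive operators may be exchanged, I would compute
\[
  \trace(L) = \sum_{i=1}^d \E_{X\sim\rho_\X}\bracket{\norm{\partial_i k_X}^2_{\cal H}}
  = \sum_{i=1}^d \E_{X\sim\rho_\X}\bracket{\partial_{1,i}\partial_{2,i} k(X, X)},
\]
using the reproducing identity $\norm{\partial_i k_x}^2_{\cal H} = \partial_{1,i}\partial_{2,i} k(x, x)$ recalled earlier. Because $k$ is assumed twice differentiable with continuous derivatives, the map $x\to \partial_{1,i}\partial_{2,i} k(x, x)$ is continuous, hence bounded on the compact set $\supp\rho_\X$; therefore the right-hand side is finite. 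A positive self-adjoint operator with finite trace is trace class, hence compact, which proves the first claim --- this is precisely the analogue of the nuclearity argument used for $\Sigma$.

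For the consequence, I would invoke the spectral theorem for compact self-adjoint operators: $L$ admits a discrete spectrum $\spec(L) \subset \R_+$ consisting of eigenvalues of finite multiplicity accumulating only at $0$, together with an orthonormal eigenbasis of $(\ker L)^\perp = \overline{\ima L}$. The pseudo-inverse $L^{-1}$ is then defined on $\overline{\ima L}$ by inverting $L$ eigenvalue-by-eigenvalue (and setting it to $0$ on $\ker L$), which is the abuse of notation announced in the statement. The main obstacle is not analytic depth but the measure-theoretic bookkeeping: one must justify that the integral $\E[\partial_i k_X \otimes \partial_i k_X]$ converges in the appropriate operator topology (for instance in trace norm) and that the trace commutes with the expectation. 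Under compact support and continuity of the second derivatives of $k$ the integrand is uniformly bounded in trace norm, so the Bochner integral is well defined and the exchange is legitimate by applying Tonelli's theorem to the nonnegative integrand $\norm{\partial_i k_X}^2_{\cal H}$; the rest is routine.
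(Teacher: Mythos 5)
Your proof is correct and follows essentially the same route as the paper: both establish compactness by showing $L = \sum_{i=1}^d \E[\partial_i k_X \otimes \partial_i k_X]$ is trace class, using $\norm{\partial_i k_x}^2_{\cal H} = \partial_{1,i}\partial_{2,i}k(x,x)$ together with continuity of the second derivatives and compactness of $\supp\rho_\X$ to bound the trace, exactly as in the nuclearity argument the paper gives for $\Sigma$. Your additional bookkeeping (Bochner integrability, trace--expectation exchange, and the spectral-theorem justification of the pseudo-inverse) fills in details the paper leaves implicit but does not change the argument.
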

\begin{proof}
  The proof is similar to the one showing that $\Sigma$ is compact, based on the
  fact that
  $L = \sum_{i=1}^d \E[\partial_i k_X \otimes \partial_i k_X]$, and
  $\norm{\partial_i k_X}^2 = \partial_{1i}\partial_{2i} k(x, x)$.
\end{proof}
\begin{proposition}
  When $\Sigma$ is compact, for all $\mu > 0$,
  $(L+\mu)^{-1/2}\Sigma (L+\mu)^{-1/2}$ is a compact operator.
\end{proposition}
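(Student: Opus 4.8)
The plan is to reduce the statement to the classical fact that the compact operators form a two-sided ideal inside the bounded operators: if $K$ is compact and $A, B$ are bounded, then $AKB$ is compact. Since $\Sigma$ is compact by assumption, it will suffice to establish that $(L+\mu)^{-1/2}$ (where $\mu$ abbreviates $\mu I$) is a well-defined bounded operator on ${\cal H}$; the conclusion then follows immediately upon writing the operator in question as $A\Sigma A$ with $A = (L+\mu)^{-1/2}$.

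First I would record that, by the preceding proposition, $L$ is compact and self-adjoint, and that $L = \sum_{i=1}^d Z_i^\star Z_i \succeq 0$ is positive semi-definite. Hence $L+\mu$ is a bounded self-adjoint operator that is bounded below by $\mu I$, so its spectrum satisfies $\spec(L+\mu) \subset [\mu, \norm{L}+\mu] \subset (0,\infty)$, using $\mu > 0$. Applying the continuous functional calculus to the bounded self-adjoint operator $L+\mu$ with the function $x\mapsto x^{-1/2}$, which is continuous and bounded on the (compact) spectrum, produces a bounded self-adjoint operator $(L+\mu)^{-1/2}$ with $\norm{(L+\mu)^{-1/2}} \le \mu^{-1/2}$ and $\big((L+\mu)^{-1/2}\big)^2 = (L+\mu)^{-1}$.

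With $A := (L+\mu)^{-1/2}$ bounded and $\Sigma$ compact, the ideal property gives that $A\Sigma A = (L+\mu)^{-1/2}\Sigma(L+\mu)^{-1/2}$ is compact, which is the claim. The only point requiring care---hardly an obstacle---is the justification of the functional calculus, namely that the positivity of $L$ together with $\mu>0$ forces $L+\mu \succeq \mu I$ and thus keeps $0$ strictly away from the spectrum, guaranteeing that $x\mapsto x^{-1/2}$ is bounded there; everything else is a direct invocation of standard operator-theoretic facts.
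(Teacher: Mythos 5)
Your proof is correct, but it follows a genuinely different route from the paper's. You invoke the two-sided ideal property of compact operators: $\Sigma$ is compact, $(L+\mu)^{-1/2}$ is bounded (via the functional calculus, since $L \succeq 0$ is bounded self-adjoint and $\mu>0$ keeps the spectrum of $L+\mu$ inside $[\mu, \norm{L}_{\op}+\mu]$), hence the sandwich $(L+\mu)^{-1/2}\Sigma(L+\mu)^{-1/2}$ is compact. The paper instead argues through the trace: by cyclicity, $\trace((L+\mu)^{-1/2}\Sigma(L+\mu)^{-1/2}) = \trace(\Sigma(L+\mu)^{-1}) \leq \norm{(L+\mu)^{-1}}_{\op}\trace(\Sigma) \leq \mu^{-1}\trace(\Sigma) < +\infty$, concluding the operator is trace class, hence compact. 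The two arguments trade hypotheses against conclusions: the paper's proof silently uses the stronger fact that $\Sigma$ is trace class (which does hold in its setting, since the earlier proposition shows $\Sigma$ is nuclear when $k$ is continuous and $\supp\rho_\X$ is compact), and in exchange obtains the stronger conclusion that the sandwiched operator is itself trace class; your proof uses only the literal hypothesis ``$\Sigma$ compact'' stated in the proposition, so it is actually the more faithful and more general argument for the statement as written, at the cost of yielding only compactness. Both are complete; your justification of the boundedness of $(L+\mu)^{-1/2}$ (positivity of $L$ forcing $L+\mu \succeq \mu I$, so $x\mapsto x^{-1/2}$ is bounded on the spectrum) is the one point the ideal-property route requires, and you handle it correctly.
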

\begin{proof}
  The proof is straightforward
  \[
    \trace((L+\mu)^{-1/2} \Sigma (L+\mu)^{-1/2}) =
    \trace(\Sigma (L+\mu)^{-1}) \leq
    \norm{(L+\mu)^{-1}}_{\op} \trace(\Sigma)
    \leq \mu^{-1} \trace(\Sigma) < +\infty.
  \]
  Therefore the operator is trace class, hence compact.
\end{proof}
\begin{proposition}
  For any $\mu > 0$, the generalized eigen value decomposition of
  $(\Sigma, L+\mu)$ as defined in Proposition \ref{thm:decomposition} exists.
\end{proposition}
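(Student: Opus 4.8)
The plan is to reduce the generalized eigenvalue problem for the pair $(\Sigma, L + \mu I)$ to an ordinary eigenvalue problem for a compact self-adjoint operator, by conjugating with $(L+\mu I)^{-1/2}$. First I would note that since $L$ is positive semi-definite and $\mu > 0$, the operator $L + \mu I$ is bounded, self-adjoint and satisfies $L + \mu I \succeq \mu I$; it is therefore invertible with $\norm{(L+\mu I)^{-1}}_{\op} \leq \mu^{-1}$, and its positive square root $(L+\mu I)^{1/2}$ is itself self-adjoint and boundedly invertible, with $(L+\mu I)^{-1/2}$ bounded and self-adjoint.

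Next I would introduce $A := (L+\mu I)^{-1/2}\Sigma(L+\mu I)^{-1/2}$. By the previous proposition this operator is compact, and it is clearly self-adjoint and positive semi-definite since $\Sigma = S^\star S \succeq 0$. The spectral theorem for compact self-adjoint operators then provides an orthonormal basis $(u_i)_{i\in\N}$ of ${\cal H}$ together with eigenvalues $(\lambda_i)_{i\in\N} \subset \R_+$ such that $A u_i = \lambda_i u_i$ for every $i$.

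Then I would define $\theta_{i,\mu} := (L+\mu I)^{-1/2} u_i$ and check that these vectors realize the generalized decomposition. Applying $(L+\mu I)^{1/2}$ to the identity $A u_i = \lambda_i u_i$ and using $u_i = (L+\mu I)^{1/2}\theta_{i,\mu}$ gives $\Sigma\theta_{i,\mu} = \lambda_i (L+\mu I)\theta_{i,\mu}$, which is exactly the eigenvalue relation. For the normalization, the self-adjointness of $(L+\mu I)^{\pm 1/2}$ yields $\scap{\theta_{i,\mu}}{(L+\mu I)\theta_{j,\mu}} = \scap{(L+\mu I)^{-1/2}u_i}{(L+\mu I)^{1/2}u_j} = \scap{u_i}{u_j} = \ind{i=j}$. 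Finally, since $(L+\mu I)^{-1/2}$ is a bounded bijection of ${\cal H}$ with bounded inverse, it carries the orthonormal basis $(u_i)$ to a family $(\theta_{i,\mu})$ whose closed span is all of ${\cal H}$, so the $(\theta_{i,\mu})$ generate ${\cal H}$ as required.

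I would expect the only genuinely delicate point to be the verification that the family $(\theta_{i,\mu})$ still generates ${\cal H}$: these vectors are no longer orthonormal but form a Riesz basis, so one must invoke the boundedness of both $(L+\mu I)^{-1/2}$ and its inverse rather than orthonormality. Everything else is a routine transport of the spectral theorem through the invertible conjugation, the compactness having already been secured by the preceding propositions.
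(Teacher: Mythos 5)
Your proof is correct and follows essentially the same route as the paper: conjugating by $(L+\mu I)^{-1/2}$ to obtain the compact positive self-adjoint operator $(L+\mu I)^{-1/2}\Sigma(L+\mu I)^{-1/2}$, applying the spectral theorem, and transporting the orthonormal eigenbasis back via $\theta_{i,\mu} = (L+\mu I)^{-1/2}u_i$, using bounded invertibility of $(L+\mu I)^{-1/2}$ to conclude that the $(\theta_{i,\mu})$ generate ${\cal H}$. In fact you are slightly more complete than the paper, since you explicitly verify the normalization $\scap{\theta_{i,\mu}}{(L+\mu I)\theta_{j,\mu}} = \ind{i=j}$, which the paper's proof leaves implicit.
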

\begin{proof}
  Using the spectral theorem, since $(L+\mu)^{-1/2}\Sigma (L+\mu)^{-1/2}$ is
  positive self-adjoint compact operator, there exists $(\xi_i)$ a basis of
  ${\cal H}$ and $(\lambda_i) \in \R_+$ a decreasing sequence
  (note that $\ker (L+\\mu) = \ker \Sigma = \brace{0}$), such that
  \[
    (L+\mu)^{-1/2}\Sigma (L+\mu)^{-1/2} = \sum_{i\in\N} \lambda_i \xi_i \otimes \xi_i.
  \]
  Taking $\theta_i = (L + \mu)^{-1/2} \xi_i$, we get
  $\Sigma \theta_i = \lambda_i L\theta_i$.
  Because $(\xi_i)$ generates ${\cal H}$, and $(L+\mu)^{-1/2}$ is 
  bijective (since $L$ is compact, $(L+\mu)^{-1}$ is coercive),
  $((L+\mu)^{-1/2}\xi_i)$ generates ${\cal H}$.
\end{proof}

Proposition \ref{thm:decomposition} follows from prior discussion on Tikhonov
regularization extended to infinite summations.

\subsection{The case \texorpdfstring{$\mu = 0$}{}}

When $\mu = 0$, Eq. \eqref{eq:filtering} should be seen as the rewritting of Eq.
\eqref{eq:decomposition_zero} based on the RKHS ${\cal G} = \ima S$.
This can only be done when the eigen vectors of ${\cal L}$ appearing in Eq.
\eqref{eq:evd} belongs to ${\cal G} = \ima S = \ima K^{1/2}$, which is exactly
what Assumption \ref{ass:approximation} provides.
In such a setting, we can find $(\theta_i) \in {\cal H}^\N$ to write
$\lambda_i^{1/2} e_i = S\theta_i$ as soon as $\lambda_i \neq 0$
(write $M e_i = S\theta_i$ for $M$ an abstraction representing $+\infty$
when $\lambda_i = 0$, handling the potential fact that $\ker B \not\subset \ker A$),
we get $\theta_iS^*S\theta_j = \lambda_i\ind{i=j}$,
and $L \theta_i = \lambda_i^{-1}\Sigma\theta_i$, and we can extend Proposition
\ref{thm:decomposition} to the case $\mu = 0$, with
\begin{equation}
  \label{eq:filtering_zero}
  g_\lambda = \sum_{i\in\N} \psi(\lambda_i) \scap{S^\star g_\rho}{\theta_i} S\theta_i,
\end{equation}
where we handle the null space of ${\cal L}$ with the equality $M\psi(M) = 1$,
verified by $M$ our abstraction representing $+\infty$, so that
$\psi(M) \scap{S^\star g_\rho}{\theta_i}S\theta_i = \scap{g_\rho}{e_i} e_i$ as soon as $\lambda_i = 0$.

\paragraph{Beyond Assumption \ref{ass:approximation}.}
Assumption \ref{ass:approximation} could be made generic by
considering the biggest $(p_i)\in\R_+^\N$ such that $K^{-p_i} e_i$ belongs to $L^2$,
and rewriting Eq. \eqref{eq:filtering_zero}
under the form 
$g_\lambda = \sum_{i\in\N} \psi(\lambda_i) \scap{(S_0 K^{p_i})^\star
  g_\rho}{\theta_i} S_0 K^{p_i} \theta_i$, with $\theta_i = \lambda_i^{-1/2}
S_0^{-1} K^{-p_i}\theta_i$ and $S_0 = K^{-1/2}S$ the
isomorphism between ${\cal H}$ and $L^2$ (assuming that $S$ is dense in $L^2$).
Such an assumption would describe all situations from no assumption ($p_i = 0$
for all $i$), Assumption \ref{ass:approximation} ($p_i = 1/2$ for all $i$)
to even more optimistic assumptions ($p_i \geq 1$ for all $i$).

\section{Consistency analysis}
\label{app:consistency}

This section is devoted to the proof of Theorem \ref{thm:consistency}.
The proof is based on Eqs. \eqref{eq:filtering} and \eqref{eq:filtering_zero},
and splits the error of $\norm{g_\rho - \hat{g}_p}_{L^2}^2$ into several components
linked with how well we approximate $S^\star g_\rho$, and how well we
approximate the eigenvalue decomposition $(\lambda_i, \theta_{i})$ of
$(\Sigma, L)$.

\subsection{Sketch and understanding of the proof}
In this subsection, we explain how work the proofs for consistency theorems such
as Theorem \ref{thm:consistency}.

Let us define the mapping $G:{\cal H}\times {\cal C}\to L^2$ with $\cal C$ the
set of pairs of self-adjoint operators on ${\cal H}$ that admit a generalized
eigen value decomposition, as
\begin{equation}
  G(\theta, (A, B)) = \sum_{i\in\N} \psi(\lambda_i) \scap{\theta}{\theta_i} S\theta_i
  \qquad\text{with}\qquad (\lambda_i, \theta_i) \in \operatorname{GEVD}(A, B).
\end{equation}
$G(\theta, (A, B)) \in L^2$ corresponds to writing $\theta\in{\cal H}$ in
the basis associated with the generalized eigen value decomposition (GEVD) of
$(A, B)$. 

\begin{proposition}
  Under Assumptions \ref{ass:source} and \ref{ass:approximation}, and with
  $\psi$ defined in Theorem \ref{thm:consistency} 
  \[
    g_\lambda = G(S^\star g_\rho, (\Sigma, L)), \qquad\text{and}\qquad
    \hat{g}_p = G(\hat{S^\star g_\rho}, (P\Sigma P, P\hat LP + \mu P)),
  \]
  with $P$ the projection matrix from ${\cal H}$ to $\Span\brace{k_{X_i}}_{i\leq p}$.
\end{proposition}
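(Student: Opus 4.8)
The plan is to treat the two identities separately. The first, $g_\lambda = G(S^\star g_\rho, (\Sigma, L))$, is just Eq.~\eqref{eq:filtering_zero} read through the definition of $G$. Indeed, Assumption~\ref{ass:approximation} places the eigenvectors $e_i$ of ${\cal L}$ inside $\ima S$, which is precisely the condition under which Eq.~\eqref{eq:filtering_zero} was established, so that $g_\lambda = \sum_{i}\psi(\lambda_i)\scap{S^\star g_\rho}{\theta_i}S\theta_i$ with $(\lambda_i,\theta_i)$ a GEVD pair of $(\Sigma, L)$; Assumption~\ref{ass:source} guarantees that $g_\rho$, and hence this sum, involves only finitely many eigenvectors, so the null-space convention $M\psi(M)=1$ causes no difficulty. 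Matching this expression term-by-term against the definition of $G$ yields the first equality immediately.

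For the second identity I would unfold Algorithm~\ref{alg:imp}. Writing $K_p = T_a^\star T_a \in \R^{p\times p}$ for the kernel matrix of $(X_i)_{i\le p}$, $M_\Sigma = T_a^\star\hat\Sigma T_a$, $M_L = T_a^\star(\hat L + \mu I)T_a$, and $b = T_a^\star\widehat{S^\star g_\rho}$, the algorithm computes the matrix GEVD $(\lambda_{i,\mu}, u_{i,\mu})$ of $(M_\Sigma, M_L)$ normalized by $u_{i,\mu}^\top M_L u_{j,\mu} = \ind{i=j}$, and returns $\hat{g}_p = S T_a c$ with $c = \sum_i \psi(\lambda_{i,\mu})\, u_{i,\mu}(u_{i,\mu}^\top b)$. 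Substituting $c$ and using $u_{i,\mu}^\top b = \scap{T_a u_{i,\mu}}{\widehat{S^\star g_\rho}}$, I obtain $\hat{g}_p = \sum_i \psi(\lambda_{i,\mu})\scap{\widehat{S^\star g_\rho}}{\theta_i} S\theta_i$ with $\theta_i := T_a u_{i,\mu}$. This already has the shape of $G(\widehat{S^\star g_\rho}, \cdot)$, so it remains only to identify $(\lambda_{i,\mu}, \theta_i)$ as the GEVD of the operator pair $(P\hat\Sigma P,\, P\hat L P + \mu P)$ coming from Algorithm~\ref{alg:imp}.

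The main step is to promote this matrix GEVD to an operator GEVD. Assuming the $(X_i)_{i\le p}$ distinct so that $K_p$ is invertible (otherwise one uses the pseudoinverse and restricts to $\ima T_a$), the orthogonal projector onto ${\cal H}_p = \ima T_a$ is $P = T_a K_p^{-1} T_a^\star$. Since $P T_a = T_a$, a short computation gives $P\hat\Sigma P\,\theta_i = T_a K_p^{-1} M_\Sigma u_{i,\mu}$ and $P(\hat L + \mu I)P\,\theta_i = T_a K_p^{-1} M_L u_{i,\mu}$, where $P^2 = P$ lets me rewrite $P(\hat L + \mu I)P = P\hat L P + \mu P$. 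The matrix relation $M_\Sigma u_{i,\mu} = \lambda_{i,\mu} M_L u_{i,\mu}$ then delivers the operator eigen-relation $P\hat\Sigma P\,\theta_i = \lambda_{i,\mu}(P\hat L P + \mu P)\theta_i$, and the normalization descends identically because $\scap{\theta_i}{(P\hat L P + \mu P)\theta_j} = u_{i,\mu}^\top K_p K_p^{-1} M_L u_{j,\mu} = u_{i,\mu}^\top M_L u_{j,\mu} = \ind{i=j}$. As the $(u_{i,\mu})$ span $\R^p$, the $(\theta_i)$ form a $(P\hat L P+\mu P)$-orthonormal basis of ${\cal H}_p$, while both operators annihilate the common complement ${\cal H}_p^\perp$; hence these $p$ pairs constitute the GEVD of the (degenerate) pair and the sum defining $G$ reduces to exactly the $p$ terms produced by the algorithm. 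Comparing with the definition of $G$ closes the argument.

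The only delicate point — and the main obstacle — is the non-isometry of the parameterization: because $T_a^\star T_a = K_p \neq I$, the finite-dimensional object is a genuinely \emph{generalized} eigenproblem, and the clean operator correspondence surfaces only after inserting $P = T_a K_p^{-1}T_a^\star$ and tracking how $K_p$ cancels against $K_p^{-1}$ in both the eigen-relation and the $M_L$-orthonormality. The rest is bookkeeping: repeated sample points (handled by the pseudoinverse and restriction to $\ima T_a$) and the convention on infinite generalized eigenvalues inherited from Eq.~\eqref{eq:filtering_zero}, which concern only the common kernel ${\cal H}_p^\perp$ and do not affect the finite sum selected by $\psi$.
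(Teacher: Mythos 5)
Your proof is correct and takes essentially the same route as the paper, whose own proof is a one-line assertion that the claim is ``a direct application'' of Assumptions~\ref{ass:source} and \ref{ass:approximation}, Eq.~\eqref{eq:filtering_zero} and Algorithm~\ref{alg:imp}; your argument simply supplies the missing details, in particular the lifting of the matrix generalized eigenvalue decomposition to the operator pair via $P = T_a K_p^{-1} T_a^\star$ and the cancellation of $K_p$ in both the eigen-relation and the normalization. Note that in doing so you have implicitly corrected a typo in the statement: the operator pair should read $(P\hat\Sigma P,\, P\hat{L}P + \mu P)$, consistently with Algorithm~\ref{alg:imp} and the rest of the appendix, which is exactly what you prove.
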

\begin{proof}
  This is a direct application of Assumptions \ref{ass:source},
  \ref{ass:approximation}, Eq. \eqref{eq:filtering_zero} and Algorithm \ref{alg:imp}.
\end{proof}

The main point of the proof is to relate $g_\rho$ to $\hat{g}_p$. To do so, we
will use several functions in $L^2$ generated by $G$. We detail our steps in
Table \ref{tab:step}.
Table \ref{tab:step} gives a first answer to the two questions asked in the
opening of Section \ref{sec:statistics}.
The number of unlabelled data controls the convergence of the operators
$(P\hat\Sigma P, P\hat LP+\mu P)$ towards $(\Sigma, L + \mu)$.
The number of labelled data controls the convergence of the vector
$\widehat{S^\star g_\rho}$ towards  $S^\star g_\rho$.
Priors on the structure of the problem, such as source and approximation
conditions, control the convergence of the bias estimate $g_{\lambda, \mu}$
towards $g_\rho$.
Furthermore, a more precise study reveals that the concentration of
operators are related to efficient dimension \citep{Caponnetto2007}
and are accelerated by capacity assumptions on the functional space whose norm is
$\norm{g} = \norm{({\cal L} + K^{-1})^{1/2}g}_{L^2}$,
and that the concentration of the vector
$\widehat{S^\star g_\rho}$ is accelerated by assumptions on moments of the variable
$Y(I + \lambda {\cal L})^{-1}\delta_X$
(inheriting randomness from $(X, Y) \sim \rho$).

\begin{table}[h]
  \caption{Steps in the consistency analysis}
  \label{tab:step}
  \vskip 0.1in
  \centering
  \begin{tabular}{cccc}
    \toprule
    Estimate& Vector & Property of convergence & Basis \\
    \midrule 
             &&&\\
    $\hat{g}_p$ & $\widehat{S^\star g_\rho}$ & & $(P\hat\Sigma P, P\hat LP+\mu P)$ \\
             &&&\\
             && Low-rank approximation \citep{Rudi2015} & $\downarrow$ \\
             &&&\\
    $\hat{g}$ & $\widehat{S^\star g_\rho}$ & & $(\hat\Sigma , \hat L+\mu )$ \\
             &&&\\
             && Concentration for self-adjoint operators \citep{Minsker2017} & $\downarrow$ \\
             &&&\\
     $g_{n_\ell}$ & $\widehat{S^\star g_\rho}$ & & $(\Sigma, L +\mu)$ \\
             &&&\\
             && Concentration for vector in Hilbert space \citep{Yurinskii1970} & $\downarrow$ \\
             &&&\\
    $g_{\lambda, \mu}$ & $S^\star g_\rho$ & & $(\Sigma, L +\mu)$ \\
             &&&\\
             && Bias controlled with source condition \citep{Caponnetto2007,Lin2020} & $\downarrow$ \\
             &&&\\
    $g_{\lambda}$ & $S^\star g_\rho$ & & $(\Sigma,L)$ \\
             &&&\\
            $\downarrow$ && Bias controlled with source condition \citep{Caponnetto2007,Lin2020} & \\
             &&&\\
    $g_\rho$ &&& \\
    \bottomrule
  \end{tabular}
\end{table}

\paragraph{Control of biases.}
We begin our study in a downward fashion regarding Table \ref{tab:step}.
Indeed, for Tikhonov regularization Eq. \eqref{eq:laplacian_tikhonov},
we can show that for $q \in [0, 1]$,
\[
  \norm{g_\lambda - g_\rho}_{L^2}
  \leq \lambda^q \norm{{\cal L}^q g_\rho}_{L^2}.
\]
Meaning that if we have the source condition $g_\rho \in \ima {\cal L}^q$
(which is a condition on how fast $(\scap{g_\rho}{e_i})_{i\in\N}$ decreases
compared to $(\lambda_i)_{i\in\N}$ for $(\lambda_i, e_i)$ the eigen value
decomposition of ${\cal L}^{-1}$), the rates of convergence of this term when
$n$ goes to infinity is controlled by the regularization scheme $\lambda_n^q$.

In a similar fashion to the kernel-free bias above,
for $(q_i) \in (0, 1)^\N$, we can have
\[
  \norm{g_{\lambda, \mu} - g_\lambda}_{L^2}^2
  \leq 2 \sum_{i\in\N} \lambda^{2q_i} \mu^{2q_i} \paren{\frac{\lambda_i}{\lambda+\lambda_i}}^2\abs{\scap{e_i}{g_\rho}}^2 \norm{K^{-q_i}e_i}_{L^2}^2.
\]
This shows explicitly the usefulness of controlling at the same time how
$g_\rho$ is supported on the eigen spaces of ${\cal L}$ and how the eigen
vectors are well approximated by the RKHS ${\cal H}$, which can be read in the
value of $(q_i)$ such that all $e_i \in \ima K^{q_i}$.

\paragraph{Vector concentration.}
Let us now switch to concentration of
$\widehat{S^\star g_\rho} = n_\ell^{-1}\sum_{i=1}^{n_\ell} Y_i k_{X_i}$
towards $S^\star g_\rho = \E_{(X, Y)\sim \rho}\bracket{Yk_X}$,
it will allow to control $\norm{g_{n_\ell} - g_{\lambda, \mu}}_{L^2}^2$
with the notations appearing in Table~\ref{tab:step}.
Note how this convergence should be measured in term of
the reconstruction error
\[
  \norm{\sum_{i\in\N} \psi(\lambda_{i,\mu})
    \scap{S^\star g_\rho - \widehat{S^\star g_\rho}}{\theta_{i,\mu}}
    S\theta_{i,\mu}}_{L^2}.
\]
This error might behave in a must better fashion than the $L^2$ error between
$S S^\star g_\rho$ and $S\widehat{S^\star g_\rho}$.
In particular, on Figure \ref{fig:intro}, we can consider
$\psi(\lambda_{i,\mu})= 0$ for $i > 4$, and we might have
$\scap{Yk_{X}}{\theta_{i,\mu}} = Y\ind{x\in C_i}$, for $i \leq 4$ and $(X, Y) \in
\X\times\Y$, where $C_i$ is the $i$-th innermost circle. In this setting, when
all four $\paren{Y\midvert X\in C_i}$ are deterministic, we only need
one labelled point per circle to clear the reconstruction error.
Based on concentration results in Hilbert space,
when $\abs{Y}$ is bounded by a constant $c_\Y$, and $x\to k(x, x)$ by a constant
$\kappa^2$, we have, with ${\cal D}_{n_\ell}\sim\rho^{\otimes {n_\ell}}$ the dataset
generating the labelled data 
\[
  \E_{{\cal D}_{n_\ell}}\bracket{\norm{g_{n_\ell} - g_{\lambda, \mu}}^2_{L^2}}
  \leq 2 \sigma_\ell^2 (\mu\lambda n_\ell)^{-1} + \frac{4}{9} c_\Y^2 \kappa^2 (\mu\lambda n_\ell^2)^{-1}.
\]
where $\sigma_\ell^2 \leq c_\Y^2 \trace(\Sigma)$ is a variance parameter to relate
to the variance of $Y(I + \lambda {\cal L})^{-1} \delta_X$ (where the
randomness is inherited from $(X, Y)\sim\rho$).
The fact that the need for labelled data depends on the variance of $(X, Y)$
after being diffused through ${\cal L}$ is coherent with the results obtained by
\citet{Lelarge2019} in the specific case of a mixture of two Gaussians.

\paragraph{Basis concentration.}
We are left with the comparison of $g_{n_\ell}$, which is the filtering of
$\widehat{S^\star g_\rho}$ with the operators $(\Sigma, L + \mu)$,
and $\hat{g}_p$, which is the filtering of the same vector with the operators
$(P\hat\Sigma P, P \hat{L} P + \mu P)$.
As the number of samples grows towards infinity, we know that
$(P\hat\Sigma P, P \hat{L} P + \mu P)$ will converge in
operator norm towards $(\Sigma, L + \mu)$.
Yet, how to leverage this property to quantify the convergence of $\hat{g}_p$
towards $g_{n_\ell}$?
Let us write $(\lambda_i, \theta_i) = \operatorname{GEVD}(\Sigma, L + \mu)$, and
$(\lambda_i', \theta_i') = \operatorname{GEVD}(P\hat\Sigma P, P \hat{L} P + \mu P)$, we have
\[
  \norm{\hat{g}_p - g_{n_\ell}}_{L^2} = \norm{\sum_{i\in\N}
    \psi(\lambda_i)\scap{\theta_i}{\hat\theta_\rho} S\theta_i -
    \psi(\lambda_i')\scap{\theta_i'}{\hat\theta_\rho} S\theta_i'}
  \qquad\text{with}\qquad \hat\theta_\rho = \widehat{S^\star g_\rho}.
\]
The generic study of this quantity requires to control eigenspaces one by one.
Note that we expect convergence of eigenspaces to depend on gaps between eigenvalues.
However, when considering Tikhonov regularization, this quantity can be
written under a simpler form.
In particular, the concentration of operators is controlled, up to few leftovers,
through the quantity
$\norm{(\Sigma + \lambda L + \mu\lambda)^{-1/2}((\Sigma - \hat\Sigma) +
  \lambda(L  - \hat{L}))(\Sigma + \lambda L + \mu\lambda)^{-1/2}}_{\op}$
where $\norm{\cdot}_{\op}$ designs the operator norm.
In this setting, the low-rank approximation is controlled through
$\norm{(\Sigma + \lambda L)^{1/2}(I - P)}_{\op}$, and when $L \preceq c\Sigma^\alpha$,
this term can be controlled by
$\norm{\Sigma^{1/2} (I - P)}_{\op} +
\lambda^{1/2}\norm{\Sigma^{1/2}(I-P)}^\alpha_{\op}$
which can be controlled based on the work of \citet{Rudi2015}.

\subsection{Risk decomposition}
In this subsection, we decompose the risk appearing in Theorem \ref{thm:consistency}.

\subsubsection{Control of biases}
We begin by splitting the error $\norm{g_\rho - \hat{g}_p}_{L^2}$ between a
bias term due to the regularization parameters and a variance term due to the
data.
With the notation of Table \ref{tab:step},
\begin{equation}
  \norm{g_\rho - \hat{g}_p}_{L^2} \leq \norm{g_\rho - g_\lambda}_{L^2} +
  \norm{g_\lambda - g_{\lambda, \mu}}_{L^2} + \norm{g_{\lambda, \mu} - \hat{g}_p}_{L^2}.
\end{equation}
We will control the first two terms here, and the last term in the following
subsections.

\begin{proposition}[Bias in $\lambda$]
  Under Assumption \ref{ass:source}
  \begin{equation}
    \norm{g_\lambda - g_\rho}_{L^2} \leq \lambda \norm{{\cal L}g_\rho}_{L^2}.
  \end{equation}
\end{proposition}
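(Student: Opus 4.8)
The plan is to start from the closed form $g_\lambda = (I + \lambda {\cal L})^{-1} g_\rho$ recorded in Eq.~\eqref{eq:laplacian_tikhonov} and to estimate the error by spectral calculus in the eigenbasis of ${\cal L}$. First I would rewrite the difference as a single operator applied to $g_\rho$. Since the operators commute, $(I+\lambda{\cal L})^{-1} - I = (I+\lambda{\cal L})^{-1}\paren{I - (I+\lambda{\cal L})} = -\lambda (I+\lambda{\cal L})^{-1}{\cal L}$, so that $g_\lambda - g_\rho = -\lambda (I+\lambda{\cal L})^{-1}{\cal L} g_\rho$.

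Next I would diagonalise. Using the eigenvalue decomposition of Eq.~\eqref{eq:evd}, write ${\cal L} = \sum_{i\in\N} \nu_i\, e_i \otimes e_i$ with $\nu_i = \lambda_i^{-1} \geq 0$ the nonnegative eigenvalues and $(e_i)$ an orthonormal basis of $L^2$, and expand $g_\rho = \sum_{i} \scap{g_\rho}{e_i} e_i$. Applying the operator above coordinatewise, each mode $e_i$ is scaled by $\frac{1}{1+\lambda\nu_i} - 1 = -\frac{\lambda\nu_i}{1+\lambda\nu_i}$, whence $\norm{g_\lambda - g_\rho}_{L^2}^2 = \sum_{i\in\N} \paren{\frac{\lambda \nu_i}{1 + \lambda \nu_i}}^2 \scap{g_\rho}{e_i}^2$. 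The estimate then follows from the elementary bound $\frac{\lambda\nu_i}{1+\lambda\nu_i} \leq \lambda\nu_i$, valid because $1 + \lambda\nu_i \geq 1$ for every $i$ — which is exactly where the nonnegativity of the spectrum of ${\cal L}$ enters. Substituting gives $\norm{g_\lambda - g_\rho}_{L^2}^2 \leq \lambda^2 \sum_{i} \nu_i^2 \scap{g_\rho}{e_i}^2 = \lambda^2 \norm{{\cal L} g_\rho}_{L^2}^2$, and taking square roots yields the claim.

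The role of Assumption~\ref{ass:source} is to make this computation rigorous rather than merely formal: it guarantees $g_\rho \in \Span\brace{e_i}_{i\leq r}$, so that $g_\rho$ lies in the domain of ${\cal L}$, the sums above are in fact finite, and $\norm{{\cal L}g_\rho}_{L^2}$ is finite. I do not anticipate a genuine obstacle here; the only points requiring care are (\emph{i}) confirming that the decomposition \eqref{eq:evd} applies, i.e. that ${\cal L}$ is self-adjoint with discrete nonnegative spectrum, which is established earlier under the standing compactness hypothesis on ${\cal L}^{-1}$, and (\emph{ii}) the null-space convention, on which both $g_\lambda - g_\rho$ and ${\cal L}g_\rho$ vanish (since $\nu_i = 0$ there), so those modes contribute zero to both sides and the inequality is unaffected. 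The very same computation, replacing the bound above by the sharper $\frac{\lambda\nu_i}{1+\lambda\nu_i} \leq (\lambda\nu_i)^q$ for $q\in[0,1]$, would recover the general source-condition estimate $\norm{g_\lambda - g_\rho}_{L^2} \leq \lambda^q \norm{{\cal L}^q g_\rho}_{L^2}$ quoted in the proof sketch, of which the present statement is the case $q=1$.
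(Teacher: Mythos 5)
Your proof is correct and follows essentially the same route as the paper's: both start from the closed form $g_\lambda = (I+\lambda{\cal L})^{-1}g_\rho$ and use the spectral decomposition \eqref{eq:evd} of ${\cal L}$, with Assumption \ref{ass:source} serving to guarantee that $\norm{{\cal L}g_\rho}_{L^2}$ is finite. Your version is in fact slightly more careful than the paper's, whose displayed identity $g_\lambda - g_\rho = -\lambda{\cal L}g_\rho$ drops the resolvent factor $(I+\lambda{\cal L})^{-1}$ that you correctly retain; keeping that contraction (operator norm at most $1$, precisely because the spectrum of ${\cal L}$ is nonnegative) is what turns the equality into the stated inequality.
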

\begin{proof}
  Based on the definition of $g_\lambda = (I + \lambda {\cal L})^{-1}g_\rho$, we
  have
  \[
    g_\lambda - g_\rho = ((I + \lambda {\cal L})^{-1} - I)g_\rho
    = -\lambda {\cal L} g_\rho.
  \]
  Because $g_\rho$ is supported on the first eigen vectors of the Laplacian
  (Assumption \ref{ass:source}), we have
  $g_\rho = \sum_{i=1}^r \scap{g_\rho}{e_i} e_i$, with $e_i$ the eigen
  vector of ${\cal L}$ appearing in Eq.~\eqref{eq:evd}, and
  \[
    \norm{{\cal L}g_\rho}_{L^2}^2 = \norm{\sum_{i=1}^r \lambda_i^{-1}
      \scap{g_\rho}{e_i} e_i}_{L^2}^2
    = \sum_{i=1}^r \lambda_i^{-2} \scap{g_\rho}{e_i}^2 \leq
    \lambda_r^{-2} \norm{g_\rho}_{L^2}^2 < +\infty.
  \]
  This ends the proof of this proposition.
\end{proof}

\begin{proposition}[Bias in $\mu$]
  Under Assumptions \ref{ass:source} and \ref{ass:approximation}, we have
  \begin{equation}
    \norm{g_{\lambda, \mu} - g_\lambda}_{L^2}^2
    \leq \lambda\mu c_a^2\norm{g_\rho}^2_{L^2},\qquad\text{with}\qquad
    c_a^2 = \sum_{i=1}^r \norm{K^{-1/2}e_i}_{L^2}^2 = \sum_{i=1}^r \norm{e_i}_{\cal H}^2.
  \end{equation}
\end{proposition}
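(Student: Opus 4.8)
The plan is to reduce the whole estimate to a bound on the RKHS norm of the $\lambda$-biased solution $g_\lambda$, and then to cash that in through the source and approximation conditions. First I would write both biased estimates through the operators $A := \Sigma + \lambda L$ and $A_\mu := A + \lambda\mu I = \Sigma + \lambda L + \lambda\mu I$ on ${\cal H}$, namely $g_\lambda = S A^{-1} S^\star g_\rho$ and $g_{\lambda,\mu} = S A_\mu^{-1} S^\star g_\rho$. The second identity is exactly the normal-equation minimizer of the parameterized problem \eqref{eq:tikhonov} already displayed in the main text, and the first is its $\mu=0$ analogue, which matches the filtering representation \eqref{eq:filtering_zero}; the inverse $A^{-1}$ is legitimate as a pseudo-inverse under Assumption \ref{ass:approximation}, which guarantees $g_\rho \in \ima S$ and hence that $\theta_\lambda := A^{-1}S^\star g_\rho$ (with $g_\lambda = S\theta_\lambda$) is well defined.

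The second step is a resolvent manipulation. The identity $A_\mu^{-1} - A^{-1} = -\lambda\mu\, A_\mu^{-1} A^{-1}$ gives $g_{\lambda,\mu} - g_\lambda = -\lambda\mu\, S A_\mu^{-1}\theta_\lambda$, hence
\[
  \norm{g_{\lambda,\mu}-g_\lambda}_{L^2}^2 = \lambda^2\mu^2\,\scap{A_\mu^{-1}\theta_\lambda}{\Sigma A_\mu^{-1}\theta_\lambda}_{\cal H}.
\]
Since $L \succeq 0$ we have both $\Sigma \preceq A_\mu$ and $A_\mu \succeq \lambda\mu I$, so that $\scap{A_\mu^{-1}\theta_\lambda}{\Sigma A_\mu^{-1}\theta_\lambda} \le \scap{A_\mu^{-1}\theta_\lambda}{\theta_\lambda} \le (\lambda\mu)^{-1}\norm{\theta_\lambda}_{\cal H}^2$. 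This collapses the bound to $\norm{g_{\lambda,\mu}-g_\lambda}_{L^2}^2 \le \lambda\mu\,\norm{\theta_\lambda}_{\cal H}^2$, so it only remains to control $\norm{\theta_\lambda}_{\cal H}$.

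The last step identifies and bounds that norm. Because $\theta_\lambda$ is the minimal-norm representer of $g_\lambda = S\theta_\lambda$, it lies in $(\ker S)^\perp$ and $\norm{\theta_\lambda}_{\cal H}$ equals the RKHS norm $\norm{g_\lambda}_{\cal H} = \norm{K^{-1/2}g_\lambda}_{L^2}$ of the function $g_\lambda$. Now Assumption \ref{ass:source} forces $g_\rho = \sum_{i=1}^r \beta_i e_i$ with $\beta_i = \scap{g_\rho}{e_i}_{L^2}$, and the eigen-expansion of $g_\lambda = (I+\lambda{\cal L})^{-1}g_\rho$ yields $g_\lambda = \sum_{i=1}^r \tfrac{\lambda_i}{\lambda_i+\lambda}\beta_i e_i$, with $\lambda_i$ the eigenvalues of ${\cal L}^{-1}$; Assumption \ref{ass:approximation} ensures each $e_i \in {\cal H}$, so $c_a$ is finite. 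Since $\tfrac{\lambda_i}{\lambda_i+\lambda} \le 1$, the triangle inequality followed by Cauchy--Schwarz gives
\[
  \norm{g_\lambda}_{\cal H} \le \sum_{i=1}^r |\beta_i|\,\norm{e_i}_{\cal H} \le \Big(\sum_{i=1}^r \beta_i^2\Big)^{1/2}\Big(\sum_{i=1}^r \norm{e_i}_{\cal H}^2\Big)^{1/2} = c_a\,\norm{g_\rho}_{L^2},
\]
and combining the two displays yields $\norm{g_{\lambda,\mu}-g_\lambda}_{L^2}^2 \le \lambda\mu\,c_a^2\,\norm{g_\rho}_{L^2}^2$.

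The main obstacle I anticipate is bookkeeping around operators that are injective but not boundedly invertible: $A = \Sigma + \lambda L$ is compact with $0$ non-isolated in its spectrum, so the resolvent identity and the isometry $\norm{\theta_\lambda}_{\cal H} = \norm{g_\lambda}_{\cal H}$ must be read through the eigen-expansion/pseudo-inverse of Eq. \eqref{eq:filtering_zero} rather than through naive operator inverses. The second delicate point is that the $e_i$ are $L^2$-orthonormal but \emph{not} ${\cal H}$-orthonormal, which is precisely why the final Cauchy--Schwarz step must produce $c_a^2 = \sum_{i\le r}\norm{e_i}_{\cal H}^2$ rather than $\norm{g_\rho}_{\cal H}^2$; a naive "coefficient shrinkage" argument for $\norm{g_\lambda}_{\cal H}\le\norm{g_\rho}_{\cal H}$ would fail here.
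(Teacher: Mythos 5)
Your proof is correct, and it is in essence the paper's own argument transported from the $L^2$ side to the parameter space ${\cal H}$. The paper starts from $g_{\lambda,\mu}-g_\lambda=\bigl((I+\lambda{\cal L}+\mu\lambda K^{-1})^{-1}-(I+\lambda{\cal L})^{-1}\bigr)g_\rho$, applies the same resolvent identity $A^{-1}-B^{-1}=A^{-1}(B-A)B^{-1}$, and extracts the factor $\lambda\mu$ via the two operator-norm bounds $\norm{(I+\lambda{\cal L}+\mu\lambda K^{-1})^{-1/2}}_{\op}\leq 1$ and $\norm{(I+\lambda{\cal L}+\mu\lambda K^{-1})^{-1/2}(\lambda\mu K^{-1})^{1/2}}_{\op}\leq 1$; your L\"owner bounds $\Sigma\preceq A_\mu$ and $A_\mu^{-1}\preceq(\lambda\mu)^{-1}I$ are precisely the ${\cal H}$-side counterparts of those two estimates. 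Both routes then pass through the identical intermediate quantity, since your $\norm{\theta_\lambda}_{\cal H}$ equals the paper's $\norm{K^{-1/2}(I+\lambda{\cal L})^{-1}g_\rho}_{L^2}$, and both finish with the same eigen-expansion, the shrinkage $\lambda_i/(\lambda_i+\lambda)\leq 1$, triangle inequality, and Cauchy--Schwarz yielding $c_a\norm{g_\rho}_{L^2}$. The only substantive trade-off: the paper manipulates the unbounded operator $K^{-1}$ on $L^2$ with square-root splittings, while you invert only the coercive $A_\mu$ and treat $A=\Sigma+\lambda L$ through a pseudo-inverse; your minimal-norm-representer identification is sound because $\ker(\Sigma+\lambda L)=\ker S$ (indeed $\ker S\subseteq\ker L$ follows from $L=S^\star{\cal L}S$), so every representer of $g_\lambda$ solves the $\mu=0$ normal equation and the minimal-norm solution is exactly the minimal-norm representer. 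Neither version is more or less rigorous than the other; both sit at the paper's level of formality regarding inverses of compact or unbounded operators.
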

\begin{proof}
  Before diving into the proof, recall that the RKHS norm penalization can be
  written as $\norm{g}_{\cal H} = \norm{K^{-1/2}g}_{L^2}$.
  Using the fact that $A^{-1} - B^{-1} = A^{-1} (B - A) B^{-1}$, we have
  \begin{align*}
    g_{\lambda, \mu} - g_\lambda &=
    ((I + \lambda {\cal L} + \mu\lambda K^{-1})^{-1} - (I + \lambda {\cal L})^{-1}) g_\rho
    \\&= - (I + \lambda {\cal L} + \mu\lambda K^{-1})^{-1} \lambda\mu K^{-1}(I + \lambda {\cal L})^{-1} g_\rho
    \\&= - (\lambda\mu)^{1/2} (I + \lambda {\cal L} + \mu\lambda K^{-1})^{-1/2} (I + \lambda {\cal L} + \mu\lambda K^{-1})^{-1/2} \\&
    \qquad\cdots\times (\lambda\mu K^{-1})^{1/2} K^{-1/2}(I + \lambda {\cal L})^{-1} g_\rho.
  \end{align*}
  As a consequence, 
  \begin{align*}
    \norm{g_{\lambda, \mu} - g_\lambda}^2_{L^2}
    &\leq \lambda \mu \norm{K^{-1/2}(I + \lambda {\cal L})^{-1} g_\rho}^2_{L^2},
  \end{align*}
  where we used the fact that
  $I + \lambda {\cal L} + \mu\lambda K^{-1} \succeq I$, so that
  $\norm{(I + \lambda {\cal L} + \mu\lambda K^{-1})^{-1/2}}_{\op} \leq 1$
  (with $\norm{\cdot}_{\op}$ the operator norm), and that
  \begin{align*}
    \norm{(I + \lambda {\cal L} + \mu\lambda K^{-1})^{-1/2} (\lambda\mu K^{-1})^{1/2}}_{\op}^2
    &=
      \lambda\mu\norm{K^{-1/2}(I + \lambda {\cal L} + \mu\lambda K^{-1})^{-1} K^{-1/2}}_{\op}
    \\&=
      \lambda\mu\norm{(K + \lambda K^{1/2}{\cal L}K^{1/2} + \mu\lambda)^{-1}}_{\op}
    \leq 1.
\end{align*}
  We continue the proof with
  \begin{align*}
    \norm{K^{-1/2}(I + \lambda {\cal L})^{-1} g_\rho}
    &= \norm{\sum_{i=1}^r \frac{\lambda_i}{\lambda + \lambda_i} \scap{g_\rho}{e_i} K^{-1/2}e_i}
    \leq \sum_{i=1}^r \frac{\lambda_i}{\lambda + \lambda_i} \abs{\scap{g_\rho}{e_i}} \norm{K^{-1/2}e_i}
    \\&\leq \sum_{i=1}^r \abs{\scap{g_\rho}{e_i}} \norm{K^{-1/2}e_i}_{L^2}
    \leq \norm{g_\rho}_{L^2} \paren{\sum_{i\leq r} \norm{K^{-1/2}e_i}^2_{L^2}}^{1/2}.
  \end{align*}
  Putting all the pieces together ends the proof.
\end{proof}

\subsubsection{Vector concentration}
We are left with the study of the variance $\norm{\hat{g}_p - g_{\lambda,\mu}}$.
To ease derivations, we denote
$C = \Sigma + \lambda L$, $\hat{C} = \hat\Sigma + \lambda \hat L$,
$\theta_\rho = S^\star g_\rho$, $\hat\theta_\rho = \widehat{S^\star g_\rho}$
and $P$ the projection from ${\cal H}$ to $\Span\brace{k_{X_i}}_{i\leq p}$.
We have, for Tikhonov regularization
\begin{align*}
  \norm{\hat{g}_p - g_{\lambda, \mu}}_{L^2}
  &= \norm{S \paren{P(P\hat{C}P + \lambda\mu)^{-1} P \hat\theta_\rho - (C + \lambda\mu)^{-1}\theta_\rho}}_{L^2}\\
  &= \norm{\Sigma^{1/2} \paren{P(P\hat{C}P + \lambda\mu)^{-1} P \hat\theta_\rho - (C + \lambda\mu)^{-1}\theta_\rho}}_{\cal H}.
\end{align*}
We begin by isolating the dependency to labelled data
\begin{equation}
\begin{split}
  \norm{\hat{g}_p - g_{\lambda,\mu}}_{L^2}
  &\leq \norm{\Sigma^{1/2} P(P\hat{C}P + \lambda\mu)^{-1} (\hat\theta_\rho - \theta_\rho)}_{\cal H}
  \\&\qquad\cdots+ \norm{\Sigma^{1/2} \paren{P(P\hat{C}P + \lambda\mu)^{-1} \theta_\rho - (C+\lambda\mu)^{-1}\theta_\rho}}_{\cal H}.
\end{split}
\end{equation}
We will control the first term here, and the second term in the following
subsection.

\begin{lemma}[Vector term]
  When $\norm{(C + \lambda\mu)^{-1/2} (\hat{C} - C) (C+\lambda\mu)^{-1/2}}_{\op}
  \leq 1/2$, we have
  \begin{equation}
    \norm{\Sigma^{1/2} P(P\hat{C}P + \lambda\mu)^{-1} P (\hat\theta_\rho - \theta_\rho)}_{\cal H}
    \leq 2 \norm{(C + \lambda\mu)^{-1/2}(\hat\theta_\rho - \theta_\rho)}_{\cal H}.
  \end{equation}
\end{lemma}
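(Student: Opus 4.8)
The plan is to reduce the vector bound to a single operator-norm estimate by inserting $(C+\lambda\mu)^{1/2}(C+\lambda\mu)^{-1/2}$ next to $\hat\theta_\rho-\theta_\rho$, so that
\[
  \norm{\Sigma^{1/2} P(P\hat{C}P + \lambda\mu)^{-1} P (\hat\theta_\rho - \theta_\rho)}_{\cal H}
  \leq \norm{\Sigma^{1/2} P(P\hat{C}P + \lambda\mu)^{-1} P (C+\lambda\mu)^{1/2}}_{\op}\,
  \norm{(C+\lambda\mu)^{-1/2}(\hat\theta_\rho-\theta_\rho)}_{\cal H}.
\]
Writing $A = P\hat{C}P + \lambda\mu$, I first note that $A$ commutes with $P$ and is invertible (it acts as $\lambda\mu I$ on $(\ima P)^\perp$). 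Since $\lambda L\succeq 0$ and $\lambda\mu I\succeq 0$, one has $\Sigma\preceq C+\lambda\mu$, hence $\norm{\Sigma^{1/2}(C+\lambda\mu)^{-1/2}}_{\op}\le 1$. Inserting this factor, it suffices to prove the operator inequality $\norm{(C+\lambda\mu)^{1/2}PA^{-1}P(C+\lambda\mu)^{1/2}}_{\op}\le 2$.

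To control $A^{-1}$ I would turn the hypothesis into a two-sided bound: from $\norm{(C+\lambda\mu)^{-1/2}(\hat{C}-C)(C+\lambda\mu)^{-1/2}}_{\op}\le 1/2$ we get $\hat{C}-C\succeq-\tfrac12(C+\lambda\mu)$, hence $\hat{C}+\lambda\mu\succeq\tfrac12(C+\lambda\mu)$. Conjugating by $P$ and using that on $\ima P$ the operator $A$ equals $P(\hat{C}+\lambda\mu)P$, this gives $A\succeq\tfrac12 P(C+\lambda\mu)P$ on $\ima P$. Both sides are strictly positive on the finite-dimensional space $\ima P$, so inversion (order-reversing there) yields $A^{-1}\preceq 2\,(P(C+\lambda\mu)P)^\dagger$ on $\ima P$, and therefore $PA^{-1}P\preceq 2\,P(P(C+\lambda\mu)P)^\dagger P$ as operators on ${\cal H}$. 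It then remains to show $\norm{(C+\lambda\mu)^{1/2}P\,(P(C+\lambda\mu)P)^\dagger\,P(C+\lambda\mu)^{1/2}}_{\op}\le 1$.

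This last bound is an instance of the projector identity. Setting $W=(C+\lambda\mu)^{1/2}P$, we have $W^\star W=P(C+\lambda\mu)P$, so the operator in question is exactly $W(W^\star W)^\dagger W^\star$, the orthogonal projection onto $\ima W$ (which is closed, as $\ima P$ is finite-dimensional). An orthogonal projection has operator norm at most $1$, and chaining the three estimates produces the claimed factor $2$. The only real obstacle is bookkeeping with the projection: one must read $(P\hat{C}P+\lambda\mu)^{-1}$ as a genuine inverse on $\ima P$ (equivalently, a pseudo-inverse on ${\cal H}$), check that the inequality $\hat{C}+\lambda\mu\succeq\tfrac12(C+\lambda\mu)$ survives conjugation by $P$ and restriction to $\ima P$, and apply operator monotonicity of $x\mapsto x^{-1}$ only on the subspace where the relevant operators are strictly positive.
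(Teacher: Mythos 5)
Your proposal is correct and follows essentially the same route as the paper: the same splitting into an operator norm times $\norm{(C+\lambda\mu)^{-1/2}(\hat\theta_\rho-\theta_\rho)}_{\cal H}$, the same L\"owner-order argument turning the hypothesis into $P\hat{C}P+\lambda\mu \succeq \tfrac12(PCP+\lambda\mu)$ followed by operator anti-monotonicity of the inverse, and the same concluding fact that the remaining operator is an orthogonal projection. The only difference is that you make the projection step rigorous via the pseudo-inverse identity $W(W^\star W)^\dagger W^\star = P_{\ima W}$ with $W=(C+\lambda\mu)^{1/2}P$, where the paper simply asserts ``the last operator is a projection.''
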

\begin{proof}
  We begin with the splitting
  \begin{align*}
    \norm{\Sigma^{1/2} P(P\hat{C}P + \lambda\mu)^{-1} P (\hat\theta_\rho - \theta_\rho)}_{\cal H}
    &\leq \norm{\Sigma^{1/2} P(P\hat{C}P + \lambda\mu)^{-1} P (C + \lambda\mu)^{1/2}}_{\op}
    \\&\qquad\cdots \times\norm{(C + \lambda\mu)^{-1/2}(\hat\theta_\rho - \theta_\rho)}_{\cal H}.
  \end{align*}
  The first term will concentrate towards a matrix smaller than identity, while
  the second term concentrates towards zero.
  We can make those considerations more formal.
  Following basic properties with the L\"owner order on operators,
  we have 
  \begin{align*}
    &(C + \lambda\mu)^{-1/2} (C - \hat{C}) (C+\lambda\mu)^{-1/2} \preceq t
    \\\Rightarrow\qquad
    & \hat C \succeq (1 - t) C - t\lambda\mu
    \\\Rightarrow\qquad
    & P\hat CP \succeq (1 - t) PCP - t\lambda\mu P \succeq (1 - t) PCP - t\lambda\mu 
    \\\Rightarrow\qquad
    & P\hat CP + \lambda\mu \succeq (1 - t) (PCP + \lambda\mu)  
    \\\Rightarrow\qquad
    & (P\hat CP + \lambda\mu)^{-1} \preceq (1 - t)^{-1} (PCP + \lambda\mu)^{-1} 
    \\\Rightarrow\qquad
    & (C+\lambda\mu)^{1/2}P(P\hat CP + \lambda\mu)^{-1}P(C+\lambda \mu)^{1/2}
      \\&\qquad\preceq (1 - t)^{-1} (C+\lambda\mu)^{1/2}P(PCP + \lambda\mu)^{-1}P(C+\lambda\mu)^{1/2}
      \preceq (1-t)^{-1},
  \end{align*}
  where we have used the fact that the last operator is a projection.
  As a consequence, for any $t \in (0, 1)$, we have
  \begin{align*}
    &\norm{(C + \lambda\mu)^{-1/2} (\hat{C} - C) (C+\lambda\mu)^{-1/2}}_{\op} \leq t
    \\\qquad\Rightarrow\qquad&
                               \norm{(C + \lambda \mu)^{1/2} P(P\hat{C}P + \lambda\mu)^{-1}(C+\lambda\mu)^{1/2}}_{\op}
                               \leq (1-t)^{-1}.
    \\\qquad\Rightarrow\qquad&
                               \norm{\Sigma^{1/2} P(P\hat{C}P + \lambda\mu)^{-1}P (C+\lambda\mu)^{1/2}}_{\op}
                               \leq (1-t)^{-1}.
  \end{align*}
  Where the last implication follows from the fact that
  $C + \lambda \mu = \Sigma + \lambda L + \lambda\mu \succeq \Sigma$.
\end{proof}

\subsubsection{Basis concentration}
We are left with the study of the basis concentration with the number of
unlabelled data.

\begin{lemma}[Basis term]
  When $\norm{(C + \lambda\mu)^{-1/2} (\hat{C} - C) (C+\lambda\mu)^{-1/2}}_{\op}
  \leq 1/2$, we have
  \begin{equation}
    \begin{split}
    &\norm{\Sigma^{1/2}(P(P\hat{C}P + \lambda\mu)^{-1} - (C + \lambda\mu)^{-1})\theta_\rho}_{\cal H}
    \\&\qquad
    \leq 3 \norm{C^{1/2} (I - P)}_{\op} \norm{g_\lambda}_{\cal H}
    + 2 \norm{(C + \lambda\mu)^{-1/2} (\hat C - C) (C+\lambda\mu)^{-1}\theta_\rho}_{\cal H}.
    \end{split}
  \end{equation}
  Notice that Assumptions \ref{ass:source} and \ref{ass:approximation} imply
  $\norm{g_\lambda}_{\cal H} \leq c_a\norm{g_\rho}_{L^2} < +\infty$.
\end{lemma}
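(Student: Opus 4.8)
The plan is to reduce the stated bound to the tools already available: the operator ordering $\Sigma \preceq C \preceq C+\lambda\mu$, the operator-norm estimate $\norm{(C+\lambda\mu)^{1/2}P(P\hat{C}P+\lambda\mu)^{-1}P(C+\lambda\mu)^{1/2}}_{\op} \leq 2$ established inside the proof of the Vector term lemma (take $t=1/2$), and the elementary fact that $\norm{g_\lambda}_{\cal H} \leq c_a\norm{g_\rho}_{L^2}$ coming from the Bias in $\mu$ proposition. First I would record the harmless normalization $P(P\hat{C}P+\lambda\mu)^{-1} = P(P\hat{C}P+\lambda\mu)^{-1}P$, valid because the $\lambda\mu I$ summand makes the operator invertible and block-diagonal with respect to $P$ and $I-P$. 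Writing $\theta_{\lambda,\mu} = (C+\lambda\mu)^{-1}\theta_\rho$ (the parameter of $g_{\lambda,\mu}$) and substituting $\theta_\rho = (C+\lambda\mu)\theta_{\lambda,\mu}$ turns the quantity to bound into a difference of the projected empirical resolvent and the full true resolvent applied to $\theta_\rho$.

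The algebraic heart of the argument is the exact identity
\[
P(P\hat{C}P+\lambda\mu)^{-1}P\theta_\rho - \theta_{\lambda,\mu} = -(I-P)\theta_{\lambda,\mu} + P(P\hat{C}P+\lambda\mu)^{-1}P(C-\hat{C})P\theta_{\lambda,\mu} + P(P\hat{C}P+\lambda\mu)^{-1}PC(I-P)\theta_{\lambda,\mu},
\]
which I would derive by splitting $P\theta_\rho = (PCP+\lambda\mu)P\theta_{\lambda,\mu} + PC(I-P)\theta_{\lambda,\mu}$, then rewriting $PCP+\lambda\mu = (P\hat{C}P+\lambda\mu) + P(C-\hat{C})P$ and using $P(P\hat{C}P+\lambda\mu)^{-1}(P\hat{C}P+\lambda\mu)P = P$. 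After applying $\Sigma^{1/2}$ and the triangle inequality this yields three terms, one of which (the first) and one of which (the third) are exactly the two clean low-rank contributions, while the middle term is the perturbation contribution.

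For the two low-rank terms I would proceed as follows. The first, $\norm{\Sigma^{1/2}(I-P)\theta_{\lambda,\mu}}_{\cal H}$, is bounded using $\Sigma \preceq C$ and the idempotency of $I-P$ by $\norm{C^{1/2}(I-P)}_{\op}\norm{\theta_{\lambda,\mu}}_{\cal H}$, and then $\norm{\theta_{\lambda,\mu}}_{\cal H} \leq \norm{g_\lambda}_{\cal H}$ because $\theta_{\lambda,\mu} = (C+\lambda\mu)^{-1}C\,\theta_\lambda$ with $\norm{(C+\lambda\mu)^{-1}C}_{\op}\leq 1$. The third term factors as $\norm{\Sigma^{1/2}P(P\hat{C}P+\lambda\mu)^{-1}PC^{1/2}}_{\op}\,\norm{C^{1/2}(I-P)\theta_{\lambda,\mu}}_{\cal H}$; the operator-norm factor is $\leq 2$ by the Vector-lemma estimate together with $\Sigma,C \preceq C+\lambda\mu$, and the remaining factor is again $\leq \norm{C^{1/2}(I-P)}_{\op}\norm{g_\lambda}_{\cal H}$. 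Summing gives the advertised $3\norm{C^{1/2}(I-P)}_{\op}\norm{g_\lambda}_{\cal H}$ (coefficients $1+2$).

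The delicate step, which I expect to be the main obstacle, is the middle term $P(P\hat{C}P+\lambda\mu)^{-1}P(C-\hat{C})P\theta_{\lambda,\mu}$. Bounding its $\Sigma^{1/2}$-norm by the same operator-norm argument produces $2\norm{(C+\lambda\mu)^{-1/2}(\hat{C}-C)P\theta_{\lambda,\mu}}_{\cal H}$, which carries an extra projection $P$ inside, whereas the stated bound requires the full vector $(C+\lambda\mu)^{-1}\theta_\rho = \theta_{\lambda,\mu}$. The work is therefore to remove this $P$: writing $P\theta_{\lambda,\mu} = \theta_{\lambda,\mu} - (I-P)\theta_{\lambda,\mu}$ and controlling the residual $\norm{(C+\lambda\mu)^{-1/2}(\hat{C}-C)(I-P)\theta_{\lambda,\mu}}_{\cal H}$ via the standing hypothesis $\norm{(C+\lambda\mu)^{-1/2}(\hat{C}-C)(C+\lambda\mu)^{-1/2}}_{\op}\leq 1/2$ (equivalently $\hat{C}\preceq \tfrac32(C+\lambda\mu)$), which lets one dominate $\hat{C}-C$ on $\ima(I-P)$ and fold that piece back into the low-rank term. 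Careful bookkeeping of these constants is what makes the final coefficients land at $3$ and $2$; once it is done, the three bounds add up to the claimed inequality, and the closing remark $\norm{g_\lambda}_{\cal H}\leq c_a\norm{g_\rho}_{L^2}<+\infty$ follows directly from the Bias in $\mu$ computation under Assumptions \ref{ass:source} and \ref{ass:approximation}.
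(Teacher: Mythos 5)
Your three-term identity is exactly the paper's decomposition (the paper reaches it via $A^{-1}-B^{-1}=A^{-1}(B-A)B^{-1}$ followed by the split $P(C-\hat{C}P)=P(C-\hat{C})P+PC(I-P)$), and your handling of the first and third terms --- coefficients $1+2=3$, using the bound $\norm{\Sigma^{1/2}P(P\hat{C}P+\lambda\mu)^{-1}P(C+\lambda\mu)^{1/2}}_{\op}\leq 2$ inherited from the vector lemma, and $\norm{(C+\lambda\mu)^{-1}\theta_\rho}_{\cal H}\leq\norm{g_\lambda}_{\cal H}$ --- matches the paper's. The genuine gap is the middle term, and it is twofold. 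First, the operator-norm argument does not produce $2\norm{(C+\lambda\mu)^{-1/2}(\hat{C}-C)P\theta_{\lambda,\mu}}_{\cal H}$ as you state: it produces $2\norm{(C+\lambda\mu)^{-1/2}P(\hat{C}-C)P\theta_{\lambda,\mu}}_{\cal H}$, with a projection also to the \emph{left} of $\hat{C}-C$. You discard that left $P$ silently, but $P$ does not commute with $(C+\lambda\mu)^{-1/2}$, and the inequality $\norm{(C+\lambda\mu)^{-1/2}Pw}_{\cal H}\leq\norm{(C+\lambda\mu)^{-1/2}w}_{\cal H}$ is false in general (take $w$ along a top eigendirection of $C$ and $P$ a projection tilting it onto a direction where $C+\lambda\mu$ is small).

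Second, even granting the left $P$ removed, the residual bookkeeping does not land on the stated constants. The hypothesis gives $\norm{(C+\lambda\mu)^{-1/2}(\hat{C}-C)(I-P)\theta_{\lambda,\mu}}_{\cal H}\leq\tfrac{1}{2}\norm{(C+\lambda\mu)^{1/2}(I-P)\theta_{\lambda,\mu}}_{\cal H}$, and $\norm{(C+\lambda\mu)^{1/2}(I-P)\theta_{\lambda,\mu}}_{\cal H}^2=\norm{C^{1/2}(I-P)\theta_{\lambda,\mu}}_{\cal H}^2+\lambda\mu\norm{(I-P)\theta_{\lambda,\mu}}_{\cal H}^2$. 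The second piece is controlled only by $\sqrt{\lambda\mu}\,\norm{g_\lambda}_{\cal H}$, which is \emph{not} dominated by $\norm{C^{1/2}(I-P)}_{\op}\norm{g_\lambda}_{\cal H}$, because $C$ can be arbitrarily small on $\ima(I-P)$ relative to $\lambda\mu$; so this piece cannot be ``folded back into the low-rank term.'' Carried out, your route yields a coefficient $4$ on $\norm{C^{1/2}(I-P)}_{\op}\norm{g_\lambda}_{\cal H}$ plus a spurious additive $\sqrt{\lambda\mu}\,\norm{g_\lambda}_{\cal H}$, not the claimed $3$ and $2$. The paper instead removes the two projections \emph{simultaneously}, asserting $\norm{(C+\lambda\mu)^{-1/2}P(\hat{C}-C)Pu}_{\cal H}\leq\norm{(C+\lambda\mu)^{-1/2}(\hat{C}-C)u}_{\cal H}$ through a L\"owner-order comparison of the rank-one operators built from $(\hat{C}-C)u$; this keeps the middle term exactly at $2\norm{(C+\lambda\mu)^{-1/2}(\hat{C}-C)(C+\lambda\mu)^{-1}\theta_\rho}_{\cal H}$ with no leftover. (That step of the paper is itself delicate, since its starting claim $Pv\otimes vP\preceq v\otimes v$ fails for general vectors, but it is this simultaneous removal --- not per-side bookkeeping --- that the constants $3$ and $2$ rely on, and your sketch neither reproduces it nor supplies a valid substitute.)
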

\begin{proof}
  First of all, using that $A^{-1} - B^{-1} = A^{-1}(B - A)B^{-1}$, notice that
  \begin{align*}
    &\norm{\Sigma^{1/2}(P(P\hat{C}P + \lambda\mu)^{-1} - (C + \lambda\mu)^{-1})\theta_\rho}_{\cal H}
    \\&
    = \norm{\Sigma^{1/2}P(P\hat{C}P + \lambda\mu)^{-1}P(C - \hat{C}P )(C + \lambda\mu)^{-1}\theta_\rho
     - \Sigma^{1/2}(I-P)(C + \lambda\mu)^{-1}\theta_\rho}_{\cal H}
    \\&
    \leq \norm{\Sigma^{1/2}P(P\hat{C}P + \lambda\mu)^{-1}P(\hat{C}P - C)(C + \lambda\mu)^{-1}\theta_\rho}_{\cal H}
    + \norm{\Sigma^{1/2}(I-P)(C + \lambda\mu)^{-1}\theta_\rho}_{\cal H}
    \\&
    \leq \norm{\Sigma^{1/2}P(P\hat{C}P + \lambda\mu)^{-1}P (C+\lambda\mu)^{1/2}}_{\op}
    \norm{(C+\lambda\mu)^{-1/2}P(\hat{C}P - C)(C+\lambda\mu)^{-1}\theta_\rho}_{\cal H}
    \\&
    \qquad\cdots+ \norm{\Sigma^{1/2}(I-P)}_{\op}\norm{(C + \lambda\mu)^{-1}\theta_\rho}_{\cal H}.
  \end{align*}
  Because $\Sigma \preceq \Sigma + \lambda L = C$, we have
  $\norm{\Sigma^{1/2}(I-P)}_{\op} \leq \norm{C^{1/2} (I-P)}_{\op}$,
  and we also have
  \[
    \norm{(C+\lambda\mu)^{-1}\theta_\rho}_{\cal H} \leq
    \norm{C^{-1}\theta_\rho}_{\cal H}
    = \norm{K^{-1/2} S C^{-1}\theta_\rho}_{\cal H}
    = \norm{K^{-1/2}g_\lambda}_{L^2} = \norm{g_\lambda}_{\cal H}.
  \]
  Recall, that, for any $t \in (0, 1)$, we have already shown that
  \begin{align*}
    &\norm{(C + \lambda\mu)^{-1/2} (\hat{C} - C) (C+\lambda\mu)^{-1/2}}_{\op} \leq t
    \\\qquad\Rightarrow\qquad&
                               \norm{\Sigma^{1/2} P(P\hat{C}P + \lambda\mu)^{-1}P (C+\lambda\mu)^{1/2}}_{\op}
                               \leq (1-t)^{-1}.
  \end{align*}
  We are left with one last term to work on
  \begin{align*}
    \norm{(C+\lambda\mu)^{-1/2}P(\hat{C}P - C)(C+\lambda\mu)^{-1}\theta_\rho}_{\cal H}
    &\leq \norm{(C+\lambda\mu)^{-1/2}P(\hat{C} - C)P(C+\lambda\mu)^{-1}\theta_\rho}_{\cal H}
    \\&\cdots+ \norm{(C+\lambda\mu)^{-1/2}C (I-P)(C+\lambda\mu)^{-1}\theta_\rho}_{\cal H}.
  \end{align*}
  We control the first term with the fact for $A, B, C$ three self-adjoint
  operators and $x$ a vector we have
  \[
    \norm{APBPCx} = \norm{APBPC x\otimes x C PBPA}_{\op}^{1/2},
  \]
  and that
  \begin{align*}
    & PCx\otimes xCP \preceq Cx\otimes xC
    \\\qquad\Rightarrow\qquad&
                               PBPC x\otimes xCPBP \preceq BPCx\otimes xCPB \preceq BCx\otimes xCB
    \\\qquad\Rightarrow\qquad&
                               APBPC x\otimes xCPBPA \preceq ABCx\otimes xCBA,
  \end{align*}
  so that
  \[
    \norm{(C+\lambda\mu)^{-1/2}P(\hat{C} -
      C)P(C+\lambda\mu)^{-1}\theta_\rho}_{\cal H}
    \leq
    \norm{(C+\lambda\mu)^{-1/2}(\hat{C} - C)(C+\lambda\mu)^{-1}\theta_\rho}_{\cal H}.
  \]
  We control the second term with
  \begin{align*}
    &\norm{(C+\lambda\mu)^{-1/2}C (I-P)(C+\lambda\mu)^{-1}\theta_\rho}
    \\&\qquad\leq \norm{(C+\lambda\mu)^{-1/2}C^{1/2}}\norm{C^{1/2}(I-P)}\norm{(C+\lambda\mu)^{-1}\theta_\rho}.
  \end{align*}
  Using that
  \(
    (C+\lambda\mu)^{-1/2}C^{1/2}\preceq I,
  \)
  we can add up everything to get the lemma.

  For the part concerning $\norm{g_\lambda}_{\cal H}$, notice that
    \begin{align*}
      \norm{g_\lambda}_{\cal H}& = \norm{K^{-1/2}g_\lambda}_{L^2}
      = \norm{ \sum_{i=1}^r \frac{\lambda_i}{\lambda_i +
        \lambda}\scap{g_\rho}{e_i} K^{-1/2} e_i}_{L^2}
        \leq \sum_{i=1}^r \abs{g_\rho}{e_i} \norm{K^{-1/2}e_i}
      \\&\leq \norm{g_\rho}_{L^2} \paren{\sum_{i=1}^d \norm{K^{-1/2}e_i}^2_{L^2}}^{1/2}
      = c_a \norm{g_\rho}_{L^2},
    \end{align*}
    with $c_a$ defined as before.
\end{proof}

\subsubsection{Conclusion}
Based on the last subsections, we have proved the following proposition.

\begin{proposition}[Risk decomposition]
  Under the Assumptions \ref{ass:source} and \ref{ass:approximation}, when
  $\norm{(C+\lambda\mu)^{-1/2}(\hat{C} - C)(C+\lambda\mu)^{-1/2}} \leq 1/2$,
  \begin{equation}
    \label{eq:risk_dec}
    \begin{split}
    &\norm{\hat{g}_p - g_\rho}_{L^2}^2 \leq
    4\lambda^2 \norm{{\cal L}g_\rho}_{L^2}^2
    + 4\lambda\mu c_a^2 \norm{g_\rho}_{L^2}^2
    + 8 \norm{(C+\lambda\mu)^{-1/2}(\hat\theta_\rho - \theta_\rho)}_{\cal H}^2
    \\&\qquad\cdots+ 12c_a^2\norm{C^{1/2}(I - P)}_{\op}^2\norm{g_\rho}_{L^2}^2
    + 8\norm{(C+\lambda\mu)^{-1/2} (\hat{C} - C)(C+\lambda\mu)^{-1}\theta_\rho}_{\cal H}^2.
    \end{split}
  \end{equation}
\end{proposition}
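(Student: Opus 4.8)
The plan is to obtain \eqref{eq:risk_dec} by squaring the triangle-inequality decomposition set up in the ``Control of biases'' step and substituting the four estimates already established in the preceding subsections. First I would recall the three-term split
\[
  \norm{g_\rho - \hat{g}_p}_{L^2} \leq \norm{g_\rho - g_\lambda}_{L^2} + \norm{g_\lambda - g_{\lambda,\mu}}_{L^2} + \norm{g_{\lambda,\mu} - \hat{g}_p}_{L^2},
\]
together with the further splitting of the last (variance) term into a \emph{vector term} and a \emph{basis term}, exactly as isolated just before the Vector term lemma. This reduces the whole task to inserting the scalar bounds proven for each of these pieces and collecting the squares.

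For the two bias contributions I would quote directly the Bias-in-$\lambda$ proposition, $\norm{g_\lambda - g_\rho}_{L^2} \leq \lambda\norm{{\cal L}g_\rho}_{L^2}$, and the Bias-in-$\mu$ proposition, $\norm{g_{\lambda,\mu}-g_\lambda}_{L^2}^2 \leq \lambda\mu c_a^2\norm{g_\rho}_{L^2}^2$; both use only Assumptions \ref{ass:source} and \ref{ass:approximation}. For the variance term the key remark is that the standing hypothesis $\norm{(C+\lambda\mu)^{-1/2}(\hat{C} - C)(C+\lambda\mu)^{-1/2}}_{\op}\leq 1/2$ is precisely the $t=1/2$ instance (whence the factor $(1-t)^{-1}=2$) under which both the Vector term lemma and the Basis term lemma were stated. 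Applying the Vector term lemma controls the vector part by $2\norm{(C+\lambda\mu)^{-1/2}(\hat\theta_\rho-\theta_\rho)}_{\cal H}$, and applying the Basis term lemma controls the basis part by $3\norm{C^{1/2}(I-P)}_{\op}\norm{g_\lambda}_{\cal H} + 2\norm{(C+\lambda\mu)^{-1/2}(\hat{C} - C)(C+\lambda\mu)^{-1}\theta_\rho}_{\cal H}$. I would then replace $\norm{g_\lambda}_{\cal H}$ by $c_a\norm{g_\rho}_{L^2}$ using the closing computation inside the Basis term lemma, so that each surviving quantity already matches one of the five summands on the right of \eqref{eq:risk_dec}.

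The remaining step is purely scalar: having dominated $\norm{g_\rho - \hat{g}_p}_{L^2}$ by a sum of nonnegative quantities, I would square and collect. Grouping the bias block against the variance block and then expanding the bias pair and the vector/basis pair by repeated use of $(a+b)^2 \leq 2a^2 + 2b^2$ (equivalently, a weighted Young's inequality), while tracking the weights so that the factor-$2$ already spent inside the two concentration lemmas is accounted for, delivers the five squared terms with the explicit constants $4,4,8,12,8$ displayed in \eqref{eq:risk_dec}. No further probabilistic or operator-theoretic input enters at this stage: the operator-norm condition on $(C+\lambda\mu)^{-1/2}(\hat{C} - C)(C+\lambda\mu)^{-1/2}$ is the single structural assumption, and it is used only to activate the Vector and Basis term lemmas.

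I expect the main obstacle to be the bookkeeping rather than any new idea: one must keep the weights in the successive $(a+b)^2\leq 2a^2+2b^2$ groupings consistent so that the constants contract to those shown and so that the factor already absorbed into the two lemmas is not double-counted, and one must verify the operator-norm hypothesis is genuinely in force at the point where each lemma is invoked. Since the two lemmas carry all of the L\"owner-order manipulations and the low-rank projection estimates, the assembly itself is routine, and the proposition then follows by direct substitution into the squared decomposition.
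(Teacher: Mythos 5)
Your proposal reproduces the paper's own proof step for step: the same three-term triangle split $\norm{g_\rho-\hat g_p}\leq\norm{g_\rho-g_\lambda}+\norm{g_\lambda-g_{\lambda,\mu}}+\norm{g_{\lambda,\mu}-\hat g_p}$, the two bias propositions, the vector/basis decomposition of the variance with the hypothesis $\norm{(C+\lambda\mu)^{-1/2}(\hat C-C)(C+\lambda\mu)^{-1/2}}_{\op}\leq 1/2$ serving exactly as the $t=1/2$ activation of the two lemmas, the substitution $\norm{g_\lambda}_{\cal H}\leq c_a\norm{g_\rho}_{L^2}$, and a final scalar collection of squares -- so the approach is correct and identical to the paper's. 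One caveat, which you inherit from the paper rather than introduce: the advertised constants $(4,4,8,12,8)$ cannot literally be obtained by weighted Young/Cauchy--Schwarz from the five-term bound $\lambda\norm{{\cal L}g_\rho}_{L^2}+(\lambda\mu)^{1/2}c_a\norm{g_\rho}_{L^2}+2\norm{(C+\lambda\mu)^{-1/2}(\hat\theta_\rho-\theta_\rho)}_{\cal H}+3c_a\norm{g_\rho}_{L^2}\norm{C^{1/2}(I-P)}_{\op}+2\norm{(C+\lambda\mu)^{-1/2}(\hat C-C)(C+\lambda\mu)^{-1}\theta_\rho}_{\cal H}$, since $(\sum_i a_i)^2\leq\sum_i c_i a_i^2$ for all nonnegative $a_i$ requires $\sum_i 1/c_i\leq 1$, whereas here, after absorbing the prefactors $2,3,2$ into the squares, one gets $1/4+1/4+1/2+3/4+1/2=9/4>1$; valid constants would be, e.g., $(9,9,18,27,18)$, a change that is immaterial downstream because Theorem \ref{thm:consistency} only requires some absolute constant $C$.
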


We are left with the quantification of the different convergences when the number
of labelled and unlabelled data grows towards infinity.
We will quantify those convergences based on concentration inequalities.

\subsection{Probabilistic inequalities}
In this subsection, we bound each term appearing in Eq.~\eqref{eq:risk_dec}
based on concentration inequalities.

\subsubsection{Vector concentration}
The concentration of $\hat\theta_\rho = \widehat{S^\star g_\rho}$ towards
$\theta_\rho = S^\star g_\rho$ is controlled through Bernstein inequality.

\begin{theorem}[Concentration in Hilbert space \citep{Yurinskii1970}]
  \label{thm:bernstein-vector}
  Let denote by ${\cal A}$ a Hilbert space and by $(\xi_{i})$ a sequence of independent
  random vectors in ${\cal A}$ such that $\E[\xi_{i}] = 0$, that are bounded by a
  constant $M$, with finite variance
  $\sigma^2 = \E[\sum_{i=1}^{n}\norm{\xi_{i}}^2]$.
  For any $t>0$,
  \[
    \Pbb(
    \norm{\sum_{i=1}^{n} \xi_{i}} \geq t) \leq 2\exp\paren{-\frac{t^2}{2\sigma^2 +
        2tM / 3}}.
  \]
\end{theorem}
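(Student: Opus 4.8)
The plan is to use the exponential (Chernoff) method, which for the nonnegative quantity $\norm{S_n}$ with $S_n = \sum_{i=1}^n \xi_i$ reduces the tail bound to a control of the Laplace transform: for every $\lambda \in (0, 3/M)$,
\[
  \Pbb\paren{\norm{S_n} \geq t} \leq e^{-\lambda t}\,\E[e^{\lambda \norm{S_n}}],
\]
after which one optimizes over $\lambda$. The whole difficulty, including the leading factor $2$, is concentrated in the estimate of the Laplace transform, so the bulk of the work is to establish
\[
  \E[e^{\lambda\norm{S_n}}] \leq 2\exp\!\paren{\frac{\lambda^2\sigma^2/2}{1 - \lambda M/3}}.
\]
Substituting this and taking the standard Bernstein optimizer $\lambda = t/(\sigma^2 + tM/3)$ (which always satisfies $\lambda < 3/M$) produces an exponent $-t^2/(2\sigma^2 + 2tM/3)$ and hence exactly the stated inequality.

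The first step I would take to tame the Laplace transform is to remove the non-smoothness of $x\mapsto\norm{x}$ at the origin, which is precisely what makes a naive one-step martingale bound on $\E[e^{\lambda\norm{S_n}}]$ fail: conditioning on $\xi_1,\dots,\xi_{n-1}$ with $S_{n-1}=a$, the increment $\norm{a+\xi_n}-\norm{a}$ has a first moment of order $\E\norm{\xi_n}^2/\norm{a}$ that blows up as $\norm{a}\to 0$, destroying the second-order (Bernstein) structure. Since $\lambda\norm{S_n}\geq 0$, I instead use $e^{u}\leq e^{u}+e^{-u} = 2\cosh u$ to write
\[
  \E[e^{\lambda\norm{S_n}}] \leq 2\,\E[\cosh(\lambda\norm{S_n})] = 2\sum_{k\geq 0}\frac{\lambda^{2k}}{(2k)!}\,\E[\norm{S_n}^{2k}].
\]
This both explains the factor $2$ and, crucially, expresses everything through the \emph{even} moments $\E[\norm{S_n}^{2k}]$, which in a Hilbert space are polynomials in the inner products $\scap{\xi_i}{\xi_j}$ and are therefore entirely insensitive to the non-smoothness of the norm.

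The technical heart is then a Rosenthal/Bernstein-type bound on these even moments, which I would obtain by induction on $n$: conditioning on $\xi_1,\dots,\xi_{n-1}$ and expanding $\norm{S_{n-1}+\xi_n}^2 = \norm{S_{n-1}}^2 + 2\scap{S_{n-1}}{\xi_n} + \norm{\xi_n}^2$, the cross term vanishes in expectation through $\E\xi_n = 0$, while the remaining contributions are controlled using $\norm{\xi_n}\leq M$ and $\E\norm{\xi_n}^2$. Carrying the recursion through yields a moment sequence of Bernstein type, $\E[\norm{S_n}^{2k}] \leq (2k)!\,c_k$, whose generating sum against $\lambda^{2k}$ is $\exp(\lambda^2\sigma^2/(2(1-\lambda M/3)))$; here the aggregate variance $\sigma^2 = \sum_i\E\norm{\xi_i}^2$ enters directly, and the factor $1/(1-\lambda M/3)$ is exactly the tail of the exponential series $\sum_{k\geq 2}\lambda^{k}M^{k-2}/k!$. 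I expect this moment recursion to be the main obstacle: the expansion of $\norm{S_{n-1}+\xi_n}^{2k}$ couples several powers of $\norm{S_{n-1}}$, and keeping the constants sharp enough to land on the clean denominator $2\sigma^2 + 2tM/3$ requires matching the combinatorial factors to the factorial growth $(2k)!$ rather than settling for a cruder $k!$-type estimate.
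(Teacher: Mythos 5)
The first thing to note is that the paper itself does not prove this statement: it is imported as a black-box concentration result (attributed to Yurinskii, with the constants in the form later sharpened by Pinelis and Sakhanenko) and then applied in the consistency analysis of Appendix D. So there is no ``paper proof'' to match; the only question is whether your reconstruction is viable, and it is. The Chernoff reduction, the origin of the leading factor $2$ via $e^{u}\leq 2\cosh u$ for $u\geq 0$, and the closing substitution $\lambda = t/(\sigma^2+tM/3)$ are all correct: one checks that $1-\lambda M/3 = \sigma^2/(\sigma^2+tM/3)$, so $-\lambda t + \frac{\lambda^2\sigma^2/2}{1-\lambda M/3} = -\frac{t^2}{2\sigma^2+2tM/3}$ exactly, and $\lambda M/3<1$ automatically. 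Your diagnosis of why a naive one-step bound on $\E[e^{\lambda\norm{S_n}}]$ fails near $\norm{S_{n-1}}=0$, and the remedy of passing to even moments, is precisely the mechanism behind the known proofs for $2$-smooth spaces (Pinelis' argument, specialized to Hilbert space).

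The one place where your proposal is a plan rather than a proof is the step you yourself flag as the obstacle, the moment recursion; it is worth recording that it closes more cleanly than your sketch suggests if you condition one step at a time and resum the $\cosh$ series directly, instead of bounding each $\E[\norm{S_n}^{2k}]$ separately. Writing $a=S_{n-1}$ and $\xi=\xi_n$, expand $\norm{a+\xi}^{2k}=(\norm{a}^2+2\scap{a}{\xi}+\norm{\xi}^2)^k$ multinomially. Terms linear in $\xi$ vanish since $\E[\xi]=0$; after Cauchy--Schwarz on $\scap{a}{\xi}$ and $\norm{\xi}\leq M$, the terms of total order $m\geq 2$ in $\xi$ are bounded by $\binom{2k}{m}\norm{a}^{2k-m}M^{m-2}\E\bracket{\norm{\xi}^2}$, where the key combinatorial identity is that the multinomial weights recombine as the coefficient of $x^m$ in $(1+2x+x^2)^k=(1+x)^{2k}$, namely $\binom{2k}{m}$. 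Since $\frac{\lambda^{2k}}{(2k)!}\binom{2k}{m}\norm{a}^{2k-m} = \frac{\lambda^m}{m!}\cdot\frac{(\lambda\norm{a})^{2k-m}}{(2k-m)!}$, summing over $k$ and using $\sinh\leq\cosh$ and $m!\geq 2\cdot 3^{m-2}$ gives the one-step bound
\begin{equation*}
  \E\bracket{\cosh(\lambda\norm{a+\xi})\midvert a}
  \leq \cosh(\lambda\norm{a})\paren{1+\frac{\lambda^2\E\bracket{\norm{\xi}^2}/2}{1-\lambda M/3}}
  \leq \cosh(\lambda\norm{a})\exp\paren{\frac{\lambda^2\E\bracket{\norm{\xi}^2}/2}{1-\lambda M/3}},
\end{equation*}
and iterating over $i=1,\dots,n$ (the bound is uniform in $a$, so the tower property applies) yields $\E[\cosh(\lambda\norm{S_n})]\leq\exp\paren{\frac{\lambda^2\sigma^2/2}{1-\lambda M/3}}$ with $\sigma^2=\sum_i\E\bracket{\norm{\xi_i}^2}$, which is exactly the Laplace-transform estimate your argument needs. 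With that lemma supplied, your proof is complete and matches the stated constants.
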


\begin{proposition}[Vector concentration]
  When $\abs{Y}$ is bounded by a constant $c_\Y$, and $x\to k(x, x)$ by a constant
  $\kappa^2$, we have, with ${\cal D}_{n_\ell}\sim\rho^{\otimes {n_\ell}}$ the dataset
  generating the labelled data 
  \begin{equation}
      \Pbb_{{\cal D}_{n_\ell}}\paren{\norm{(C+\lambda\mu)^{-1/2}(\hat\theta_\rho - \theta_\rho)}_{\cal H} \geq t}
      \leq
    2\exp\paren{-\frac{n_\ell t^2}{2\sigma_\ell^2(\mu\lambda)^{-1} + 2t c_\Y (\mu\lambda)^{-1/2}\kappa / 3}},
  \end{equation}
  where $\sigma_\ell^2 \leq c_\Y^2 \trace(\Sigma)$ is a variance parameter to relate
  with the variance of $Y(I + \lambda {\cal L})^{-1} \delta_X$ (where the
  randomness is inherited from $(X, Y)\sim\rho$).
\end{proposition}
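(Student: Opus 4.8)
The plan is to apply the Hilbert-space Bernstein inequality, Theorem \ref{thm:bernstein-vector}, to the rescaled, centered summands
\[
  \xi_i = \frac{1}{n_\ell}(C+\lambda\mu)^{-1/2}\paren{Y_i k_{X_i} - \theta_\rho}, \qquad i \leq n_\ell,
\]
which are i.i.d., satisfy $\E[\xi_i]=0$ since $\theta_\rho = S^\star g_\rho = \E_{(X,Y)\sim\rho}[Y k_X]$, and sum exactly to $(C+\lambda\mu)^{-1/2}(\hat\theta_\rho - \theta_\rho)$ because $\hat\theta_\rho = n_\ell^{-1}\sum_i Y_i k_{X_i}$. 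It then remains to supply the two ingredients the theorem requires: an almost-sure bound $M$ on $\norm{\xi_i}_{\cal H}$ and the total variance $\sigma^2 = \E\bracket{\sum_i \norm{\xi_i}_{\cal H}^2}$, calibrated so that substitution reproduces the announced exponent.

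For the uniform bound I would use $\abs{Y_i}\leq c_\Y$ together with $\norm{k_{X_i}}_{\cal H}^2 = k(X_i,X_i) \leq \kappa^2$, and the fact that $C = \Sigma + \lambda L \succeq 0$ forces $C + \lambda\mu \succeq \lambda\mu I$, hence $\norm{(C+\lambda\mu)^{-1/2}}_{\op} \leq (\lambda\mu)^{-1/2}$. This gives $\norm{(C+\lambda\mu)^{-1/2} Y_i k_{X_i}}_{\cal H} \leq c_\Y \kappa (\lambda\mu)^{-1/2}$, and correspondingly $M = c_\Y \kappa (\lambda\mu)^{-1/2}/n_\ell$, which matches the linear term $2tM/3$ of the target denominator once the common factor $1/n_\ell$ is cleared.

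The variance is the substantive step. By independence, $\sigma^2 = n_\ell^{-1}\,\E\norm{(C+\lambda\mu)^{-1/2}(Y k_X - \theta_\rho)}_{\cal H}^2$, and rewriting this as a trace gives
\[
  \E\norm{(C+\lambda\mu)^{-1/2}(Y k_X - \theta_\rho)}_{\cal H}^2 = \trace\paren{(C+\lambda\mu)^{-1}\operatorname{Cov}(Y k_X)} \leq \trace\paren{(C+\lambda\mu)^{-1}\,\E[Y^2 k_X\otimes k_X]},
\]
where the inequality drops the negative rank-one correction $\theta_\rho\otimes\theta_\rho$. Bounding $\E[Y^2 k_X \otimes k_X] \preceq c_\Y^2 \Sigma$ and \emph{defining} $\sigma_\ell^2 := \lambda\mu\,\E\norm{(C+\lambda\mu)^{-1/2}(Y k_X - \theta_\rho)}_{\cal H}^2$ makes $\sigma^2 = \sigma_\ell^2(\lambda\mu)^{-1}/n_\ell$ by construction; the L\"owner bound $\lambda\mu\,(C+\lambda\mu)^{-1}\preceq I$ then yields $\sigma_\ell^2 \leq c_\Y^2\,\trace(\Sigma)$, as claimed. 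The reading of $\sigma_\ell^2$ as the variance of $Y(I+\lambda{\cal L})^{-1}\delta_X$ comes from transporting $(C+\lambda\mu)^{-1/2}$ through the embedding $S$ back to $L^2$, but this is only an interpretation of the formula and is not needed for the bound.

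Finally I would feed $M$ and $\sigma^2$ into Theorem \ref{thm:bernstein-vector}, which gives $\Pbb(\norm{\sum_i \xi_i}\geq t) \leq 2\exp(-t^2/(2\sigma^2 + 2tM/3))$; inserting $\sigma^2 = \sigma_\ell^2(\lambda\mu)^{-1}/n_\ell$ and $M = c_\Y\kappa(\lambda\mu)^{-1/2}/n_\ell$ and multiplying numerator and denominator by $n_\ell$ produces precisely the stated inequality. The only delicate point I anticipate is the constant in the uniform bound: strictly, centering $Y_i k_{X_i}$ costs a factor $2$ in $M$ (since $\norm{(C+\lambda\mu)^{-1/2}\theta_\rho}_{\cal H}$ is itself bounded by $c_\Y\kappa(\lambda\mu)^{-1/2}$), so to land exactly on the stated constant one should invoke a Bernstein variant phrased with the bound on the uncentered variable, or absorb the factor into $\kappa$. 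The variance computation, by contrast, is exact up to the two clean L\"owner comparisons above.
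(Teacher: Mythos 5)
Your proof is correct and follows essentially the same route as the paper's: Bernstein's inequality in Hilbert space (Theorem \ref{thm:bernstein-vector}) applied to the whitened vectors $(C+\lambda\mu)^{-1/2}Y_i k_{X_i}$, with the uniform bound $M \leq c_\Y\kappa(\lambda\mu)^{-1/2}$ obtained from $C+\lambda\mu \succeq \lambda\mu I$ and the variance bound $\sigma^2 \leq c_\Y^2\trace(\Sigma)(\lambda\mu)^{-1}$ per sample obtained from the same trace manipulation. The factor-of-two centering subtlety you flag is genuine, but the paper glosses over it as well (it asserts $\norm{\xi - \E[\xi]} \leq \norm{\xi}$ pointwise, which does not hold in general), so your treatment is, if anything, the more careful of the two.
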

\begin{proof}
  Recall that
  \[
    (C+\lambda\mu)^{-1/2}(\hat\theta_\rho - \theta_\rho) = (\Sigma + \lambda L +\lambda\mu)^{-1/2} (n_\ell^{-1}\sum_{i=1}^{n_\ell}Y_i k_{X_i} - \E_{\rho}[Yk_X])
  \]
  We want to apply Bernstein inequality to the vector
  $\xi_{i} = (\Sigma + \lambda L + \mu \lambda)^{-1/2}Y_ik_{X_i}$,
  after centering it.
  Let us denote by $c_\Y$ a bound on $\abs{Y}$, $c_\Y \in \R$ exists since we
  have supposed $\rho$ of compact support. We have
  \begin{align*}
    \sigma^2 &= \E[\sum_{i=1}^{n_\ell}\norm{\xi_{i} - \E[\xi_i]}^2]
    = n_\ell \E[\norm{\xi - \E[\xi_i]}^2]
    \leq n_\ell \E[\norm{\xi}^2]
    \\&= n_\ell \E_{(X, Y)\sim\rho}\bracket{ Y^2 \scap{k_X}{(\Sigma + \lambda L + \mu \lambda)^{-1} k_{X}}}
    \\&\leq n_\ell c_\Y^2 \E_{X\sim\rho_\X}\bracket{\scap{k_X}{(\Sigma + \lambda L + \mu \lambda)^{-1} k_{X}}}
    \\&= n_\ell c_\Y^2 \trace\paren{(\Sigma + \lambda L + \mu \lambda)^{-1} \Sigma}
    \\&\leq n_\ell c_\Y^2 \trace(\Sigma) \norm{(\Sigma + \lambda L + \mu \lambda)^{-1}}_{\op}
    \leq n_\ell c_\Y^2 \trace(\Sigma) (\mu \lambda)^{-1}.
  \end{align*}
  Note that we have proceed with a generic upper bound, but we expect this
  variance, which is related to the variance of
  $Y(I + \lambda {\cal L} + \lambda \mu K^{-1})^{-1} \delta_X$
  to be potentially much smaller -- if we remove the term in $P$ the vector
  concentration is the concentration of the vector
  $S(S^*S + \lambda S^\star {\cal L}S +
  \lambda\mu)^{-1}Yk_X \simeq
  K^{1/2}(K + \lambda K^{1/2} {\cal L}K^{1/2} +
  \lambda\mu)^{-1}K^{1/2}S^{-\star}Yk_X
  = (I + \lambda L + \lambda\mu K^{-1})^{-1}
  YS^{-\star}k_X \simeq (I + \lambda L + \lambda\mu K^{-1})^{-1}Y\delta_X$.
  To capture this fact, we will write
  $\sigma^2 \leq n_\ell \sigma_\ell^2(\mu\lambda)^{-1}$,
  with $\sigma_\ell = c_\Y \trace(\Sigma)^{1/2}$ in our analysis,
  but potentially much smaller under refined hypothesis and in practice.
  Similarly to the bound on the variance, we have
  \[
    \norm{\xi - \E[\xi]} \leq \norm{\xi}
    = \norm{(\Sigma + \lambda L + \mu\lambda)^{-1/2} Y_i k_{X_i}}
    \leq (\mu\lambda)^{-1/2} c_\Y \kappa,
  \]
  with $\kappa$ an upper bound on $k(x, x)^{1/2}$ for $x\in\supp\rho_\X$.
  As a consequence, applying Bernstein concentration inequality, we get,
  for any $t>0$,
  \[
    \Pbb_{{\cal D}_{n_\ell}}(\norm{n_\ell^{-1}\sum_{i=1}^{n_\ell} \xi_{i}- E[\xi_i]} \geq t) \leq
    2\exp\paren{-\frac{n_\ell t^2}{2\sigma_\ell^2(\mu\lambda)^{-1} + 2t c_\Y (\mu\lambda)^{-1/2}\kappa / 3}}.
  \]
  This ends the proof.
\end{proof}

\subsubsection{Operator concentration}
The convergence of $\hat{C}$ towards $C$ is controlled with Bernstein inequality for
self-adjoint operators.

\begin{theorem}[Bernstein inequality for self-adjoint \citep{Minsker2017}]
  Let ${\cal A}$ be a separable Hilbert space, and $(\xi_i)$ a sequence of
  independent random self-adjoint operators operators on ${\cal A}$
  Assume that $(\xi_i)$ are bounded by $M \in \R$, in the sense that,
  almost everywhere, $\norm{\xi}_{\op} < M$, and have a finite variance  
  $\sigma^2 = \norm{\sum_{i=1}^n\E[\xi_i^2]}_{\op}$.
  For any $t > 0$,
  \begin{align*}
    &\Pbb\paren{\norm{\sum_{i=1}^{n}(\xi_i - \E[\xi_i])}_{\op} > t} 
    \\&\qquad\leq 
    2 \paren{1 + 6\ \frac{\sigma^2 + M t / 3}{t^2}} \frac{\trace\paren{\sum_{i=1}^n\E[\xi_i^2]}}{\norm{\sum_{i=1}^n\E[\xi_i^2]}_{\op}}
    \exp\paren{-\frac{t^2}{2 \sigma^2 + 2 t M / 3}}.
  \end{align*}
\end{theorem}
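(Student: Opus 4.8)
The plan is to establish this as the intrinsic-dimension version of the operator Bernstein inequality, via the Laplace-transform (Chernoff) method, but with the ordinary exponential replaced by the shifted exponential $\psi(x) = e^x - x - 1$; this substitution is exactly what converts an ambient-dimension factor into the intrinsic dimension $\trace(V)/\norm{V}_{\op}$, where I write $V = \sum_{i=1}^n \E[\xi_i^2]$ (centering only helps, since $\sum_i \E[(\xi_i-\E[\xi_i])^2] \preceq V$). First I would reduce the two-sided bound to a one-sided one: setting $Z = \sum_i (\xi_i - \E[\xi_i])$, we have $\norm{Z}_{\op} = \max(\lambda_{\max}(Z), \lambda_{\max}(-Z))$, so a union bound gives $\Pbb(\norm{Z}_{\op} > t) \leq \Pbb(\lambda_{\max}(Z) > t) + \Pbb(\lambda_{\max}(-Z) > t)$. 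Since the variance proxy is invariant under $\xi_i \mapsto -\xi_i$, both terms satisfy the same estimate and account for the leading factor $2$.

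The core is a master tail bound using $\psi$. For any $\theta > 0$, because $\psi$ is nonnegative and increasing on $[0,\infty)$, the event $\{\lambda_{\max}(Z) > t\}$ is contained in $\{\psi(\theta \lambda_{\max}(Z)) > \psi(\theta t)\}$; moreover $\psi(\theta \lambda_{\max}(Z))$ is one eigenvalue of $\psi(\theta Z)$, so $\psi(\theta \lambda_{\max}(Z)) \leq \lambda_{\max}(\psi(\theta Z)) \leq \trace \psi(\theta Z)$, the last step using $\psi \geq 0$. Markov's inequality then yields $\Pbb(\lambda_{\max}(Z) > t) \leq \E[\trace \psi(\theta Z)]/\psi(\theta t)$.

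To control the numerator I would invoke Lieb's concavity theorem together with the operator Bernstein moment bound $\log \E e^{\theta(\xi_i - \E[\xi_i])} \preceq g(\theta)\,\E[(\xi_i - \E[\xi_i])^2]$, where $g(\theta) = \frac{\theta^2/2}{1 - M\theta/3}$ for $0 < \theta < 3/M$; since $\sum_i \E[(\xi_i - \E[\xi_i])^2] \preceq V$ and $\E Z = 0$, this gives $\E\trace \psi(\theta Z) = \E\trace(e^{\theta Z} - I) \leq \trace(e^{g(\theta) V} - I)$. The intrinsic-dimension step is a chord bound: $x \mapsto e^x - 1$ is convex and vanishes at $0$, so on $[0, g(\theta)\norm{V}_{\op}]$ it sits below its chord, whence $\trace(e^{g(\theta) V} - I) \leq \frac{\trace V}{\norm{V}_{\op}}(e^{g(\theta)\sigma^2} - 1)$ with $\sigma^2 = \norm{V}_{\op}$. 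Collecting terms,
\[
  \Pbb(\lambda_{\max}(Z) > t) \leq \frac{\trace V}{\sigma^2}\cdot\frac{e^{g(\theta)\sigma^2} - 1}{e^{\theta t} - \theta t - 1},
\]
and I would then make the standard Bernstein choice $\theta = t/(\sigma^2 + Mt/3)$, which gives $g(\theta)\sigma^2 = \frac{t^2}{2(\sigma^2 + Mt/3)} =: B$ and $\theta t = 2B$. It remains to verify the elementary scalar inequality $\frac{e^B - 1}{e^{2B} - 2B - 1} \leq (1 + 6(\sigma^2 + Mt/3)/t^2)\,e^{-B}$, which quantifies the price of using $\psi$ rather than the plain exponential and produces the polynomial prefactor; combined with the factor $2$ this gives the stated bound.

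I expect the main obstacle to be the infinite-dimensional bookkeeping: one must justify that $\psi(\theta Z)$ and $e^{g(\theta) V} - I$ are trace-class (so that every trace, and the intrinsic dimension $\trace(V)/\norm{V}_{\op}$, is finite and the chord bound is legitimate) and that Lieb's theorem and the spectral-mapping steps transfer from matrices to operators on a separable Hilbert space. Conceptually, the one genuinely non-routine ingredient is the choice of the shifted exponential $\psi(x) = e^x - x - 1$ in the master bound: its double zero at the origin is precisely what replaces the ambient-dimension factor by the intrinsic dimension, and recovering the sharp polynomial correction (rather than the crude $e^{-\theta t}$ tail) hinges on the scalar estimate above.
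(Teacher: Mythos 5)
The paper does not prove this statement at all: it is imported as a black box from \citet{Minsker2017} (it appears only as the ``Concentration for self-adjoint operators'' ingredient in Table \ref{tab:step} and in the operator-concentration proposition), so there is no internal proof to compare against. Your proposal is, in substance, a reconstruction of Minsker's own argument: the Chernoff--Markov step with the shifted exponential $\psi(x)=e^x-x-1$, the Lieb--Tropp master bound combined with the Bernstein matrix MGF estimate $g(\theta)=\frac{\theta^2/2}{1-M\theta/3}$, the chord bound on $x\mapsto e^x-1$ that converts ambient dimension into $\trace(V)/\norm{V}_{\op}$, and the classical choice $\theta=t/(\sigma^2+Mt/3)$. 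The pieces fit: with $B=\frac{t^2}{2(\sigma^2+Mt/3)}$ you indeed get $g(\theta)\sigma^2=B$ and $\theta t=2B$, and your remaining scalar inequality is equivalent to $3e^{2B}+Be^B\geq 2B^2+7B+3$ for $B>0$, which holds because the difference and its first derivative vanish at $B=0$ while its second derivative is bounded below by $10$. So the route is correct and is the same route as the cited source.

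Three points deserve care if you were to write this out in full. First, the infinite-dimensional bookkeeping you flag is the genuine technical content of Minsker's paper: $\trace e^{\theta Z}$ is infinite on a separable Hilbert space, so every step must be phrased for $e^{\theta Z}-I$ (or $\psi(\theta Z)$), whose trace-class property comes precisely from the double zero of $\psi$ at the origin; in the present paper's application the $\xi_i$ are finite-rank, so this is benign. Second, your parenthetical ``centering only helps'' needs one more ingredient than $V_c:=\sum_i\E[(\xi_i-\E\xi_i)^2]\preceq V$: the L\"owner comparison does not by itself compare the two intrinsic dimensions, so you must first use monotonicity of $A\mapsto\trace(e^A-I)$ under $\preceq$ to pass from $V_c$ to $V$, and only then apply the chord bound to $V$ -- this is the correct order of operations and is what your displayed bound implicitly does. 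Third, the MGF bound requires control of the \emph{centered} summand: from $\norm{\xi_i}_{\op}\leq M$ one only gets $\lambda_{\max}(\xi_i-\E[\xi_i])\leq 2M$, so strictly one should either state the theorem for centered $\xi_i$ (as Minsker does) or replace $M$ by $2M$; this is a defect of the statement as transcribed in the paper rather than of your argument.
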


\begin{proposition}[Operator concentration]
  When $x\to k(x, x)$ is bounded by $\kappa^2$ a nd $x \to
  \partial_{1,j}\partial_{2,j} k(x, x)$ is bounded by $\kappa_j^2$,
  we have
  \begin{equation}
    \begin{split}
    &\Pbb_{{\cal D}_{n}}\paren{\norm{(C + \mu\lambda)^{-\frac{1}{2}}(C - \hat C)(C + \mu\lambda)^{-\frac{1}{2}}}_{\op} > 1/2}
    \leq \paren{2 + 56\ \frac{\kappa^2 + \lambda\sum_{i=1}^d \kappa_j^2}{\lambda\mu n}} 
    \\&\qquad\cdots \times(1 + \lambda\mu \norm{C}_{\op}^{-1})
    \frac{\kappa^2 + \lambda\sum_{j=1}^d \kappa_j^2}{\lambda\mu}
     \exp\paren{-\frac{\lambda\mu n}{10\paren{\kappa^2 + \lambda\sum_{j=1}^d \kappa^2_j}}}.
    \end{split}
  \end{equation}
\end{proposition}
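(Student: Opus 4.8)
The plan is to apply the Bernstein inequality for self-adjoint operators stated just above (\citet{Minsker2017}) to a sum of i.i.d.\ rank-structured operators, after conjugating by the resolvent $A := (C+\mu\lambda)^{-1/2}$. The quantity to bound is $\norm{A(C-\hat C)A}_{\op}$, so I set $\zeta_i := A\paren{k_{X_i}\otimes k_{X_i} + \lambda\sum_{j=1}^d \partial_j k_{X_i}\otimes \partial_j k_{X_i}}A$ and $\xi_i := n^{-1}\zeta_i$. Since $\hat C = \hat\Sigma + \lambda\hat L = n^{-1}\sum_i\paren{k_{X_i}\otimes k_{X_i} + \lambda\sum_j \partial_j k_{X_i}\otimes \partial_j k_{X_i}}$ and $C$ is the expectation of the same summand, one has $\sum_{i=1}^n(\xi_i - \E\xi_i) = A(\hat C - C)A$, whose operator norm equals that of $A(C-\hat C)A$; the target event is exactly $\{\norm{\sum_i(\xi_i-\E\xi_i)}_{\op} > t\}$ with $t = 1/2$. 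It then remains to produce the three inputs required by the theorem: the almost-sure bound $M$, the variance proxy $\sigma^2 = \norm{\sum_i \E[\xi_i^2]}_{\op}$, and the intrinsic-dimension ratio $\trace(\sum_i\E[\xi_i^2])/\norm{\sum_i\E[\xi_i^2]}_{\op}$.

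For the almost-sure bound, each $\zeta_i$ is positive semidefinite, so $\norm{\zeta_i}_{\op}\le \trace\zeta_i = \norm{Ak_{X_i}}^2 + \lambda\sum_{j=1}^d\norm{A\partial_j k_{X_i}}^2$. Using $A^2 = (C+\mu\lambda)^{-1}\preceq (\mu\lambda)^{-1}I$ together with $\norm{k_{X_i}}^2 = k(X_i,X_i)\le\kappa^2$ and $\norm{\partial_j k_{X_i}}^2 = \partial_{1,j}\partial_{2,j}k(X_i,X_i)\le\kappa_j^2$, this gives $\norm{\zeta_i}_{\op}\le B := (\mu\lambda)^{-1}\paren{\kappa^2 + \lambda\sum_{j=1}^d\kappa_j^2}$, hence $M = B/n$. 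For the variance, $0\preceq\zeta\preceq B\,I$ yields $\zeta^2\preceq B\zeta$, so $\E[\zeta^2]\preceq B\,\E[\zeta] = B\,ACA\preceq B\,I$ (using $C\preceq C+\mu\lambda$); therefore $\sigma^2 = n^{-1}\norm{\E[\zeta^2]}_{\op}\le B/n = M$, the convenient fact $\sigma^2\le M$ that drives the rest.

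For the intrinsic-dimension factor $\trace\E[\zeta^2]/\norm{\E[\zeta^2]}_{\op}$, I bound the numerator via $\E[\zeta^2]\preceq B\,ACA$, giving $\trace\E[\zeta^2]\le B\,\trace(ACA)$ with $\trace(ACA) = \trace((C+\mu\lambda)^{-1}C)\le(\mu\lambda)^{-1}\trace C\le(\mu\lambda)^{-1}\paren{\kappa^2+\lambda\sum_j\kappa_j^2} = B$ (here $\trace\Sigma\le\kappa^2$ and $\trace L\le\sum_j\kappa_j^2$), and the denominator from below by operator Jensen, $\norm{\E[\zeta^2]}_{\op}\ge\norm{ACA}_{\op}^2$, where $\norm{ACA}_{\op} = \norm{C}_{\op}/(\norm{C}_{\op}+\mu\lambda)$, i.e.\ $\norm{ACA}_{\op}^{-1} = 1+\mu\lambda\norm{C}_{\op}^{-1}$. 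Up to the precise effective-rank constant, these ingredients are what produce the prefactor $\paren{1+\mu\lambda\norm{C}_{\op}^{-1}}B$. Finally, substituting $t = 1/2$ and $\sigma^2\le M = B/n$ into Minsker's bound, the polynomial factor is $2\paren{1+6\tfrac{\sigma^2+M/6}{1/4}} = 2 + 48\sigma^2 + 8M\le 2 + 56\,\tfrac{\kappa^2+\lambda\sum_j\kappa_j^2}{\lambda\mu n}$ and the exponent is $-\tfrac{1/4}{2\sigma^2+M/3}\le-\tfrac{3n}{28B}\le-\tfrac{\mu\lambda n}{10(\kappa^2+\lambda\sum_j\kappa_j^2)}$, which reassembles into the asserted inequality.

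The main obstacle is precisely the effective-rank bookkeeping inside the intrinsic-dimension factor: keeping every sandwiched inequality in the correct L\"owner order ($\zeta^2\preceq B\zeta$, $ACA\preceq I$, and the Jensen lower bound $\E[\zeta^2]\succeq(ACA)^2$) and collapsing the trace-to-operator-norm ratio down to the clean $\paren{1+\mu\lambda\norm{C}_{\op}^{-1}}B$ form rather than a looser expression. The remaining pieces — the almost-sure bound from ``trace dominates operator norm'' for positive semidefinite operators, the identity $\sigma^2\le M$, and the constant-chasing at $t = 1/2$ — are routine once that factor is in hand.
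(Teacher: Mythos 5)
Your overall strategy is the same as the paper's: apply the operator Bernstein inequality of \citep{Minsker2017} to the sandwiched operators $(C+\lambda\mu)^{-1/2}\paren{k_{X_i}\otimes k_{X_i}+\lambda\sum_j \partial_j k_{X_i}\otimes\partial_j k_{X_i}}(C+\lambda\mu)^{-1/2}$, with the almost-sure bound obtained from the trace of a positive operator, the variance bound from $\zeta^2\preceq B\zeta$ with $B=(\lambda\mu)^{-1}(\kappa^2+\lambda\sum_j\kappa_j^2)$, and the constant-chasing at $t=1/2$; all of those steps are correct and match the paper. The genuine gap is exactly the step you flagged as the main obstacle. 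You bound the intrinsic-dimension ratio of the \emph{true} second moment by pairing the trace upper bound $\trace\E[\zeta^2]\leq B\,\trace(ACA)\leq B^2$ with the Jensen lower bound $\norm{\E[\zeta^2]}_{\op}\geq\norm{ACA}_{\op}^2=\paren{1+\lambda\mu\norm{C}_{\op}^{-1}}^{-2}$. But these two ingredients only yield
\begin{equation*}
  \frac{\trace\E[\zeta^2]}{\norm{\E[\zeta^2]}_{\op}}
  \;\leq\; B^2\paren{1+\lambda\mu\norm{C}_{\op}^{-1}}^2,
\end{equation*}
which is the \emph{square} of the prefactor $\paren{1+\lambda\mu\norm{C}_{\op}^{-1}}B$ asserted in the proposition, not that prefactor ``up to the precise effective-rank constant''. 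Since $B=(\lambda\mu)^{-1}\paren{\kappa^2+\lambda\sum_j\kappa_j^2}$ diverges as $\lambda\mu\to 0$, the discrepancy $B\paren{1+\lambda\mu\norm{C}_{\op}^{-1}}$ is not a constant, so your argument proves a strictly weaker inequality than the one stated. Moreover, the effective rank $\trace(\cdot)/\norm{\cdot}_{\op}$ is not monotone with respect to the L\"owner order, so there is no direct comparison that closes this within your route: upper-bounding the numerator and lower-bounding the denominator separately is intrinsically lossy here.

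The fix is to invoke the intrinsic-dimension Bernstein inequality in its \emph{majorant} form (as in Tropp's intrinsic-dimension theorem, or the corresponding remark in Minsker's paper): if $V\succeq\sum_i\E[\xi_i^2]$, the tail bound holds with $\sigma^2=\norm{V}_{\op}$ and the ratio $\trace(V)/\norm{V}_{\op}$ computed from $V$ itself. Taking $V=(B/n)\,ACA\succeq n^{-1}\E[\zeta^2]$ (your normalization), one gets $\sigma^2\leq B/n$ as before, and the effective rank of $V$ is exactly
\begin{equation*}
  \frac{\trace(ACA)}{\norm{ACA}_{\op}}
  = \paren{1+\lambda\mu\norm{C}_{\op}^{-1}}\trace\paren{(C+\lambda\mu)^{-1}C}
  \leq \paren{1+\lambda\mu\norm{C}_{\op}^{-1}} B,
\end{equation*}
with a single factor of each term; your remaining computations then assemble into the stated bound unchanged. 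This is what the paper does implicitly: its displayed factor $\frac{\norm{C}_{\op}+\lambda\mu}{\norm{C}_{\op}}\trace\paren{(C+\lambda\mu)^{-1}C}$ is the effective rank of the majorant $B\,ACA$, not of $\E[\xi^2]$ (indeed, the theorem as quoted in the paper's appendix is stated with the exact second moment, so the paper, too, is strictly speaking relying on the majorant formulation).
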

\begin{proof}
  We want to apply the precedent concentration inequality to
  \[
    \xi_i = (\Sigma + \lambda L +\lambda\mu)^{-1/2}(k_{X_i}\otimes k_{X_i} +
    \lambda\sum_{j=1}^d \partial_j k_{X_i}\otimes \partial_j k_{X_i})(\Sigma + \lambda L +\lambda\mu)^{-1/2},
  \]
  since we have, based on the fact that $C = \Sigma + \lambda L$ and that
  $\Sigma = \E[k_X\otimes k_X]$ and $L = \E[\sum_{j=1}^n\partial_j k_X \otimes \partial_j k_X]$,
  \[
    \norm{(C+\lambda\mu)^{-1/2}(\hat C - C)(C+\lambda\mu)^{-1/2}}_{\op}
      = n^{-1}\norm{\sum_{i=1}^n \xi_i - \E[\xi_i]}_{\op}.
  \]
  We bound $\xi$ with
  \begin{align*}
    \norm{\xi}_{\op} &= \norm{(C + \mu\lambda)^{-\frac{1}{2}}
    \paren{k_{X} \otimes k_{X} + \lambda\sum_{j=1}^d \partial_j k_X \otimes \partial_j k_X } (C + \mu\lambda)^{-\frac{1}{2}}}_{\op}
    \\&\leq \trace\paren{(C + \mu\lambda)^{-\frac{1}{2}} \paren{k_X \otimes k_X + \lambda\sum_{j=1}^d \partial_j k_X \otimes \partial_j k_X } (C + \mu\lambda)^{-\frac{1}{2}}}
    \\&= \trace\paren{(C + \mu\lambda)^{-\frac{1}{2}} k_X \otimes k_X (C + \mu\lambda)^{-\frac{1}{2}}}
    \\&\qquad\cdots + \lambda \sum_{j=1}^d \trace\paren{(C + \mu\lambda)^{-\frac{1}{2}} \partial_j k_X \otimes \partial_j k_X (C + \mu\lambda)^{-\frac{1}{2}}}
    \\&= \norm{(C + \mu\lambda)^{-\frac{1}{2}} k_{X}}_{\cal H}^2
    + \lambda\sum_{j=1}^d \norm{(C + \mu\lambda)^{-\frac{1}{2}} \partial_j k_X}_{\cal H}^2
    \leq (\lambda\mu)^{-1} \paren{\kappa^2 + \lambda\sum_{j=1}^d \kappa^2_j}.
  \end{align*}
  With $\kappa^2$ an upper bound on the kernel $k$ and $\kappa_j^2$ an upper bound on $\partial_{1,j}\partial_{2,j} k$.
  For the variance we have, using L\"owner order,
  \begin{align*}
    \E[\xi^2]
    &\preceq \sup_{X\in\X} \norm{\xi(X)}_{\op} \E[\xi]
    \preceq (\lambda\mu)^{-1}\paren{\kappa^2 + \lambda\sum_{j=1}^d \kappa^2_j} \E[\xi]
    \\&= (\lambda\mu)^{-1}\paren{\kappa^2 + \lambda\sum_{j=1}^d \kappa^2_j} (C + \lambda)^{-1}C 
    \preceq (\lambda\mu)^{-1}\paren{\kappa^2 + \lambda\sum_{j=1}^d \kappa^2_j}.
  \end{align*}
  Therefore, we get for any $t>0$, 
  \begin{align*}
    &\Pbb_{{\cal D}_{n}}\paren{\norm{(C + \mu\lambda)^{-\frac{1}{2}}(C - \hat C)(C + \mu\lambda)^{-\frac{1}{2}}}_{\op} > t}
    \\&\qquad\leq 2\paren{1 + 6\ \frac{(\kappa^2 + \lambda \sum_{i=1}^d \kappa_j^2)(1 + t / 3)}{\lambda\mu n t^2}}\, \frac{\norm{C}_{\op} + \lambda\mu}{\norm{C}_{\op}}
    \trace\paren{(C + \lambda)^{-1}C}
    \\&\qquad\cdots\times\exp\paren{-\frac{nt^2}{2 (\lambda\mu)^{-1}\paren{\kappa^2 + \lambda\sum_{j=1}^d \kappa^2_j} (1 + t/3)}}.
  \end{align*}
  Remark that
  \[
    \trace\paren{(C + \mu\lambda)^{-1}C} \leq \norm{(C +
      \mu\lambda)^{-1}}_{\op}\trace(C) \leq
    (\lambda\mu)^{-1}(\kappa^2 + \lambda \sum_{j=1}^d \kappa_j^2).
  \]
  Taking $t = 1/2$ ends the lemma.
\end{proof}

\subsubsection{Basis concentration}
Similarly we could control
$\norm{(C+\lambda\mu)^{-1/2}(\hat C - C)(C+\lambda\mu)^{-1}\theta_\rho}_{\cal H}$
by using concentration of self-adjoint, yet this will lead to laxer bounds, than
using concentration on vectors.

\begin{proposition}[Basis concentration]
  When $x\to k(x, x)$ is bounded by $\kappa^2$, $x \to
  \partial_{1,j}\partial_{2,j} k(x, x)$ is bounded by $\kappa_j^2$,
  with Assumptions \ref{ass:source} and \ref{ass:approximation}, we have
  \begin{equation}
    \Pbb_{{\cal D}_{n}}\paren{\norm{(C + \mu\lambda)^{-\frac{1}{2}}(C - \hat C)(C + \mu\lambda)^{-1}\theta_\rho}_{\cal H} > t}
    \leq 2\exp\paren{
      - \frac{\mu\lambda nt^2}{2 c_1(c_1 + \lambda^{1/2}\mu^{1/2} t / 3)}
    }.
  \end{equation}
  with $c_1 = (\kappa^2 + \lambda\sum_{i=1}^d \kappa_j^2) c_a \norm{g_\rho}_{L^2}$.
\end{proposition}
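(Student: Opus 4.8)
The plan is to read off the left-hand vector as a centered empirical average of i.i.d. Hilbert-space-valued random vectors and then invoke the Bernstein bound of Theorem \ref{thm:bernstein-vector}. Writing $B_X = k_X\otimes k_X + \lambda\sum_{j=1}^d \partial_j k_X\otimes \partial_j k_X$, we have $\E_{X\sim\rho_\X}[B_X] = \Sigma + \lambda L = C$ and $\hat C = n^{-1}\sum_{i=1}^n B_{X_i}$, so that
\[
  (C+\mu\lambda)^{-\frac{1}{2}}(\hat C - C)(C+\mu\lambda)^{-1}\theta_\rho
  = \frac{1}{n}\sum_{i=1}^n \paren{\xi_i - \E[\xi_i]},
  \qquad \xi_i = (C+\mu\lambda)^{-\frac{1}{2}}B_{X_i}(C+\mu\lambda)^{-1}\theta_\rho .
\]
Since the $\xi_i$ are i.i.d., it suffices to exhibit a uniform bound $M$ on $\norm{\xi_i}_{\cal H}$ and a variance $\sigma^2 = \E[\sum_i\norm{\xi_i-\E[\xi_i]}^2_{\cal H}]$, and to substitute them into the Bernstein tail applied to the rescaled vectors $n^{-1}(\xi_i - \E[\xi_i])$.

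The core step is the uniform bound. First I would set $w = (C+\mu\lambda)^{-1}\theta_\rho$ and decompose $B_X = \sum_\alpha u_\alpha\otimes u_\alpha$ into its rank-one pieces, with $u_0 = k_X$ and $u_j = \sqrt\lambda\,\partial_j k_X$ for $j\le d$. Each piece contributes $\scap{u_\alpha}{w}_{\cal H}\,(C+\mu\lambda)^{-1/2}u_\alpha$, so by Cauchy--Schwarz and $(C+\mu\lambda)^{-1}\preceq(\mu\lambda)^{-1}I$,
\[
  \norm{\xi}_{\cal H}\le\sum_\alpha \abs{\scap{u_\alpha}{w}_{\cal H}}\,\norm{(C+\mu\lambda)^{-\frac12}u_\alpha}_{\cal H}
  \le (\mu\lambda)^{-\frac12}\norm{w}_{\cal H}\sum_\alpha\norm{u_\alpha}_{\cal H}^2 .
\]
Here $\sum_\alpha\norm{u_\alpha}^2_{\cal H} = k(X,X) + \lambda\sum_j\partial_{1,j}\partial_{2,j}k(X,X)\le \kappa^2 + \lambda\sum_{j=1}^d\kappa_j^2$, and Assumptions \ref{ass:source} and \ref{ass:approximation} give $\norm{w}_{\cal H} = \norm{(C+\mu\lambda)^{-1}\theta_\rho}_{\cal H}\le\norm{C^{-1}\theta_\rho}_{\cal H} = \norm{g_\lambda}_{\cal H}\le c_a\norm{g_\rho}_{L^2}$, exactly as in the Basis term lemma. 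This produces $M := c_1(\mu\lambda)^{-1/2}$ with $c_1 = (\kappa^2 + \lambda\sum_j\kappa_j^2)c_a\norm{g_\rho}_{L^2}$.

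For the variance the same pointwise bound is enough: $\E[\norm{\xi-\E[\xi]}^2_{\cal H}]\le\E[\norm{\xi}^2_{\cal H}]\le M^2$, hence after rescaling by $n^{-1}$ the summands have per-term bound $M/n$ and variance $M^2/n$. The Bernstein denominator then reads $2M^2/n + \tfrac{2}{3}tM/n = \tfrac{2}{n}\paren{c_1^2(\mu\lambda)^{-1} + \tfrac{1}{3}t c_1(\mu\lambda)^{-1/2}}$; clearing the factor $(\mu\lambda n)^{-1}$ turns the exponent into precisely $-\mu\lambda n t^2/\big(2c_1(c_1 + \lambda^{1/2}\mu^{1/2}t/3)\big)$, which is the claimed bound.

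The main obstacle is obtaining $M$ with the correct power of $\mu\lambda$: a naive bound $\norm{(C+\mu\lambda)^{-1/2}B_X(C+\mu\lambda)^{-1}\theta_\rho}\le (\mu\lambda)^{-1}\,\cdots$ would lose a half-power and spoil the exponent. The rank-one splitting is what lets me spend exactly one factor $(C+\mu\lambda)^{-1/2}$ on each $u_\alpha$, yielding $(\mu\lambda)^{-1/2}$, while the remaining $(C+\mu\lambda)^{-1}\theta_\rho$ is absorbed into the data-independent norm $\norm{g_\lambda}_{\cal H}$ via the source and approximation conditions. Everything after that is the same Bernstein bookkeeping already performed for the vector and operator concentration propositions.
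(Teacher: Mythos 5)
Your proposal is correct and follows essentially the same route as the paper's proof: both recognize the quantity as a centered empirical mean of the i.i.d.\ Hilbert-space vectors $\xi_i = (C+\mu\lambda)^{-1/2}B_{X_i}(C+\mu\lambda)^{-1}\theta_\rho$, obtain the uniform bound $c_1(\mu\lambda)^{-1/2}$ by spending one half-power of $(C+\mu\lambda)^{-1}$ on $B_{X_i}$ and absorbing $\norm{(C+\mu\lambda)^{-1}\theta_\rho}_{\cal H}\leq\norm{g_\lambda}_{\cal H}\leq c_a\norm{g_\rho}_{L^2}$ via Assumptions \ref{ass:source} and \ref{ass:approximation}, bound the variance by the square of this quantity, and conclude with Theorem \ref{thm:bernstein-vector}. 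Your rank-one splitting of $B_X$ with Cauchy--Schwarz is only a cosmetic variant of the paper's bound $\norm{(C+\mu\lambda)^{-1/2}}_{\op}\norm{B_X}_{\op}\norm{(C+\mu\lambda)^{-1}\theta_\rho}_{\cal H}$, yielding the identical constant.
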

\begin{proof}
  We want to apply Bernstein concentration inequality to the vectors
  \[
    \xi_i = (C+\mu\lambda)^{-1/2} \paren{k_{X_i}\otimes k_{X_i} + \lambda \sum_{j=1}^d
      \partial_j k_{X_i} \otimes \partial_j k_{X_i}} (C+\lambda\mu)^{-1}\theta_{\rho},
  \]
  since
  \[
    \norm{(C + \mu\lambda)^{-\frac{1}{2}}(C - \hat C)(C + \mu\lambda)^{-1}\theta_\rho}_{\cal H}
    = n^{-1} \norm{\sum_{i=1}^n \xi_i - \E[\xi_i]}_{\cal H}.
  \]
  We bound $\xi$, reusing prior derivations, with
  \begin{align*}
    \norm{\xi_i}_{\cal H}
    &= \norm{(C+\mu\lambda)^{-1/2} \paren{k_{X_i}\otimes k_{X_i} + \lambda \sum_{j=1}^d
                 \partial_j k_{X_i} \otimes \partial_j k_{X_i}} (C+\lambda\mu)^{-1}\theta_{\rho}}_{\cal H}
    \\&\leq \norm{(C+\mu\lambda)^{-1/2}}_{\op} \norm{\paren{k_{X_i}\otimes k_{X_i} + \lambda \sum_{j=1}^d
    \partial_j k_{X_i} \otimes \partial_j k_{X_i}}}_{\op} \norm{(C+\mu\lambda)^{-1}\theta_\rho}_{\cal H}.
    \\&\leq (\mu\lambda)^{-1/2} (\kappa^2 + \lambda\sum_{i=1}^d \kappa_j^2) c_a \norm{g_\rho}_{L^2}.
  \end{align*}
  For the variance, we have, similarly to prior derivations,
  \begin{align*}
    \E[\norm{\xi}^2]
    &\leq
      \sup_{X\in\X}\norm{k_{X}\otimes k_{X} + \lambda \sum_{j=1}^d
      \partial_j k_{X} \otimes \partial_j k_{X}}_{\op}\norm{(C+\lambda\mu)^{-1}\theta_{\rho}}^2
    \\&\qquad\qquad\cdots \times \E\bracket{ \norm{(C+\mu\lambda)^{-1} \paren{k_{X}\otimes k_{X} + \lambda \sum_{j=1}^d
      \partial_j k_{X_i} \otimes \partial_j k_{X}}_{\op}}}
    \\&\leq
      \paren{\kappa^2 + \lambda\sum_{i=1}^d \kappa_i^2} c_a^2 \norm{g_\rho}^2_{L^2}
    \\&\qquad\qquad\cdots\E\bracket{\norm{(C+\mu\lambda)^{-1} k_{X}\otimes k_{X}}_{\op} + \lambda \sum_{j=1}^d
    \norm{(C+\mu\lambda)^{-1}\partial_j k_{X_i} \otimes \partial_j k_{X}}_{\op}}
    \\&=
    \paren{\kappa^2 + \lambda\sum_{i=1}^d \kappa_i^2} c_a^2 \norm{g_\rho}^2_{L^2}
    \trace\paren{(C+\mu\lambda)^{-1} C }
    \\&\leq  (\lambda\mu)^{-1}\paren{\kappa^2 + \lambda\sum_{i=1}^d \kappa_i^2}^2 c_a^2 \norm{g_\rho}^2_{L^2}.
  \end{align*}
  As a consequence, using Bernstein inequality,
  \[
    \Pbb\paren{n^{-1}\norm{\sum_{i=1}^n \xi_{i} - \E[\xi_i]} > t}
    \leq 2\exp\paren{
      - \frac{\mu\lambda nt^2}{2 c_1(c_1 + \lambda^{1/2}\mu^{1/2} t / 3)},
    }
  \]
  with $c_1 = (\kappa^2 + \lambda\sum_{i=1}^d \kappa_j^2) c_a \norm{g_\rho}_{L^2}$.
  Note that we have bound naively the variable $\xi$ and its variance, but we have
  shown how appears $\sup_{X\in\X} \norm{(C+\lambda\mu)^{-1}k_X} + \lambda\sum_{i=1}^d
  \norm{(C+\lambda\mu)^{-1}\partial_i k_X}$ and $\trace((C+\lambda\mu)^{-1}C)$, which under interpolation
  and capacity assumptions but be controlled in a better fashion.
\end{proof}

\subsubsection{Low-rank approximation}
We now switch to Nystr\"om approximation.

\begin{proposition}[Low-rank approximation]
  When $x\to k(x, x)$ is bounded by $\kappa^2$, for any $p \in \N$ and $t > 0$,
  we have
  \begin{equation*}
    \Pbb_{{\cal D}_p}\paren{\norm{(I-P)\Sigma^{1/2}}^2 > t}
    \leq \paren{2 + \frac{116\kappa^2}{t p}} (2 + t \norm{\Sigma}^{-1}_{\op})    \frac{\kappa^2}{t}
    \exp\paren{- \frac{p t }{10\kappa^2}},
  \end{equation*}
\end{proposition}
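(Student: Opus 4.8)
The plan is to follow the Nyström analysis of \citet{Rudi2015}: I would reduce the purely geometric residual $\norm{(I-P)\Sigma^{1/2}}^2$ to the concentration of an empirical covariance, and then invoke the self-adjoint Bernstein inequality (Minsker) exactly as in the preceding Operator concentration proposition. Throughout, write $\hat\Sigma_p = p^{-1}\sum_{i=1}^p k_{X_i}\otimes k_{X_i}$ for the covariance built from the $p$ subsampled points, and note that $P$ is precisely the orthogonal projection onto $\ima\hat\Sigma_p = \Span\brace{k_{X_i}}_{i\le p}$.

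First I would establish a regularized-projection inequality. For any $\gamma>0$, spectral calculus on $\hat\Sigma_p$ gives the L\"owner domination $I-P\preceq \gamma(\hat\Sigma_p+\gamma)^{-1}$, since $I-P$ is the identity on $\ker\hat\Sigma_p$ and vanishes on $\ima\hat\Sigma_p$, whereas $\gamma(\hat\Sigma_p+\gamma)^{-1}$ equals $1$ on the former and is at most $1$ on the latter. Because $I-P$ is a projection, $\norm{(I-P)\Sigma^{1/2}}^2 = \norm{\Sigma^{1/2}(I-P)\Sigma^{1/2}}_{\op}\le \gamma\norm{\Sigma^{1/2}(\hat\Sigma_p+\gamma)^{-1}\Sigma^{1/2}}_{\op}$. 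Setting $\beta := \norm{(\Sigma+\gamma)^{-1/2}(\Sigma-\hat\Sigma_p)(\Sigma+\gamma)^{-1/2}}_{\op}$ and using the identity $(\Sigma+\gamma)^{1/2}(\hat\Sigma_p+\gamma)^{-1}(\Sigma+\gamma)^{1/2} = (I-B)^{-1}$ with $\norm{B}_{\op}=\beta$, together with $\norm{\Sigma^{1/2}(\Sigma+\gamma)^{-1/2}}_{\op}\le 1$, I obtain $\norm{(I-P)\Sigma^{1/2}}^2\le \gamma(1-\beta)^{-1}$ whenever $\beta<1$. Choosing $\gamma\asymp t$ then yields the event inclusion $\brace{\norm{(I-P)\Sigma^{1/2}}^2>t}\subseteq\brace{\beta>1/2}$.

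Finally I would bound $\Pbb_{{\cal D}_p}(\beta>1/2)$ by the operator Bernstein inequality. This is the Operator concentration proposition specialized to $C=\Sigma$, with the derivative terms $\kappa_j$ set to zero, the regularization $\mu\lambda$ replaced by $\gamma\asymp t$, and $n$ replaced by $p$: applying Minsker's theorem to $\xi_i=(\Sigma+\gamma)^{-1/2}(k_{X_i}\otimes k_{X_i})(\Sigma+\gamma)^{-1/2}$, which satisfy $\norm{\xi_i}_{\op}\le \kappa^2/\gamma$, $\E[\xi_i^2]\preceq(\kappa^2/\gamma)\E[\xi_i]$ and $\trace((\Sigma+\gamma)^{-1}\Sigma)\le\kappa^2/\gamma$, reproduces the displayed right-hand side after routine simplification of constants.

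The main obstacle is the first step: the passage from the hard projection $I-P$ to its smooth surrogate $\gamma(\hat\Sigma_p+\gamma)^{-1}$, combined with the observation that $\Sigma^{1/2}(\Sigma+\gamma)^{-1}\Sigma^{1/2}\preceq I$, so that the regularization level $\gamma$ itself controls the residual. Once this reduction is in place, the remainder is identical bookkeeping to the Operator concentration proposition, and the only care required is tracking the numerical constants (the $116$, the $10$ in the exponent, and the prefactor $2+t\norm{\Sigma}^{-1}_{\op}$) through Minsker's bound at threshold $\beta=1/2$.
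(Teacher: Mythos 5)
Your proposal is correct and follows essentially the same route as the paper: reduce $\norm{(I-P)\Sigma^{1/2}}^2$ to $\gamma\norm{\Sigma^{1/2}(\hat\Sigma_p+\gamma)^{-1}\Sigma^{1/2}}_{\op}$, translate the event $\brace{\cdot > t}$ with $\gamma = t/2$ into the operator-concentration event at threshold $1/2$, and invoke Minsker's Bernstein inequality exactly as in the preceding Operator concentration proposition (with $C=\Sigma$, no derivative terms, $n$ replaced by $p$). The only difference is cosmetic: the paper cites Proposition 3 of \citet{Rudi2015} for the first inequality, whereas you re-derive it directly from the L\"owner domination $I-P\preceq\gamma(\hat\Sigma_p+\gamma)^{-1}$, which makes your argument self-contained but does not change the proof.
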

\begin{proof}
  Reusing Proposition 3 of \citet{Rudi2015}, for any $\gamma > 0$, we have,
  with $P$ the projection on $\Span\brace{k_{X_i}}_{i\leq p}$
  and $\hat\Sigma = p^{-1}\sum_{i=1}^p k_{X_i} \otimes k_{X_i}$,
  \[
    \norm{(I-P)\Sigma^{1/2}}^2 
    \leq \gamma\norm{(\hat\Sigma + \gamma)^{-1/2}\Sigma^{1/2}}^2_{\op}
    \leq \gamma \norm{\Sigma^{1/2} (\hat\Sigma + \gamma)^{-1} \Sigma^{1/2}}_{\op}.
  \]
  As a consequence, skipping derivations that can be retaken from our precedent proofs,
  \begin{align*}
    &\Pbb_{{\cal D}_p}\paren{\norm{(I-P)\Sigma^{1/2}}^2 > t}
    \leq \inf_{\gamma > 0} \Pbb_{{\cal D}_p}
      \paren{ \gamma \norm{\Sigma^{1/2} (\hat\Sigma + \gamma)^{-1} \Sigma^{1/2}}_{\op} > t}
    \\&\qquad\leq \inf_{\gamma > 0} \Pbb_{{\cal D}_p}
      \paren{\norm{(\Sigma + \gamma)^{-1/2}(\hat\Sigma - \Sigma)
      (\Sigma + \gamma)^{-1/2}}_{\op} > (1 - \gamma t^{-1})}
    \\&\qquad\leq \inf_{\gamma  > 0}
    \paren{2 + 56\ \frac{\kappa^2}{\gamma p}} (1 + \gamma \norm{\Sigma}^{-1}_{\op})    \frac{\kappa^2}{\gamma}
    \exp\paren{- \frac{p\gamma u^2}{2\kappa^2 (1 + u/ 3)}},
  \end{align*}
  with $u = (1 - \gamma t^{-1})$.
  Taking $\gamma = t/2$, this term is simplified as
  \[
    \Pbb_{{\cal D}_p}\paren{\norm{(I-P)\Sigma^{1/2}}^2 > t}
    \leq \paren{2 + 116\ \frac{\kappa^2}{t p}} (2 + t \norm{\Sigma}^{-1}_{\op})    \frac{\kappa^2}{t}
    \exp\paren{- \frac{p t }{10\kappa^2}},
  \]
  which is the object of this proposition.
\end{proof}

\begin{lemma}
  When $L \leq c_d \Sigma^a$, we have
  \begin{equation}
    \norm{(I-P)C^{1/2}}_{\op}^2 \leq \norm{(I-P)\Sigma^{1/2}}^2_{\op}
    + c_d\lambda\norm{(I-P)\Sigma^{1/2}}^{2a}_{\op}.
  \end{equation}
\end{lemma}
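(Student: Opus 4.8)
The plan is to reduce every quantity to the operator norm of a positive operator via the identity $\norm{A}_{\op}^2 = \norm{AA^\star}_{\op}$, and then to propagate the L\"owner bound $L \preceq c_d \Sigma^a$ through a conjugation by $I-P$. Writing $Q = I-P$, which is an orthogonal projection and hence self-adjoint, and using that $C^{1/2}$ is self-adjoint, I would first record the three identities
\[
  \norm{Q C^{1/2}}_{\op}^2 = \norm{Q C Q}_{\op}, \quad
  \norm{Q \Sigma^{1/2}}_{\op}^2 = \norm{Q \Sigma Q}_{\op}, \quad
  \norm{Q \Sigma^{a/2}}_{\op}^2 = \norm{Q \Sigma^a Q}_{\op}.
\]
Decomposing $C = \Sigma + \lambda L$ and conjugating $L \preceq c_d \Sigma^a$ by $Q$ (the L\"owner order is preserved under $X \mapsto Q X Q$) yields
\[
  Q C Q = Q\Sigma Q + \lambda\, Q L Q \preceq Q\Sigma Q + \lambda c_d\, Q \Sigma^a Q.
\]
Taking operator norms, monotonicity of $\norm{\cdot}_{\op}$ on positive operators and the triangle inequality give $\norm{QCQ}_{\op} \leq \norm{Q\Sigma Q}_{\op} + \lambda c_d \norm{Q\Sigma^a Q}_{\op}$.

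The crux is then the interpolation inequality $\norm{Q \Sigma^a Q}_{\op} \leq \norm{Q \Sigma Q}_{\op}^a$. I would obtain it from the Cordes inequality $\norm{A^s B^s}_{\op} \leq \norm{A B}_{\op}^s$, valid for positive $A, B$ and $s \in [0, 1]$: taking $A = Q$, $B = \Sigma^{1/2}$ and $s = a$, and using that $Q^a = Q$ for a projection, this reads $\norm{Q\Sigma^{a/2}}_{\op} \leq \norm{Q\Sigma^{1/2}}_{\op}^a$, i.e. $\norm{Q\Sigma^a Q}_{\op} \leq \norm{Q\Sigma Q}_{\op}^a$. Equivalently, one may invoke Jensen's operator inequality (Hansen--Pedersen) for the operator concave map $t \mapsto t^a$ on $[0, \infty)$ with $a \in [0,1]$ and $f(0) = 0$, applied to the contraction $Q$, which gives $Q\Sigma^a Q \preceq (Q\Sigma Q)^a$ and hence the same bound after taking norms and using $\norm{B^a}_{\op} = \norm{B}_{\op}^a$ for positive $B$. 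Substituting this into the previous display and re-expressing the norms of $QCQ$ and $Q\Sigma Q$ in terms of $\norm{Q C^{1/2}}_{\op}^2$ and $\norm{Q\Sigma^{1/2}}_{\op}^2$ delivers exactly the claimed inequality.

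The routine steps --- the norm identities, the conjugated L\"owner bound, and norm monotonicity --- are immediate. The one genuinely nontrivial ingredient is the operator interpolation $\norm{Q\Sigma^a Q}_{\op} \leq \norm{Q\Sigma Q}_{\op}^a$: it does not follow from scalar concavity alone because the compression $X \mapsto QXQ$ does not commute with the functional calculus of $\Sigma$, so I expect this to be the main obstacle and would take care to verify that the hypotheses of whichever inequality I use (the range $s = a \in [0,1]$ for Cordes, or operator concavity together with $f(0) \geq 0$ and $Q$ a contraction for the operator Jensen inequality) are genuinely satisfied in our infinite-dimensional Hilbert-space setting.
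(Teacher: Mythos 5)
Your proposal is correct and follows essentially the same route as the paper's proof: the same decomposition $C = \Sigma + \lambda L$, conjugation of the L\"owner bound by $I-P$, and the same key step via the Cordes inequality $\norm{A^s B^s}_{\op} \leq \norm{AB}_{\op}^s$ with $A = I-P$, $B = \Sigma^{1/2}$, $s = a$, exploiting $(I-P)^a = I-P$. Your alternative justification of the interpolation step via the Hansen--Pedersen operator Jensen inequality is a valid (and slightly more robust) variant, but it is not needed beyond what the paper already invokes.
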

\begin{proof}
  This follows from the fact that
  \begin{align*}
    \norm{C^{1/2}(I - P)}^2_{\op} &= \norm{(I-P) C (I-P)}_{\op}
    = \norm{(I-P) (\Sigma + \lambda L)(I-P)}_{\op}
    \\&\leq \norm{(I-P)\Sigma(I-P)}_{\op}
    + \lambda \norm{(I-P)L(I-P)}_{\op}
    \\&\leq \norm{(I-P)\Sigma(I-P)}_{\op}
    + \lambda c_d \norm{(I-P)\Sigma^a(I-P)}_{\op}
    \\&= \norm{(I-P)\Sigma^{1/2}}_{\op}^2
    + \lambda c_d \norm{(I-P)\Sigma^{a/2}}_{\op}^2
    \\&= \norm{(I-P)\Sigma^{1/2}}_{\op}^2
    + \lambda c_d \norm{(I-P)^{a}\Sigma^{a/2}}_{\op}^2
    \\&\leq \norm{(I-P)\Sigma^{1/2}}_{\op}^2
    + \lambda c_d \norm{(I-P)\Sigma^{1/2}}_{\op}^{2a},
  \end{align*}
  where we used the fact that $(I - P)^a = (I-P)$ and that
  $\norm{A^s B^s} \leq \norm{AB}^s$ for
  $s \in [0, 1]$ and $A, B$ positive self-adjoint.
\end{proof}

\subsection{Averaged excess of risk - Ending the proof}

Based on the precedent excess of risk decomposition, and precedent concentration
inequalities, we have all the elements to derive convergence rates of our
algorithm.
We will enunciate this convergence in term of the averaged excess of risk of
$\E_{{\cal D}_n}\norm{\norm{\hat{g}_p - g_\rho}_{L^2}^2}$.

\begin{lemma}
  Under Assumptions \ref{ass:source} and \ref{ass:approximation},
  \begin{equation}
  \begin{split}
    &\E_{{\cal D}_n}\bracket{\norm{\hat{g}_p - g_\rho}_{L^2}^2}
      \leq 4c_\Y^2\Pbb\paren{\norm{(C+\lambda\mu)^{-1/2}(\hat{C} - C)(C+\lambda\mu)^{-1/2}} \leq 1/2}
    \\&\qquad\cdots  + 4\lambda^2 \norm{{\cal L}g_\rho}_{L^2}^2 
    + 4\lambda\mu c_a^2 \norm{g_\rho}_{L^2}^2 
    \\&\qquad\cdots  + 8 \E_{{\cal D}_n}\bracket{ \norm{(C+\lambda\mu)^{-1/2}(\hat\theta_\rho - \theta_\rho)}_{\cal H}^2}
    + 12c_a^2\norm{g_\rho}_{L^2}^2 \E_{{\cal D}_n}\bracket{ \norm{C^{1/2}(I - P)}_{\op}^2}
    \\&\qquad\cdots+ 8\E_{{\cal D}_n}\bracket{\norm{(C+\lambda\mu)^{-1/2} (\hat{C} - C)(C+\lambda\mu)^{-1}\theta_\rho}_{\cal H}^2}.
  \end{split}
  \end{equation}
\end{lemma}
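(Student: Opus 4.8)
The plan is to condition on the favourable event
$E = \brace{\norm{(C+\lambda\mu)^{-1/2}(\hat C - C)(C+\lambda\mu)^{-1/2}}_{\op} \le 1/2}$,
on which the deterministic risk decomposition \eqref{eq:risk_dec} is valid, and to dispatch the complementary failure event $E^c$ through a crude uniform bound. Writing
\[
  \E_{{\cal D}_n}\bracket{\norm{\hat g_p - g_\rho}_{L^2}^2}
  = \E_{{\cal D}_n}\bracket{\norm{\hat g_p - g_\rho}_{L^2}^2\,\ind{E}}
  + \E_{{\cal D}_n}\bracket{\norm{\hat g_p - g_\rho}_{L^2}^2\,\ind{E^c}},
\]
I would bound the two contributions separately.

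For the first contribution, on $E$ the pointwise inequality \eqref{eq:risk_dec} holds; since the five terms on its right-hand side are nonnegative, multiplying by $\ind{E}\le 1$ and taking $\E_{{\cal D}_n}$ can only enlarge them. This immediately reproduces the deterministic bias terms $4\lambda^2\norm{{\cal L}g_\rho}_{L^2}^2$ and $4\lambda\mu c_a^2\norm{g_\rho}_{L^2}^2$, together with the three expectations $8\,\E_{{\cal D}_n}[\norm{(C+\lambda\mu)^{-1/2}(\hat\theta_\rho-\theta_\rho)}_{\cal H}^2]$, $12 c_a^2\norm{g_\rho}_{L^2}^2\,\E_{{\cal D}_n}[\norm{C^{1/2}(I-P)}_{\op}^2]$ and $8\,\E_{{\cal D}_n}[\norm{(C+\lambda\mu)^{-1/2}(\hat C-C)(C+\lambda\mu)^{-1}\theta_\rho}_{\cal H}^2]$, which are exactly the last four lines of the claimed bound.

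For the second contribution I would establish the deterministic a priori bound $\norm{\hat g_p - g_\rho}_{L^2}^2 \le 4 c_\Y^2$, valid on every realisation. Since $g_\rho(x) = \E[Y\mid X=x]$ and $\abs Y \le c_\Y$, one has $\norm{g_\rho}_{L^2}\le c_\Y$, so by the triangle inequality it suffices to show $\norm{\hat g_p}_{L^2}\le c_\Y$; then $\norm{\hat g_p - g_\rho}_{L^2}^2 \le 2\norm{\hat g_p}_{L^2}^2 + 2\norm{g_\rho}_{L^2}^2 \le 4c_\Y^2$, whence $\E_{{\cal D}_n}[\norm{\hat g_p - g_\rho}_{L^2}^2\ind{E^c}]\le 4 c_\Y^2\,\Pbb_{{\cal D}_n}(E^c)$, matching the first term of the statement (the probability of the failure event $\norm{\cdot}_{\op}>1/2$). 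Summing the two contributions yields the lemma.

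The main obstacle is the a priori bound $\norm{\hat g_p}_{L^2}\le c_\Y$. It rests on the shrinkage property of the Tikhonov filter $\psi_\lambda(x)=(x+\lambda)^{-1}$, for which $0\le x\,\psi_\lambda(x)\le 1$: expressing $\hat g_p$ in the generalized eigenbasis of $(P\hat\Sigma P,\,P\hat LP+\mu P)$, its coordinates are those of $\hat\theta_\rho$ damped by factors in $[0,1]$, so that the estimator is non-expansive when measured in the empirical norm induced by $\hat\Sigma$. The delicate point is to transfer this contraction to the population $L^2$-norm (that is, to the operator $\Sigma$) in the presence of the low-rank projection $P$ and of the sampling mismatch between $\hat\Sigma$, built from all $n$ inputs, and $\hat\theta_\rho$, built from the $n_\ell$ labelled pairs. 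The cleanest route is to control the predictions on $\supp\rho_\X$ directly, rather than through an operator-norm comparison that would degrade as $\lambda\mu\to 0$ and would not yield the clean constant $4c_\Y^2$.
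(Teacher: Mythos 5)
Your overall architecture is exactly the paper's proof: split the expectation over the favourable event $E$ and its complement, invoke the risk decomposition \eqref{eq:risk_dec} on $E$ together with $\E[X\ind{E}]\leq\E[X]$ for $X\geq 0$, and pay $4c_\Y^2\,\Pbb(E^c)$ for the bad event. The gap is in the one step you yourself flag as ``the main obstacle'' and never carry out: the uniform bound $\sup_{{\cal D}_n}\norm{\hat{g}_p - g_\rho}_{L^2}^2\leq 4c_\Y^2$, which you propose to deduce from $\norm{\hat{g}_p}_{L^2}\leq c_\Y$ via shrinkage of the Tikhonov filter. That route cannot be completed, because the claim is false for the raw estimator. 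The shrinkage $x\psi_\lambda(x)\leq 1$ controls $\hat{g}_p$ only in the geometry of $P\hat\Sigma P$ (and even there only modulo the mismatch between $\hat\theta_\rho$, built from the $n_\ell$ labelled points, and $\hat\Sigma$, built from all $n$), whereas the $L^2(\rho_\X)$ norm is governed by the population operator $\Sigma$. On $E^c$ these two geometries are, by definition of the event, uncontrolled relative to each other: the generalized eigenvectors satisfy $\scap{\hat\theta_i}{\hat\Sigma\hat\theta_i}=\hat\lambda_i$ but only $\norm{\hat\theta_i}_{\cal H}\leq\mu^{-1/2}$, so the best data-free bounds one can write are of the form $\norm{\hat{g}_p}_{L^2}\lesssim c_\Y\kappa^2(\lambda\mu)^{-1}$, which blows up as $\lambda\mu\to 0$ --- precisely the degradation you anticipate, and there is no ``direct control on $\supp\rho_\X$'' that avoids it.

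The paper closes this step with a different and much simpler device: \emph{clipping}. Replace $\hat{g}_p$ by its truncation to $[-c_\Y, c_\Y]$. Since $\abs{g_\rho(x)} = \abs{\E\bracket{Y\midvert X=x}}\leq c_\Y$ pointwise, truncation onto that interval is a pointwise contraction toward $g_\rho$, so it can only decrease $\norm{\cdot - g_\rho}_{L^2(\rho_\X)}$; hence every bound established on $E$ remains valid for the clipped estimate, while on $E^c$ one has trivially $\norm{\hat{g}_p^{\mathrm{clip}} - g_\rho}_{L^2}^2\leq 4c_\Y^2$. The lemma is thus to be read for the clipped estimator (as the paper's proof states explicitly). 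With this one-line substitution your argument closes; without it, the $E^c$ contribution is genuinely unbounded and the stated constant $4c_\Y^2$ is unobtainable.
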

\begin{proof}
We proceed using the fact that
$\E[X] = \E[X\,\vert\, ^cA]\Pbb(^cA) + \E[X\,\vert\, A]\Pbb(A)
\leq \sup X \Pbb(^cA) + \E[X\, \vert\, A]\Pbb(A)$,
with $A = \brace{{\cal D}_n \midvert \norm{(C+\lambda\mu)^{-1/2}(\hat{C} - C)(C+\lambda\mu)^{-1/2}} \leq 1/2}$,
\[
  \E_{{\cal D}_n}\bracket{\norm{\hat{g}_p - g_\rho}_{L^2}^2}
  \leq \sup_{{\cal D}_n} \norm{\hat{g}_p - g_\rho}^2\Pbb\paren{^c A}
  + \E_{{\cal D}_n}\bracket{\norm{\hat{g}_p - g_\rho}^2 \midvert A}\Pbb(A).
\]
When $Y$ is bounded by $c_\Y$, because $g_\rho$ is a convex combination of $Y$,
we know that $\norm{g_\rho}_{L^2} \leq c_\Y$, as a consequence, we can clip
$\hat{g}_p$ to $[-c_\Y, c_\Y]$, which will only improve the estimation of
$g_\rho$, as a consequence, we can consider the clipping estimate for which we have
\(
  \sup_{{\cal D}_n} \norm{\hat{g}_p - g_\rho}_{L^2}^2 \leq 4 c_\Y^2.
\)
Regarding the second part, we have already decomposed the risk
under the event
$A = \brace{{\cal D}_n \midvert \norm{(C+\lambda\mu)^{-1/2}(\hat{C} - C)(C+\lambda\mu)^{-1/2}} \leq 1/2}$.
As a consequence, we have
\begin{align*}
  &\E_{{\cal D}_n}\bracket{\norm{\hat{g}_p - g_\rho}_{L^2}^2}
  \leq 4c_\Y^2\Pbb(^cA)
    + 4\lambda^2 \norm{{\cal L}g_\rho}_{L^2}^2 \Pbb(A)
    + 4\lambda\mu c_a^2 \norm{g_\rho}_{L^2}^2 \Pbb(A)
  \\&\cdots  + 8 \E_{{\cal D}_n}\bracket{ \norm{(C+\lambda\mu)^{-1/2}(\hat\theta_\rho - \theta_\rho)}_{\cal H}^2  \midvert A}\Pbb(A)
  \\&\cdots+ 12c_a^2\norm{g_\rho}_{L^2}^2 \E_{{\cal D}_n}\bracket{ \norm{C^{1/2}(I - P)}_{\op}^2 \midvert A}\Pbb(A)
  \\&\cdots+ 8\E_{{\cal D}_n}\bracket{\norm{(C+\lambda\mu)^{-1/2} (\hat{C} - C)(C+\lambda\mu)^{-1}\theta_\rho}_{\cal H}^2 \midvert A}\Pbb(A).
\end{align*}
To control the conditional expectation, we use that, when $X$ is positive
\[
  \E\bracket{X\midvert A} P(A) = \E[X] - \E\bracket{X \midvert ^cA}\Pbb(^cA)
  \leq \E[X].
\]
This ends the proof.
\end{proof}

Based on deviation inequalities, we can control expectations based on the
equality, for $X$ positive, $\E[X] = \int_0^{+\infty} \Pbb(X > t) \diff t$.

\begin{lemma}
  In the setting of the paper,
  \begin{equation}
    \E_{{\cal D}_n}\bracket{ \norm{(C+\lambda\mu)^{-1/2}(\hat\theta_\rho - \theta_\rho)}_{\cal H}^2}
    \leq 8 \sigma_\ell^2 (n_\ell \mu\lambda)^{-1} +
     8c_\Y^2\kappa^2(n_\ell^{2} \mu\lambda)^{-1}.
  \end{equation}
\end{lemma}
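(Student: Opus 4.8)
The plan is to integrate the tail estimate from the Vector concentration proposition against the layer-cake formula $\E_{{\cal D}_n}[X^2] = \int_0^\infty 2t\,\Pbb(X \ge t)\,\diff t$, where for brevity I write $X = \norm{(C+\lambda\mu)^{-1/2}(\hat\theta_\rho - \theta_\rho)}_{\cal H}$. To lighten the notation I set $v = \sigma_\ell^2(\mu\lambda)^{-1}$ and $b = c_\Y\kappa(\mu\lambda)^{-1/2}$, so that the proposition reads $\Pbb(X \ge t) \le 2\exp\paren{-n_\ell t^2/(2v + \tfrac{2}{3}bt)}$. Note that $X$ depends only on the labelled data, so $\E_{{\cal D}_n}$ coincides with $\E_{{\cal D}_{n_\ell}}$ here.

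First I would linearize the Bernstein denominator, separating the sub-Gaussian and sub-exponential regimes. Using $2v + \tfrac{2}{3}bt \le 2\max\paren{2v, \tfrac{2}{3}bt}$ together with $\exp(-\min(a_1,a_2)) \le \exp(-a_1) + \exp(-a_2)$, the tail splits as
\[
  \Pbb(X \ge t) \le 2\exp\paren{-\frac{n_\ell t^2}{4v}} + 2\exp\paren{-\frac{3 n_\ell t}{4b}}.
\]
Next I would insert this into the layer-cake formula and evaluate the two resulting elementary integrals, namely $\int_0^\infty t\,e^{-a t^2}\diff t = (2a)^{-1}$ with $a = n_\ell/(4v)$, and $\int_0^\infty t\,e^{-ct}\diff t = c^{-2}$ with $c = 3n_\ell/(4b)$. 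This yields
\[
  \E_{{\cal D}_n}\bracket{X^2} \le \frac{8v}{n_\ell} + \frac{64 b^2}{9 n_\ell^2}.
\]

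Finally I would substitute back $v = \sigma_\ell^2(\mu\lambda)^{-1}$ and $b^2 = c_\Y^2\kappa^2(\mu\lambda)^{-1}$ and use $64/9 \le 8$ to obtain the claimed bound $8\sigma_\ell^2(n_\ell\mu\lambda)^{-1} + 8 c_\Y^2\kappa^2(n_\ell^2\mu\lambda)^{-1}$. There is no genuine obstacle here: the only point requiring a little care is the splitting of the mixed quadratic--linear exponent so that both regimes integrate to the correct powers of $n_\ell$ (one power from the Gaussian part, two from the exponential part) while the constants stay below those stated; everything else is a direct application of the previously established Bernstein inequality for Hilbert-space vectors and the standard Gamma integrals.
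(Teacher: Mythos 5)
Your proposal is correct and follows essentially the same route as the paper: split the Bernstein tail into its sub-Gaussian and sub-exponential regimes via the max trick, then integrate the tail bound (the paper writes the layer cake as $\E[X^2]=\int_0^\infty \Pbb(X^2>t)\diff t$ rather than $\int_0^\infty 2t\,\Pbb(X>t)\diff t$, which is the same computation after a change of variables), arriving at the identical constants $8v/n_\ell + 64b^2/(9n_\ell^2)$. The only cosmetic difference is that you make the final relaxation $64/9 \leq 8$ explicit, whereas the paper leaves its proof ending at $\tfrac{64}{9}c_\Y^2\kappa^2(n_\ell^2\mu\lambda)^{-1}$ with the comparison to the stated bound implicit.
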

\begin{proof}
  First, recall that
  \begin{align*}
    \Pbb\paren{ \norm{(C+\lambda\mu)^{-1/2}(\hat\theta_\rho - \theta_\rho)}_{\cal H} > t}
    &\leq 2\exp\paren{-\frac{n_\ell t^2}{2\sigma_\ell^2 (\mu\lambda)^{-1} + 2 t c_\Y(\lambda\mu)^{-1/2} \kappa / 3}}
    \\&
    \leq 2\exp\paren{-\frac{n_\ell t^2}{2\max\paren{2\sigma_\ell^2 (\mu\lambda)^{-1}, 2 t c_\Y(\lambda\mu)^{-1/2} \kappa / 3}}}
    \\&
    \leq 2\exp\paren{-\frac{n_\ell \mu\lambda t^2}{4\sigma_\ell^2}}
    + 2\exp\paren{-\frac{3n_\ell \mu^{1/2}\lambda^{1/2} t}{4 c_\Y \kappa}}.
  \end{align*}
  As a consequence
  \begin{align*}
    &\E\bracket{ \norm{(C+\lambda\mu)^{-1/2}(\hat\theta_\rho - \theta_\rho)}_{\cal H}^2}
    =
    \int_0^{+\infty}\Pbb\paren{ \norm{(C+\lambda\mu)^{-1/2}(\hat\theta_\rho - \theta_\rho)}_{\cal H}^2 >  t}\diff t
    \\&\qquad\leq 2 \int \exp\paren{-\frac{n_\ell \mu\lambda t}{4\sigma_\ell^2}}\diff t
    + 2\int \exp\paren{-\frac{3n_\ell \mu^{1/2}\lambda^{1/2} t^{1/2}}{4 c_\Y \kappa}}\diff t.
    \\&\qquad = 8 \sigma_\ell^2 (n_\ell \mu\lambda)^{-1}
    + \frac{64c_\Y^2\kappa^2}{9}(n_\ell^{2} \mu\lambda)^{-1}.
  \end{align*}
  This is the result stated in the lemma.
\end{proof}

\begin{lemma}
  In the setting of the paper, 
  \begin{equation}
    \begin{split}
    \E_{{\cal D}_n}\bracket{\norm{(C+\lambda\mu)^{-1/2} (\hat{C} - C)(C+\lambda\mu)^{-1}\theta_\rho}_{\cal H}^2}
    &\leq 8(\kappa^2 + \lambda\partial \kappa^2)^2 c_a^2 \norm{g_\rho}_{L^2}^2 
    \\&\qquad\cdots\times\paren{(\mu\lambda n)^{-1} + (\mu \lambda n^2)^{-1}},
  \end{split}
  \end{equation}
  with $\partial \kappa^2 = \sum_{i=1}^d \kappa_i^2$.
\end{lemma}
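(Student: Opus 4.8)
The plan is to integrate the tail bound already established in the Basis concentration proposition, exactly as was done for the vector term in the preceding lemma. That proposition gives, for every $t > 0$,
\[
  \Pbb_{{\cal D}_n}\paren{\norm{(C + \mu\lambda)^{-1/2}(\hat C - C)(C + \mu\lambda)^{-1}\theta_\rho}_{\cal H} > t}
  \leq 2\exp\paren{-\frac{\mu\lambda n t^2}{2 c_1(c_1 + \lambda^{1/2}\mu^{1/2} t / 3)}},
\]
with $c_1 = (\kappa^2 + \lambda\partial\kappa^2) c_a\norm{g_\rho}_{L^2}$, so that $c_1^2$ is precisely the constant $(\kappa^2 + \lambda\partial\kappa^2)^2 c_a^2\norm{g_\rho}_{L^2}^2$ appearing on the right-hand side of the claim. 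No new probabilistic input is needed; the whole lemma is a deterministic integration of this deviation inequality.

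First I would split the two regimes buried in the Bernstein denominator. Writing it as $2c_1^2 + 2c_1\lambda^{1/2}\mu^{1/2}t/3$ and applying the elementary inequality $\exp(-X/(a+b)) \leq \exp(-X/(2a)) + \exp(-X/(2b))$, valid for $X, a, b \geq 0$ (it follows from $a+b \leq 2\max(a,b)$), with $a = 2c_1^2$ and $b = 2c_1\lambda^{1/2}\mu^{1/2}t/3$, the tail splits into a sub-Gaussian and a sub-exponential part,
\[
  \Pbb\paren{\cdots > t} \leq 2\exp\paren{-\frac{\mu\lambda n t^2}{4c_1^2}} + 2\exp\paren{-\frac{3\mu^{1/2}\lambda^{1/2} n t}{4c_1}}.
\]

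Next I would pass to the second moment via $\E[X] = \int_0^\infty \Pbb(X > s)\,\diff s$ applied to the squared quantity, i.e.\ substituting $t = \sqrt{s}$ in the two tails. The sub-Gaussian term integrates directly, $\int_0^\infty 2\exp(-\mu\lambda n s/(4c_1^2))\,\diff s = 8c_1^2/(\mu\lambda n)$. For the sub-exponential term, setting $\beta = 3\mu^{1/2}\lambda^{1/2}n/(4c_1)$ and substituting $u = \sqrt{s}$ turns $\int_0^\infty 2\exp(-\beta\sqrt{s})\,\diff s$ into $4\int_0^\infty u\,e^{-\beta u}\,\diff u = 4/\beta^2 = 64 c_1^2/(9\mu\lambda n^2)$. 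Summing the two contributions and bounding $64/9 \leq 8$ yields $8c_1^2\paren{(\mu\lambda n)^{-1} + (\mu\lambda n^2)^{-1}}$, which is exactly the stated bound once $c_1^2$ is expanded.

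The computation is entirely routine given the Basis concentration proposition, and I do not foresee any genuine obstacle. The only points demanding a little care are the $u = \sqrt{s}$ substitution in the second integral --- this is where the faster $n^{-2}$ (rather than $n^{-1}$) decay is produced, through the extra factor of $u$ in $\diff s = 2u\,\diff u$ --- and the final constant bookkeeping identifying $c_1^2$ with the advertised $(\kappa^2 + \lambda\partial\kappa^2)^2 c_a^2\norm{g_\rho}_{L^2}^2$ and absorbing the $64/9$ into the leading $8$.
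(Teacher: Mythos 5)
Your proposal is correct and follows exactly the paper's route: it starts from the same Bernstein-type tail bound of the Basis concentration proposition, performs the identical sub-Gaussian/sub-exponential split of the denominator, and integrates the squared tail (the paper compresses this last step into ``we conclude similarly to the precedent lemma,'' which your $u=\sqrt{s}$ computation makes explicit). The constants, including the $64/9 \leq 8$ bookkeeping and the identification $c_1^2 = (\kappa^2+\lambda\partial\kappa^2)^2 c_a^2 \norm{g_\rho}_{L^2}^2$, all check out.
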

\begin{proof}
  Let us denote by $A$ the quantity $\norm{(C+\lambda\mu)^{-1/2} (\hat{C} -
    C)(C+\lambda\mu)^{-1}\theta_\rho}_{\cal H}$, and
  $\partial \kappa^2 = \sum_{i=1}^d \kappa_j^2$. Recall that
  \begin{align*}
    \Pbb\paren{A > t}
    &\leq 2\exp\paren{-\frac{\mu\lambda n t^2}{2c_1(c_1 + \lambda^{1/2}\mu^{1/2} t/3)}}
    \\&\leq 2\exp\paren{-\frac{\mu\lambda n t^2}{4c_1^2}}
    + 2\exp\paren{-\frac{3(\mu\lambda)^{1/2} n t}{4c_1}}.
  \end{align*}
  We conclude the proof similarly to the precedent lemma.
\end{proof}

\begin{lemma}
  Under Assumption \ref{ass:decay},
  \begin{equation}
    \begin{split}
    \E_{{\cal D}_n}\bracket{\norm{C^{1/2}(I - P)}_{\op}^2}
    &\leq 
    \paren{\frac{10\kappa^2\log(p)}{p} + \frac{10^a\kappa^{2a} c_d \lambda \log(p)^a}{p^a}}
    \\&\cdots \times \paren{1 + \frac{2\kappa^2}{\norm{\Sigma}_{\op}\log(p)} \paren{1 +
        \frac{6}{\log(p)}} \paren{\frac{1}{p} + \frac{1}{5\log(p)}}}.
    \end{split}
  \end{equation}
\end{lemma}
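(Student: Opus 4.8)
The plan is to combine the deterministic operator inequality just proved with the Nyström tail bound of the Low-rank approximation proposition, passing to expectations via the layer-cake formula. Write $W := \norm{(I-P)\Sigma^{1/2}}_{\op}^2$ and note that $\norm{C^{1/2}(I-P)}_{\op} = \norm{(I-P)C^{1/2}}_{\op}$ since the operator norm is invariant under taking adjoints and both factors are self-adjoint. The previous lemma then gives, pointwise in $\mathcal{D}_n$ and under Assumption~\ref{ass:decay} ($L \preceq c_d\Sigma^a$), the bound $\norm{C^{1/2}(I-P)}_{\op}^2 \leq W + c_d\lambda W^a$. Taking expectations and using that $x\mapsto x^a$ is concave on $\R_+$ (as $a \in [0,1]$), Jensen's inequality yields $\E[W^a] \leq (\E[W])^a$, so that $\E_{\mathcal{D}_n}[\norm{C^{1/2}(I-P)}_{\op}^2] \leq \E[W] + c_d\lambda(\E[W])^a$. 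This reduces the whole lemma to the single bound $\E[W] \leq \frac{10\kappa^2\log p}{p}\cdot R$, where $R$ denotes the correction factor in the statement: the first term is then $\E[W]$ directly, the second is $c_d\lambda(\E[W])^a \leq c_d\lambda\paren{\frac{10\kappa^2\log p}{p}}^a R^a = \frac{10^a\kappa^{2a}c_d\lambda(\log p)^a}{p^a}R^a$, and since $R \geq 1$ and $a \leq 1$ we may replace $R^a$ by $R$ to recover the stated product form.

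To bound $\E[W]$, I would use $\E[W] = \int_0^\infty \Pbb(W > t)\,\diff t$ and split the integral at the threshold $t_0 := \frac{10\kappa^2\log p}{p}$, chosen so that the exponential factor $\exp(-pt/(10\kappa^2))$ of the Low-rank approximation proposition equals $1/p$ at $t = t_0$. On $[0, t_0]$ I bound the probability trivially by $1$, which contributes exactly $t_0 = \frac{10\kappa^2\log p}{p}$ — the leading term. On $[t_0, \infty)$ I insert the tail estimate
\[
  \Pbb(W > t) \leq \paren{2 + \frac{116\kappa^2}{tp}}\paren{2 + t\norm{\Sigma}_{\op}^{-1}}\frac{\kappa^2}{t}\exp\paren{-\frac{pt}{10\kappa^2}}.
\]

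The remaining work is to integrate this tail. I would expand the polynomial prefactor $\paren{2 + \frac{116\kappa^2}{tp}}\paren{2 + t\norm{\Sigma}_{\op}^{-1}}\frac{\kappa^2}{t}$ into monomials in $t$ (of orders $t^{-2}, t^{-1}, t^{0}, t^{1}$) and integrate each against $\exp(-pt/(10\kappa^2))$ over $[t_0,\infty)$. Two elementary facts keep this clean: on $t \geq t_0$ one has $t^{-k} \leq t_0^{-k}$, which extracts powers of $t_0$ from the negative-power monomials and leaves only integrals $\int_{t_0}^\infty t^{j}\exp(-pt/(10\kappa^2))\,\diff t$ with $j \in \{0,1\}$; and each such integral evaluates explicitly, using $\exp(-pt_0/(10\kappa^2)) = 1/p$ together with $\int_{t_0}^\infty \exp(-\beta t)\,\diff t = \beta^{-1}e^{-\beta t_0}$ for $\beta = p/(10\kappa^2)$, to a quantity of order $1/p^2$ or $1/p^3$. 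Collecting these shows the tail integral equals $t_0$ times a factor of order $\frac{\kappa^2}{\norm{\Sigma}_{\op}\log p}$ — the $\norm{\Sigma}_{\op}^{-1}$ stemming from the $t\norm{\Sigma}_{\op}^{-1}$ piece of the prefactor — plus lower-order $\frac{1}{\log p}$ and $\frac{1}{p}$ contributions; grouping them yields precisely $R = 1 + \frac{2\kappa^2}{\norm{\Sigma}_{\op}\log p}\paren{1 + \frac{6}{\log p}}\paren{\frac{1}{p} + \frac{1}{5\log p}}$.

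The main obstacle is purely the bookkeeping in this last integration: matching the numerical constants ($2$, $6$, $\tfrac15$) and grouping the four monomial contributions into the compact product form of $R$ demands care, whereas the conceptual content — the deterministic reduction, Jensen's inequality, and the layer-cake split at $t_0$ — is routine. Notably, no eigenvalue-gap or spectral-perturbation argument is needed here, since the L\"owner-order inequality $L \preceq c_d\Sigma^a$ has already linearized the dependence of $C$ on $\Sigma$.
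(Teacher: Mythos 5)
Your proposal is correct, and for the leading term it coincides with the paper's own proof: both start from the deterministic inequality $\norm{C^{1/2}(I-P)}_{\op}^2 \leq W + c_d\lambda W^a$ with $W = \norm{(I-P)\Sigma^{1/2}}_{\op}^2$ from the preceding lemma, then integrate the Nystr\"om tail bound by the layer-cake formula, bounding the probability by $1$ below a threshold and by the exponential tail above it; the paper performs the same split after a change of variables and optimizes $ab=\log p$, which in the original variable is exactly your $t_0 = 10\kappa^2\log(p)/p$. The genuine difference is the fractional-power term: the paper evaluates $\E[W^a]$ by a second, separate layer-cake integration, substituting $t\mapsto t^{1/a}$ in the tail bound and redoing all the bookkeeping, whereas you dispatch it with Jensen's inequality $\E[W^a]\le(\E[W])^a$ (concavity of $x\mapsto x^a$ on $\R_+$ for $a\in[0,1]$) followed by $R^a\le R$ since $R\ge 1$ and $a\le 1$. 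This shortcut is valid, eliminates a full page of integration, and yields the identical final bound, because both routes produce the same correction factor $R$. Two details to make explicit if you write it out: (\emph{i}) absorbing the tail-integral monomials into $t_0(R-1)$ requires $\norm{\Sigma}_{\op}\le\kappa^2$ (e.g.\ to dominate $4\kappa^2/(p\log p)$ by $4\kappa^4/(\norm{\Sigma}_{\op}\,p\log p)$) --- the same fact the paper uses implicitly when it rewrites the proposition's tail as $2\kappa^2\norm{\Sigma}_{\op}^{-1}(1+58\kappa^2/(tp))(1+2\kappa^2/t)$; with this, the constants do close with room to spare (the four monomial contributions are bounded by the four terms of $t_0(R-1)$, e.g.\ $23.2 \le 24$ and $116\le 120$); (\emph{ii}) after multiplying by $\kappa^2/t$, the prefactor contains only powers $t^{-2}, t^{-1}, t^{0}$, not $t^{1}$, so only the order-zero exponential integral is actually needed --- a slip in your description that only makes the computation easier than you anticipate.
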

\begin{proof}
  Once again, this result comes from integration of the tail bound obtained on
  $\norm{C^{1/2}(I-P)}_{\op}^2$ through the one we have on
  $\norm{\Sigma^{1/2}(I - P)}_{\op}^2$ and the fact that
  $\norm{C^{1/2}(I - P)}_{\op}^2 \leq \norm{\Sigma^{1/2}(I - P)}^2_{\op}
  + c_d\lambda \norm{\Sigma^{1/2}(I - P)}^{2a}_{\op}$.
  For any $a, b > 0$, we have
  \begin{align*}
    &\E_{{\cal D}_n}\bracket{\norm{\Sigma^{1/2}(I - P)}_{\op}^2}
    = \int_0^\infty \Pbb_{{\cal D}_n}\paren{\norm{\Sigma^{1/2}(I - P)}_{\op}^2 > t} \diff t
    \\&\qquad\leq \int_0^\infty \min\brace{1, 2\kappa^2 \norm{\Sigma}^{-1}_{\op} \paren{1 + \frac{58\kappa^2}{t p}} \paren{1 + \frac{2\kappa^2}{t}}
    \exp\paren{- \frac{p t }{10\kappa^2}}} \diff t
    \\&\qquad =
    \frac{10\kappa^2 a}{p} \int_0^\infty \min\brace{1, 2\kappa^2 \norm{\Sigma}^{-1}_{\op} \paren{1 +
    \frac{58}{10 au}} \paren{1 + \frac{p}{5 au}}
    \exp\paren{- au }} \diff u
    \\&\qquad \leq
    \frac{10\kappa^2 a}{p} \paren{b +  \int_b^\infty 2\kappa^2 \norm{\Sigma}^{-1}_{\op} \paren{1 +
    \frac{6}{au}} \paren{1 + \frac{p}{5 au}}
    \exp\paren{- au } \diff u}
    \\&\qquad \leq
    \frac{10\kappa^2}{p} \paren{ab + 2\kappa^2 \norm{\Sigma}^{-1}_{\op} \paren{1 +
    \frac{6}{ab}} \paren{1 + \frac{p}{5 ab}}
    \exp\paren{- ab }}.
  \end{align*}
  This last quantity is optimized for $ab = \log(p)$, which leads to the first
  part of lemma.
  Similarly
  \begin{align*}
    &\E_{{\cal D}_n}\bracket{\norm{\Sigma^{1/2}(I - P)}_{\op}^{2a}}
      = \int_0^\infty \Pbb_{{\cal D}_n}\paren{\norm{\Sigma^{1/2}(I - P)}_{\op}^{2a} > t} \diff t
      \\&\qquad = \int_0^\infty \Pbb_{{\cal D}_n}\paren{\norm{\Sigma^{1/2}(I - P)}_{\op}^{2} > t^{1/a}} \diff t
    \\&\qquad\leq \int_0^\infty \min\brace{1, 2\kappa^2 \norm{\Sigma}^{-1}_{\op} \paren{1 + \frac{58\kappa^2}{t^{1/a} p}} \paren{1 + \frac{2\kappa^2}{t^{1/a}}}
    \exp\paren{- \frac{p t^{1/a} }{10\kappa^2}}} \diff t
    \\&\qquad =
    \frac{10^a\kappa^{2a} a c^a}{p^a} \int_0^\infty \min\brace{u^{a-1}, 2\kappa^2 \norm{\Sigma}^{-1}_{\op} \paren{1 +
    \frac{58}{10 cu}} \paren{1 + \frac{p}{5 cu}}
    \frac{1}{u^{1-a}}\exp\paren{- cu }} \diff u
    \\&\qquad \leq
    \frac{10^a\kappa^{2a} a c^a}{p^a} \paren{\frac{b^a}{a} +  \int_b^\infty 2\kappa^2 \norm{\Sigma}^{-1}_{\op} \paren{1 +
    \frac{6}{cu}} \paren{1 + \frac{p}{5 cu}}
    \frac{1}{u^{1-a}}\exp\paren{- cu } \diff u}
    \\&\qquad \leq
    \frac{10^a\kappa^{2a}}{p^a} \paren{(cb)^a + 2\kappa^2 \norm{\Sigma}^{-1}_{\op} \paren{1 +
    \frac{6}{cb}} \paren{1 + \frac{p}{5 cb}}
    \frac{1}{(cb)^{1-a}}\exp\paren{- cb }}.
  \end{align*}
  Once again this is optimized for $cb = \log(p)$.
\end{proof}

\begin{remark}[Leverage scores]
  Out of simplicity, we only present low rank approximation with random
  subsampling. Yet, we can improve the result by considering subsampling based
  on leverage scores.
  If we consider the Gaussian kernel, $Sk_x\in L^2$
  can be thought as a function that is a little bump around $x\in\X$.
  In essence, subsampling based on leverage scores, consists in representing the
  solution on a subsampled sequence $(k_{X_i})_{i\in I}$ where the $X_i$ are far
  from one another so that the bump functions $(Sk_{X_i})$ can approximate a
  maximum of functions.
  \citep{Rudi2015} shows that with leverage scores, we can take
  $p = (\mu\lambda)^{\gamma} \log(n)$, with $\gamma$ linked with the
  capacity of the RKHS linked with the kernel $k$.
\end{remark}

If we add all derivations, we have derived the following theorem.

\begin{theorem}
  Under Assumptions \ref{ass:source}, \ref{ass:approximation} and \ref{ass:decay},
  \begin{equation}
    \label{eq:pre_final}
    \begin{split}
      &\E_{{\cal D}_n}\bracket{\norm{\hat{g} - g_\rho}_{L^2}^2}
      \\&\qquad\leq 8c_\Y^2 \paren{1 + 28\ \frac{\kappa^2 + \lambda \partial\kappa^2}{\lambda\mu n}} (1 + \lambda\mu \norm{C}_{\op}^{-1})
      \frac{\kappa^2 + \lambda\partial\kappa^2}{\lambda\mu}
      \exp\paren{-\frac{\lambda\mu n}{10\paren{\kappa^2 + \lambda\partial\kappa^2}}}
      \\&\qquad\cdots  + 4\lambda^2 \norm{{\cal L}g_\rho}_{L^2}^2 
      + 4\lambda\mu c_a^2 \norm{g_\rho}_{L^2}^2
      + 64 \sigma_\ell^2 (n_\ell \mu\lambda)^{-1} +
        57 c_\Y^2\kappa^2(n_\ell^{2} \mu\lambda)^{-1}
      \\&\qquad\cdots + 64(\kappa^2 + \lambda\partial \kappa^2)^2 c_a^2 \norm{g_\rho}_{L^2}^2 (\mu\lambda n)^{-1}
      + 57 (\kappa^2 + \lambda\partial \kappa^2)^2 c_a^2 \norm{g_\rho}_{L^2}^2 (\mu\lambda n^2)^{-1}
      \\&\qquad\cdots + 
      12 c_a^2 \norm{g_\rho}^2_{L^2}\paren{\frac{10\kappa^2\log(p)}{p} + \frac{10^a\kappa^{2a} c_d \lambda \log(p)^a}{p^a}}
      \\&\qquad\qquad\qquad\cdots \times 
       \paren{1 + \frac{2\kappa^2}{\norm{\Sigma}_{\op}\log(p)} \paren{1 +
          \frac{6}{\log(p)}} \paren{\frac{1}{p} + \frac{1}{5\log(p)}}}.
    \end{split}
  \end{equation}
  where $c_\Y$ is an upper bound on $Y$, $\kappa^2$ is an upper bound on
  $x \to k(x, x)$, $\partial\kappa^2 = \sum_{i=1}^d \kappa_i^2$ with
  $\kappa_i^2$ a bound on $x \to \partial_{1i}\partial_{2i} \partial k_{x_i}$,
  $c_d$ and $a$ the constants appearing in Assumption \ref{ass:decay}, $c_a$ a
  constant such that $\norm{g}_{\cal H} \leq c_a \norm{g}_{L^2}$ and
  $\sigma_\ell^2 \leq c_\Y^2 \kappa^2$ a variance parameter linked with the variance of
  $Y(I + \lambda{\cal L})^{-1}\delta_X$.
\end{theorem}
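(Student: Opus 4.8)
The plan is to bound $\E_{\mathcal{D}_n}\!\big[\|\hat g_p - g_\rho\|_{L^2}^2\big]$ by inserting the chain of intermediate estimators $g_\rho \rightsquigarrow g_\lambda \rightsquigarrow g_{\lambda,\mu} \rightsquigarrow g_{n_\ell} \rightsquigarrow \hat g \rightsquigarrow \hat g_p$ and controlling each consecutive gap separately, then translating the resulting high-probability bounds into expectations. The first two gaps are deterministic \emph{bias} terms. Using Assumption~\ref{ass:source}, the identity $g_\lambda - g_\rho = -\lambda\mathcal{L}(I+\lambda\mathcal{L})^{-1}g_\rho$ together with the fact that $g_\rho$ lives on finitely many eigenvectors of $\mathcal{L}$ gives $\|g_\lambda - g_\rho\|_{L^2}\le \lambda\|\mathcal{L}g_\rho\|_{L^2}$; combining this with Assumption~\ref{ass:approximation} (so the relevant eigenvectors lie in $\ima K^{1/2}$) yields $\|g_{\lambda,\mu}-g_\lambda\|_{L^2}^2 \le \lambda\mu\, c_a^2\|g_\rho\|_{L^2}^2$. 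Squaring and summing, these two produce the $\lambda^2 + \lambda\mu$ contribution.

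The remaining three gaps are stochastic and form the heart of the argument. The key structural simplification I would exploit is that, for the Tikhonov filter $\psi(x)=(x+\lambda)^{-1}$, the spectral filtering admits a resolvent form: writing $C=\Sigma+\lambda L$ and $\theta_\rho = S^\star g_\rho$, one has $g_{\lambda,\mu}=S(C+\lambda\mu)^{-1}\theta_\rho$ and $\hat g_p = S\,P(P\hat C P+\lambda\mu)^{-1}P\,\hat\theta_\rho$. This bypasses tracking the individual, $\mu$-dependent generalized eigenspaces (specificity~(i)) and collapses the analysis onto two operator-norm quantities: $\|(C+\lambda\mu)^{-1/2}(\hat C-C)(C+\lambda\mu)^{-1/2}\|_{\op}$ for the operator concentration, and $\|C^{1/2}(I-P)\|_{\op}$ for the Nystr\"om low-rank step (specificity~(ii)). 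I would split $\hat g_p - g_{\lambda,\mu}$ into a \emph{vector} term isolating $\hat\theta_\rho-\theta_\rho$ and a \emph{basis} term isolating the perturbation $\hat C - C$ and the projector $I-P$, both analysed on the good event $\{\|(C+\lambda\mu)^{-1/2}(\hat C-C)(C+\lambda\mu)^{-1/2}\|_{\op}\le 1/2\}$, where a L\"owner-order comparison shows $(P\hat C P+\lambda\mu)^{-1}$ is comparable to $(PCP+\lambda\mu)^{-1}$.

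For the concentration estimates I would invoke three ingredients. The vector term is handled by the Hilbert-space Bernstein inequality (Yurinskii) applied to $\xi_i=(C+\lambda\mu)^{-1/2}Y_i k_{X_i}$, whose variance is bounded by $n_\ell c_\Y^2\trace\!\big((\Sigma+\lambda L+\lambda\mu)^{-1}\Sigma\big)\le n_\ell\sigma_\ell^2(\mu\lambda)^{-1}$, giving the $\frac{\sigma_\ell^2 n_\ell^{-1}+n_\ell^{-2}}{\lambda\mu}$ contribution. The operator perturbation is handled by the self-adjoint Bernstein inequality (Minsker) applied to the normalised $(C+\lambda\mu)^{-1/2}\big(k_{X_i}\otimes k_{X_i}+\lambda\sum_j \partial_j k_{X_i}\otimes\partial_j k_{X_i}\big)(C+\lambda\mu)^{-1/2}$, whose operator and variance bounds are both $\lesssim (\lambda\mu)^{-1}(\kappa^2+\lambda\partial\kappa^2)$; this yields both the $\frac{n^{-1}}{\lambda\mu}$ term and the exponentially small probability of the bad event. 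The low-rank term uses Proposition~3 of \citet{Rudi2015} to give $\E\,\|\Sigma^{1/2}(I-P)\|_{\op}^2\lesssim \kappa^2\log(p)/p$, which under Assumption~\ref{ass:decay} ($L\preceq c_d\Sigma^a$) upgrades to $\|C^{1/2}(I-P)\|_{\op}^2\lesssim \log(p)/p+\lambda\log(p)^a/p^a$. Integrating every tail via $\E[X]=\int_0^\infty\Pbb(X>t)\diff t$, and handling the bad event by clipping $\hat g_p$ to $[-c_\Y,c_\Y]$ (so the contribution there is $\le 4c_\Y^2$ times the bad-event probability), assembles the full bound. The asymptotic rate then follows by substituting $\lambda_n=\lambda_0 n^{-1/4}$, $\mu_n=\mu_0 n^{-1/4}$, $p_n=p_0 n^s\log n$ with $s=\max(\tfrac12,\tfrac{1}{4a})$, which balances every polynomial term at $n^{-1/2}$ while making the exponential bad-event term negligible since $\lambda\mu n\sim n^{1/2}$.

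I expect the main obstacle to be the basis term: relating the generalized eigendecomposition of the \emph{projected, empirical} pair $(P\hat\Sigma P,\,P\hat L P+\mu P)$ to that of the population pair $(\Sigma,\,L+\mu)$ without incurring eigenvalue-gap dependence. The resolvent reformulation available for the Tikhonov filter is precisely what makes this tractable, reducing the whole comparison to the single operator-norm perturbation above; extending the argument to a general filter $\psi$ would demand genuine perturbation theory of eigenspaces, which is why the statement is restricted to $\psi_\lambda(x)=(x+\lambda)^{-1}$.
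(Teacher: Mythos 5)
Your proposal tracks the paper's own proof almost step for step: the same chain of bias terms with the same bounds ($\lambda\norm{{\cal L}g_\rho}_{L^2}$ and $\lambda\mu c_a^2\norm{g_\rho}_{L^2}^2$), the same resolvent reformulation of the Tikhonov filter that collapses the eigenspace-perturbation problem into operator-norm quantities, the same vector/basis splitting on the good event $\brace{\norm{(C+\lambda\mu)^{-1/2}(\hat C - C)(C+\lambda\mu)^{-1/2}}_{\op}\le 1/2}$, the same three ingredients (Yurinskii, Minsker, Proposition 3 of Rudi et al.), the same clipping argument for the bad event, and the same tail integration. The one substantive deviation is your treatment of the basis perturbation term: you propose to extract $\norm{(C+\lambda\mu)^{-1/2}(\hat C - C)(C+\lambda\mu)^{-1/2}}_{\op}$ and control it with Minsker's self-adjoint Bernstein inequality, claiming this yields the $(\lambda\mu n)^{-1}$ contribution. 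The paper instead applies the \emph{vector} Bernstein inequality directly to $\xi_i = (C+\lambda\mu)^{-1/2}\paren{k_{X_i}\otimes k_{X_i} + \lambda\sum_j \partial_j k_{X_i}\otimes\partial_j k_{X_i}}(C+\lambda\mu)^{-1}\theta_\rho$, and explicitly remarks that the operator-norm route ``will lead to laxer bounds'': Minsker's tail carries an intrinsic-dimension prefactor $\trace\paren{(C+\lambda\mu)^{-1}C}/\norm{(C+\lambda\mu)^{-1}C}_{\op}$, of order $(\lambda\mu)^{-1}(\kappa^2+\lambda\partial\kappa^2)$, which after integrating the tail pollutes the expectation with additional (logarithmic) factors in $(\lambda\mu)^{-1}$. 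Your route therefore still delivers the $n^{-1/2}$ rate of the corollary up to logarithmic terms, but it does not reproduce the clean terms $64(\kappa^2+\lambda\partial\kappa^2)^2 c_a^2\norm{g_\rho}_{L^2}^2(\mu\lambda n)^{-1} + 57(\kappa^2+\lambda\partial\kappa^2)^2 c_a^2\norm{g_\rho}_{L^2}^2(\mu\lambda n^2)^{-1}$ with the constants as stated in \eqref{eq:pre_final}; to land exactly on the claimed inequality you should reserve Minsker only for the probability of the bad event and treat the basis term as a vector concentration, as the paper does.
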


Theorem \ref{thm:consistency} is a corollary of this theorem.


\end{document}